\documentclass[preprint,3p]{elsarticle}

\usepackage{setspace}
\doublespacing

\usepackage{amsmath, amsfonts,amssymb}
\usepackage{multicol}
\usepackage{color}
\usepackage{xspace}
\usepackage{enumerate}

\usepackage{appendix}
\usepackage{pifont}
\usepackage{subcaption}
\usepackage{tikz}
\usepackage{float}
\usepackage{tcolorbox}
\usetikzlibrary{decorations.markings}
\graphicspath{ {images/} }
\usepackage{arydshln}
\usepackage[algo2e,boxed]{algorithm2e}
\usepackage{hyperref}
\hypersetup{pdfauthor=author}
\usepackage{cleveref}
\usepackage{natbib}
\bibliographystyle{elsarticle-num-names}

\newcommand{\R}{\mathbb{R}}

\newcommand{\comm}[1]{}
\renewcommand{\Phi}{\varPhi}
\renewcommand{\phi}{\varphi}
\newcommand{\xmark}{\ding{55}}
\hfuzz=20pt
\vfuzz=20pt
\hbadness=2000
\vbadness=\maxdimen

\newtheorem{example}{Example}
\newtheorem{theorem}{Theorem}
\newtheorem{lemma}[theorem]{Lemma} 
 
\newtheorem{remark}[theorem]{Remark}
\newtheorem{corollary}[theorem]{Corollary}
\newtheorem{definition}[theorem]{Definition}

\newcommand{\BlackBox}{\rule{1.5ex}{1.5ex}}  
\newenvironment{proof}{\par\noindent{\bf Proof\ }}{\hfill\BlackBox}

\long\def\acks#1{\vskip 0.3in\noindent{\large\bf Acknowledgments}\vskip 0.2in
\noindent #1}

\newlength{\x}
\newlength{\y}

\RequirePackage[normalem]{ulem} 
\RequirePackage{color}\definecolor{RED}{rgb}{1,0,0}\definecolor{BLUE}{rgb}{0,0,1} 
\providecommand{\DIFadd}[1]{{\protect\color{blue}\uwave{#1}}} 
\providecommand{\DIFaddbegin}{} 
\providecommand{\DIFaddend}{} 
\RequirePackage{listings} 
\RequirePackage{color} 
\lstdefinelanguage{DIFcode}{ 
  moredelim=[il][\color{red}\sout]{\%DIF\ <\ }, 
  moredelim=[il][\color{blue}\uwave]{\%DIF\ >\ } 
} 
\lstdefinestyle{DIFverbatimstyle}{ 
	language=DIFcode, 
	basicstyle=\ttfamily, 
	columns=fullflexible, 
	keepspaces=true 
} 
\lstnewenvironment{DIFverbatim}{\lstset{style=DIFverbatimstyle}}{} 
\lstnewenvironment{DIFverbatim*}{\lstset{style=DIFverbatimstyle,showspaces=true}}{} 

\begin{document}

\begin{frontmatter}

\title{Possibility Results for Graph Clustering: A Novel Consistency Axiom}

\author[1]{Fabio Strazzeri\corref{cor}}
\ead{fabio.strazzeri@upc.edu}

\author[2,3,4]{Rub\'en J.~S\'anchez-Garc\'ia}
\ead{R.Sanchez-Garcia@soton.ac.uk}

\cortext[cor]{Corresponding author.}

\address[1]{Institut de Rob\`otica i Inform\`atica Industrial, CSIC-UPC, C/ Llorens i Artigas 4-6, Barcelona 08028, Spain}
\address[2]{Mathematical Sciences, University of Southampton,  Southampton SO17 1BJ, UK}
\address[3]{Institute for Life Sciences, University of Southampton, Southampton, SO17 1BJ, UK}
\address[4]{The Alan Turing Institute, London, NW1 2DB, UK}

\begin{abstract}
\footnotesize
Kleinberg introduced three natural clustering properties, or axioms, and showed they cannot be simultaneously satisfied by any clustering algorithm. We present a new clustering property, Monotonic Consistency, which avoids the well-known problematic behaviour of Kleinberg's Consistency axiom, and the impossibility result. 
Namely, we describe a clustering algorithm, Morse Clustering, inspired by Morse Theory in Differential Topology, which satisfies Kleinberg's original axioms with Consistency replaced by Monotonic Consistency. Morse clustering uncovers the underlying flow structure on a set or graph and returns a partition into trees representing basins of attraction of critical vertices.  
We also generalise Kleinberg's axiomatic approach to sparse graphs, showing an impossibility result for Consistency, and a possibility result for Monotonic Consistency and Morse clustering. 
\end{abstract}

\begin{keyword}
 data clustering \sep graph clustering \sep axiomatic clustering \sep Morse theory \sep Morse flow
\end{keyword}

\end{frontmatter}
\section*{Introduction}\label{sec:Introduction}

Given a set of objects and a pairwise similarity function, a \emph{clustering algorithm} is a formal procedure that groups together objects which are similar and separate the ones which are not \cite{Jain:1988}, mimicking the human ability to categorize and group together objects by similarity. Methods and approaches to clustering algorithms have been growing for decades \DIFaddbegin \textit{
\DIFadd{ \mbox{
\cite{Jain:1988, Jain:1999, aggarwal:2013, ZHANG2022108428, TAREKEGN2021107965}}\hspace{0pt}
}}\DIFaddend, with clustering becoming a standard data analytic technique \cite{jain:2010}. 
This has been complemented by an interest in underlying principles and general desirable properties (sometimes called \emph{axioms}) of clustering algorithms  \cite{fisher:1971}, especially as clustering is an infamously ill-defined problem in the abstract \cite{jain:2010,luxburg:2012}. 

A more recent interest in the axiomatic approach was sparked by Kleinberg's impossibility theorem \cite{kleinberg:2003}. In the spirit of Arrow's impossibility theorem in social science \cite{arrow:1950}, Kleinberg gives three natural properties a clustering algorithm should have, namely Scale Invariance, Richness, and Consistency, then proves that they cannot be simultaneously satisfied. Scale Invariance guarantees that the output of the clustering algorithm remains the same if we multiply (scale) all distances by a factor $\alpha>0$, and similarly for Consistency, when we decrease intra-cluster distances and increase inter-cluster distances. Finally, Richness is the property that guarantees that an arbitrary partition of any set $X$ can be achieved by the algorithm, for a suitably defined distance function on $X$ (see Section \ref{sec:Critique} for formal statements). 

Several authors have since criticised Kleinberg's approach, particularly the Consistency axiom \cite{bendavid:2009,ackerman:2010,correa:2013}, and proposed alternative frameworks that circumvent the impossibility result. For instance, by restricting clustering functions to $k$-partitions, for a fixed $k$, the axioms can coexist \cite{zadeh:2012}; if we allow arbitrary parameters, Kleinberg's axioms are compatible when applied to a parametric family of a clustering algorithm, as discussed in \cite{correa:2013}; and, by replacing partitions by dendrograms as the output of a clustering function, the authors in \cite{carlsson:2010} show a possibility and uniqueness result satisfied by single-linkage hierarchical clustering.
In all these cases, Kleinberg's impossibility is avoided by either restricting or extending the definition of clustering function. Other authors shift the axiomatic focus to clustering quality measures
\DIFaddbegin \textit{
\DIFadd{ \mbox{
\cite{bendavid:2009,laarhoven:2014,yu:2014,NORONHA2022108612}}\hspace{0pt}
}}\DIFaddend, or cost functions \cite{karayiannis:1999,puzicha:2000}. 


In this article, we remain close to Kleinberg's original setting and directly address the problematic behaviour of the Consistency axiom instead. Namely, we replace Kleinberg's Consistency by a weaker condition that we call \emph{Monotonic Consistency}, where the rate of expansion, respectively contraction, of inter-, respectively intra-, cluster distances is not arbitrary, but globally controlled by an expansive function $\eta$ (Section \ref{sec:Expansive}). In essence, $\eta$ controls the inter-cluster expansion, while its inverse $\eta^{-1}$ controls the intra-cluster contraction. As $\eta$ is a function on distances, not pairs of points, the control is global, with points at similar distances experiencing the same expansion or contraction. Without such global condition, we would recover Outer or Inner Consistency, each incompatible with Scale Invariance and Richness \cite{ackerman:2010}.

Monotonic Consistency avoids the problematic behaviour of Consistency (see Section \ref{sec:AvoidanceProblematic}), and, unlike Consistency, it is compatible with the other two axioms (Corollary \ref{cor:Possibility}). As far as we know, this is the only alternative in the literature to the Consistency axiom that is compatible with Richness and Scale Invariance without modifying the definition of clustering function.

Our possibility result relies on a clustering method that we call \emph{Morse Clustering}, inspired by Morse Theory in Differential Topology. Although naturally a vertex-weighted clustering algorithm (in the sense of \cite{ackerman:2016}), an unweighted version (which we call \emph{agnostic} Morse Clustering) satisfies Kleinberg's original axioms, with Consistency replaced by Monotonic Consistency. 



We present three instances of Morse Clustering, corresponding to three choices of vertex and edge preorders, then show that each of them satisfy a pair of Kleinberg's original axioms, and that all of them satisfy Monotonic Consistency (Section \ref{sec:ThreeInstances}). In particular, one of them satisfy Monotonic Consistency, Scale Invariance and Richness, which are therefore mutually compatible clustering axioms (Corollary \ref{cor:Possibility}). 

Our last contribution is a generalisation of Kleinberg's impossibility result to graph clustering (Section \ref{sec:Impossibility}). A distance function $d$ on a set $X$ can be represented by a complete graph $G$ with vertex set $X$ and edges weighted by $d(u,v)>0$. In fact, many clustering algorithms (including Morse Clustering) work on this graph representation. A classical example is Single Linkage, which, in fact, only depends on a minimum spanning tree of $G$ \cite{gower:1969}. A natural generalisation of Kleinberg's setting is, therefore, the case when $G$ is an arbitrary, rather than complete, graph. That is, we fix a graph $G$ and consider distances supported on the edge set (this is the natural setting of graph clustering \cite{schaeffer:2007}). In Section \ref{sec:Impossibility}, we justify this approach, consider Kleinberg's axioms in this graph clustering setting, show that the impossibility result still holds, even when Richness is relaxed naturally to Connected-Richness (partitions where every cluster is a connected subgraph), and give a possibility result for Monotonic Consistency and the same instance of Morse Clustering. 
Our result contains the original impossibility theorem \cite{kleinberg:2003} as a particular case, and, we argue, provides the appropriate impossibility result in the context of graph clustering. 

\textit{Related work.} Kleinberg's impossibility result applies to generic clustering algorithms encoded as arbitrary functions
\begin{equation*}
    F \colon \{ d \text{ distance on } X \} \longrightarrow \{ \mathcal{P} \text{ partition on } X \},
\end{equation*}
where $X$ is a non-empty, finite set (see Section \ref{sec:Critique}). All extensions of Kleinberg's work either restrict or extend this definition of clustering function, and none addresses the problematic Consistency axiom without either modifying the definition of clustering algorithm, or the other two axioms. In \cite{zadeh:2012}, the authors restrict the codomain of $F$ to $k$-partitions, for fixed $k$,
\begin{equation*}
    F \colon \{ d \text{ distance on } X \} \longrightarrow \{ \mathcal{P} \text{ $k$-partition on } X \}.
\end{equation*}
This solves the problematic behaviour of Consistency at the cost of fixing the number of clusters \textit{a-priori}, effectively substituting Kleinberg's Richness axiom by $k$-Richness. This assumes that each clustering algorithm has a target number of clusters (which may not always be the case), and separates clustering algorithms by target cluster number. 
In \cite{correa:2013}, the author proves several possibility results for parametric clustering, that is, the domain of the clustering functions is extended to include additional input parameters
\begin{equation*}
    F \colon \{ d \text{ distance on } X \} \times \{ parameters \} \longrightarrow \{ \mathcal{P} \text{ partition on } X \}.
\end{equation*}
The approach in \cite{carlsson:2010} goes somewhat in the opposite direction: it only considers clustering algorithms that depend on the distance $d$ alone in a way that is called \emph{functorial}. An immediate drawback is that standard clustering algorithms such as $k$-means or spectral clustering are not functorial and thus excluded from their analysis. Additionally, the authors change the codomain of a clustering algorithm function from partitions to hierarchical arrangements of clusters (dendrograms),
\begin{equation*}
    F \colon \{ d \text{ distance on } X \} \longrightarrow \{ \mathcal{P} \text{ dendrogram on } X \}, \quad \text{and $F$ functorial}.
\end{equation*}
Other line of related work \cite{bendavid:2009, laarhoven:2014, puzicha:2000} focuses on clustering quality measures instead, that is, functions that assign a score to a partition of a data set. In this approach, the axioms refer to the clustering quality functions rather than to the clustering algorithms themselves.

In contrast to the above, our approach retains Kleinberg's simplicity, by keeping his original definition of clustering algorithm, and changes the Consistency axiom only, in a way directly motivated by its problematic behaviour (see Figure \ref{fig:1} and Section \ref{sec:AvoidanceProblematic}), into a compatible axiom. Indeed, several authors have criticised Kleinberg's original Consistency axiom along these lines \cite{bendavid:2009, ackerman:2010, correa:2013}. The authors in \cite{ackerman:2010}, for instance, argue that Consistency `may sound desirable and natural', however it `may be viewed as the main weakness of Kleinberg’s impossibility result'. 

\textit{Overview of results.} We define a new clustering property (or axiom) called \emph{Monotonic Consistency} (Definition \ref{def:MonotonicConsistent}), and describe it in terms of expansive and contractive maps (Section \ref{sec:Expansive}) and monotonic transformations (Sections \ref{sec:MonotonicTransformations} and \ref{sec:Characterisation}). We explicitly show how the problematic behaviour of Kleinberg's Consistency axiom is avoided by Monotonic Consistency (Section \ref{sec:AvoidanceProblematic}). We then show that Monotonic Consistency is compatible with the other two Kleinberg's original axioms. Namely, we describe a family of clustering functions (Sections \ref{sec:MorseFlowAlgorithm} and \ref{sec:MorseClusteringAlgorithm}), which we call \emph{Morse clustering}, that satisfy, in three different instances, each pair or Kleinberg's axioms, as well as the three axioms when Consistency is replaced by Monotonic Consistency (Section \ref{sec:Possibility}, Corollary \ref{cor:Possibility}). In Section \ref{sec:Impossibility}, we generalise our results to graph clustering. First, we generalise Kleinberg's original impossibility theorem to graph clustering (Section \ref{sec:ImpossibilityGraph}, Theorem \ref{thm:ImpossibilityGraphs}), then we prove a possibility theorem for Monotonic Consistency and an instance of Morse Clustering (Section \ref{sec:ImpossibilityGraph}, Theorem \ref{thm:PossibilityGraph}).

\section{Monotonic Consistency}\label{sec:MonotonicConsistency}
In this section, we introduce a weakening of the Consistency axiom that we call \emph{Monotonic Consistency}. We start with a review of Kleinberg's original axioms and the problematic behaviour of Consistency. 

\subsection{A critique of Kleinberg's axioms}\label{sec:Critique}
Given a set $X$ of $n$ objects that we want to compare, a \emph{dissimilarity} on $X$ is a pairwise function 
$$
	d: X\times X\to \R
$$
such that $d(i,j) = d(j,i)\ge 0$, and $d(i,j) = 0$ if and only if $i = j$, for all $i,j \in X$. We will adhere to the convention in the literature and refer to $d$ from now on as a \emph{distance}, although it may not satisfy the triangle inequality. Following \cite{kleinberg:2003}, we define a \emph{clustering algorithm} on $X$ as a map
\begin{equation}\label{eqn:clus-algo-Kleinberg}
	F: \{d\mbox{ distance on }X\} \to \{\mathcal P \mbox{ partition of } X\}.
\end{equation}
A \emph{partition} of $X$ is a disjoint union $X=X_1\cup\ldots\cup X_k$, and we call each $X_i$ a \emph{cluster} of the partition. If $\mathcal{P} =\{ X_1, \ldots, X_n \}$ is a partition of $X$ and $x, y \in X$, we use the notation $x \sim_\mathcal{P} y$ if $x$ and $y$ belong to the same cluster of $\mathcal{P}$, and $x \not\sim_\mathcal{P} y$ if not.  

Kleinberg \cite{kleinberg:2003} introduced three natural properties for a clustering algorithm, then proved that they cannot be simultaneously satisfied by any clustering algorithm $F$. These properties are:
\begin{itemize}
 \item \textbf{Scale Invariance}: Given a distance $d$ on $X$ and $\alpha>0$, we have $F(d) = F(\alpha\cdot d)$;
 \item \textbf{Richness}: Given a partition ${\mathcal P}$ of $X$, there exists a distance $d$ on $X$ such that $F(d) = {\mathcal P}$;
  \item \textbf{Consistency}: Given two distances $d$ and $d'$ on $X$ with $\mathcal{P} = F(d)$, if $d'$ is a \emph{${\mathcal P}$-transformation of $d$}, that is,
 \begin{eqnarray}\label{eq:1}
 \begin{cases}
 d'(v,w)\leq d(v,w) & \text{if } v \sim_\mathcal{P} w, \text{ and} \\
 d'(v,w)\geq d(v,w) & \text{if } v \not\sim_\mathcal{P} w,
 \end{cases}
 \end{eqnarray}
then $F(d)=F(d')$. 
\end{itemize}

Kleinberg also showed that each pair of these properties can be simultaneously satisfied, in fact by three different versions of Single Linkage. 

Our first contribution is a weakening of the Consistency property which is both very natural, and can coexist with Richness and Scale-Invariance. 
To motivate our definition, we first discuss the problematic behaviour of Kleinberg's Consistency in the presence of Richness and Scale Invariance (see also \cite{correa:2013,ackerman:2010,zadeh:2012}). Given $F$ a consistent and scale-invariant clustering algorithm, and two different partitions $F(d_1) \neq F(d_2)$, it can be shown \cite[Theorem 3.1]{kleinberg:2003} that each partition is not the refinement of the other (a partition $\mathcal P$ is a \emph{refinement} of $\mathcal Q$ if each cluster of $\mathcal P$ is contained in a cluster of $\mathcal Q$). In particular, given a distance $d$ and associated partition ${\mathcal P} = F(d)$, we can never obtain a partition identical to ${\mathcal P}$ but with one, or more, of its clusters further subdivided (Fig.~\ref{fig:1}). On the other hand, consider any distance $d'$ satisfying
$$
\begin{cases}
 d'(v,w) < d(v,w) & \text{if } v,w\in C_1,\\
 d'(v,w) < d(v,w) & \text{if } v,w\in C_2,\\
 d'(v,w) = d(v,w) & \mbox{otherwise,}
\end{cases}
$$
where $C$ is a cluster of $\mathcal P$ and $C=C_1\cup C_2$ is an arbitrary partition of $C$. Note that any such $d'$ is a $\mathcal{P}$-transformation of $d$. This means that we can arbitrarily emphasize the subcluster structure, to the point that it could be more natural to consider $C_1$ and $C_2$ as separate clusters (Fig.~\ref{fig:1}), while Consistency implies $F(d)=F(d')$ regardless.

\begin{figure}
 \centering
 \includegraphics[width = 0.4\textwidth]{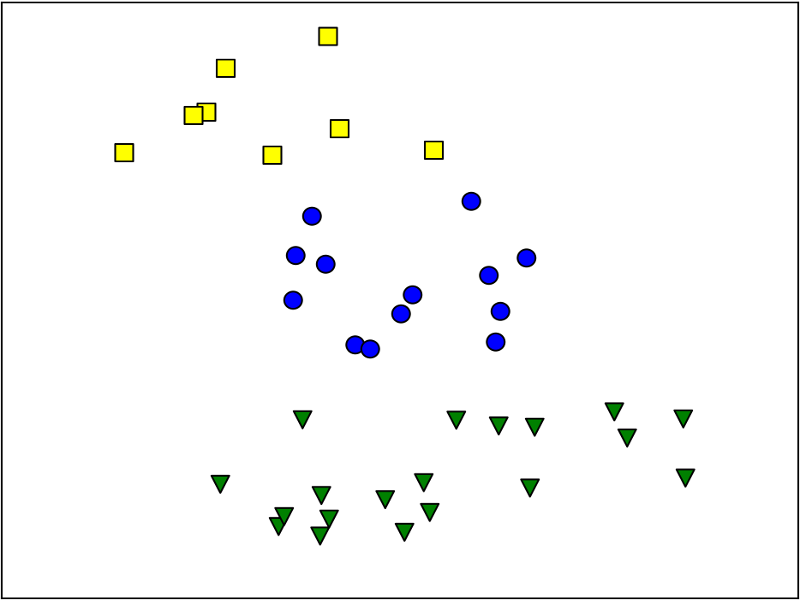} \hspace{2cm}
 \includegraphics[width = 0.4\textwidth]{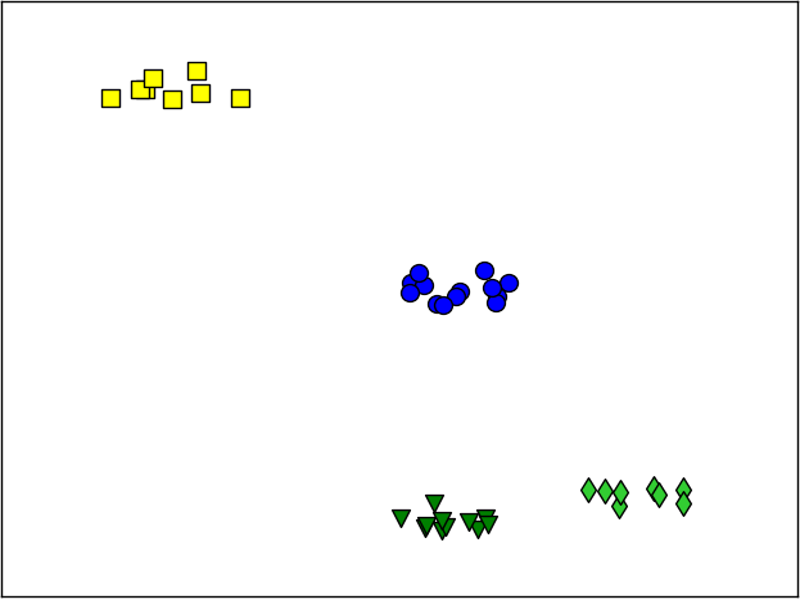}
 \caption{Problematic behaviour of the Consistency axiom. We can arbitrarily emphasize any subcluster structure without affecting the output of the clustering algorithm.  (Left) A point cloud representing the pairwise distances in a set and the output of some clustering algorithm $F$ intro three clusters. (Right) New internal structure emphasizing a subdivision of the third cluster (top to bottom). If $F$ satisfies the Consistency axiom, its output will be the same in both cases. This behaviour is explicitly avoided by Monotonic Consistency (Section \ref{sec:AvoidanceProblematic}).} \label{fig:1}
\end{figure}

We propose a more restrictive definition of Consistency which avoids this type of behaviour. The idea is to globally fix the rate at which we can increase (decrease) the intra-cluster (inter-cluster) distances. We do this restricting to $\mathcal{P}$-transformations obtained through a particular class of functions, which we describe next. 

\subsection{Expansive and contractive maps}\label{sec:Expansive}
\begin{definition}\label{def:expansive}
Let $X$ and $Y$ be subsets of $\R$. 
We call a continuous map $\eta \colon X \rightarrow Y$ \emph{expansive} if 
\begin{equation}\label{eqn:expansive}
	|\eta(x) - \eta(y)| \ge |x-y| \quad \text{for all } x,y \in X.
\end{equation}
By reversing the inequality, we define a \emph{contractive} map. 
\end{definition}
Expansive maps can be defined more generally for maps between metric spaces \cite{gottschalk:1955} as maps that do not decrease distances between pairs of points, and we have added the continuity hypothesis for convenience (see Remark \ref{rmk:noabs}). We will use expansive maps to expand and contract distances with respect to a partition, namely, $d'(u,v)=\eta(d(u,v))$ if $u$ and $v$ belong to different clusters, and $d'(u,v)=\eta^{-1}(d(u,v))$ if they belong to the same cluster. In particular, we take $X=Y=[0,\infty)$ in the definition above, and assume $\eta(0)=0$. The following lemma summarises some useful properties. 

\begin{lemma}\label{lem:expansive-properties}
Let $\eta \colon [0,\infty) \to [0,\infty)$ be a continuous expansive map with $\eta(0)=0$. Then: 
\begin{itemize}
\item[(i)] $\eta$ is strictly increasing, a bijection, and satisfies $\eta(x) \ge x$ for all $x$; 
\item[(ii)] $\eta^{-1}$ is strictly increasing, a contractive map, and satisfies $\eta^{-1}(x) \le x$ for all $x$.
\end{itemize}
\end{lemma}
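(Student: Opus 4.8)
The plan is to extract everything from the expansive inequality \eqref{eqn:expansive} together with the normalisation $\eta(0)=0$, using continuity only at the one point where it is genuinely needed. First I would establish the two cheapest facts. Setting $y=0$ in \eqref{eqn:expansive} and using $\eta(0)=0$ gives $|\eta(x)|\ge|x|$, and since both $\eta(x)$ and $x$ lie in $[0,\infty)$ this is exactly $\eta(x)\ge x$ for all $x$. For injectivity, note that whenever $x\neq y$ the inequality gives $|\eta(x)-\eta(y)|\ge|x-y|>0$, so $\eta(x)\neq\eta(y)$; hence $\eta$ is injective.

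Next I would upgrade injectivity to strict monotonicity. Here I invoke the standard real-analysis fact that a continuous injective map on an interval is strictly monotone (this is where continuity is essential, and it is the only step with any content). To fix the direction, I rule out the strictly decreasing case: if $\eta$ were decreasing then $\eta(x)<\eta(0)=0$ for every $x>0$, contradicting $\eta(x)\ge 0$. Therefore $\eta$ is strictly increasing. Surjectivity onto $[0,\infty)$ then follows from the intermediate value theorem: $\eta(0)=0$, and since $\eta(x)\ge x$ we have $\eta(x)\to\infty$, so by continuity $\eta$ attains every value in $[0,\infty)$. Combined with injectivity this makes $\eta$ a bijection, completing part (i).

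For part (ii) I would simply transport the properties of $\eta$ through the inverse. Since $\eta$ is a continuous strictly increasing bijection of $[0,\infty)$, its inverse $\eta^{-1}$ is automatically a continuous strictly increasing bijection as well. Contractivity is the expansive inequality read backwards: writing $a=\eta^{-1}(x)$ and $b=\eta^{-1}(y)$, inequality \eqref{eqn:expansive} for $a,b$ becomes $|x-y|=|\eta(a)-\eta(b)|\ge|a-b|=|\eta^{-1}(x)-\eta^{-1}(y)|$, which is precisely the contractive condition. Finally, applying the (increasing) map $\eta^{-1}$ to the inequality $\eta(x)\ge x$ from part (i) yields $x=\eta^{-1}(\eta(x))\ge\eta^{-1}(x)$, i.e.\ $\eta^{-1}(x)\le x$ for all $x$.

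The only nonroutine ingredient is the continuous-injective-implies-strictly-monotone step, so that is where I would expect to spend the most care; every other assertion is a one-line consequence of \eqref{eqn:expansive} or of the monotonicity already established. If one prefers to avoid citing that lemma, an alternative is to prove monotonicity directly: for $x<y$, the expansive inequality forces $|\eta(y)-\eta(x)|\ge y-x>0$, and a short intermediate-value argument shows the sign of $\eta(y)-\eta(x)$ cannot switch as the pair varies, pinning down a single direction, which $\eta(0)=0$ then fixes as increasing.
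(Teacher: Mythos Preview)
Your proof is correct and uses the same ingredients as the paper (the expansive inequality with $y=0$ for growth, continuity via the Intermediate Value Theorem for monotonicity and surjectivity, and the substitution $a=\eta^{-1}(x)$, $b=\eta^{-1}(y)$ for part (ii)). The only organisational difference is in establishing strict monotonicity: you first read off injectivity directly from \eqref{eqn:expansive} and then cite the standard lemma that a continuous injection on an interval is strictly monotone, whereas the paper argues by contradiction in one step---if $x>y$ with $\eta(x)\le\eta(y)$, then $0=\eta(0)\le\eta(x)\le\eta(y)$, so the IVT produces $z\in[0,y]$ with $\eta(z)=\eta(x)$, contradicting expansiveness since $z\neq x$. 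Your closing ``alternative'' sketch is essentially this argument. Citing the black-box lemma is slightly cleaner; the paper's inline IVT argument is self-contained. Neither buys anything substantive over the other.
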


\begin{proof}
(i) By contradiction, if $\eta$ is not strictly increasing, we can find $x > y$ with $\eta(x) \le \eta(y)$, so that $\eta(0) = 0 \le \eta(x) \le \eta(y)$ and, by the Intermediate Value Theorem, we can find $z \in [0,y]$ such that $\eta(z)=\eta(x)$, a contradiction. The  growth condition is immediate from the expansion property \eqref{eqn:expansive} for $y=0$,
\[
    \lvert\eta (x)\lvert = \eta(x)\ge\lvert x\lvert\ = x, 
\]
for all $x \in [0,\infty)$. Since $\eta$ is strictly increasing, it is injective. It is also surjective: The growth condition above gives $\eta(x) \to \infty$ as $x \to \infty$ and, together with $\eta(0)=0$ and continuity, we have that $\eta$ takes all values in $[0,\infty)$.\medskip

(ii) Since $\eta$ is bijective, it has an inverse $\eta^{-1}$. The inverse of a (strictly) increasing function is also (strictly) increasing. To show this, and the two remaining properties, one can simply use the corresponding properties of $\eta$ in (i) on $x'=\eta(x)$ and $y'=\eta(y)$.
\end{proof}

\begin{example}\label{ex:eta}
The following are examples of expansive functions $\eta \colon [0,\infty) \to [0,\infty)$ with $\eta(0)=0$. 
\begin{enumerate}
\item (Linear) $\eta(x) = \alpha x$ for $\alpha \ge 1$ (Fig.~\ref{fig:examples_A}). 
\item (Piecewise linear) $\eta=\eta(d_i) + \alpha_i (x-d_i)$ for $x\in[d_i,d_{i+1}]$, where $0 = d_1 < d_2 < \ldots < d_n$, $\eta(0)=0$, and $\alpha_i \ge 1$, for all $i$ (Fig.~\ref{fig:examples_B}) .
\item (Differentiable) A differentiable function $\eta \colon [0,\infty) \to [0,\infty)$ with $\eta(0)=0$ is expansive if and only if $\eta'(x) \ge 1$ for all $x$ (Fig.~\ref{fig:examples_C}).
\item (Graphical criterion) A continuous function $\eta \colon [0,\infty) \to [0,\infty)$ is expansive if and only if the function $\eta(x) - x$ is increasing (this follows from Remark \ref{rmk:noabs}).
\end{enumerate}

\begin{figure}[H]
 \centering
 \setlength{\x}{0.48\linewidth}
 \setlength{\y}{0.7\x}
 \begin{subfigure}{\x}
  \centering
  \includegraphics[width = \y]{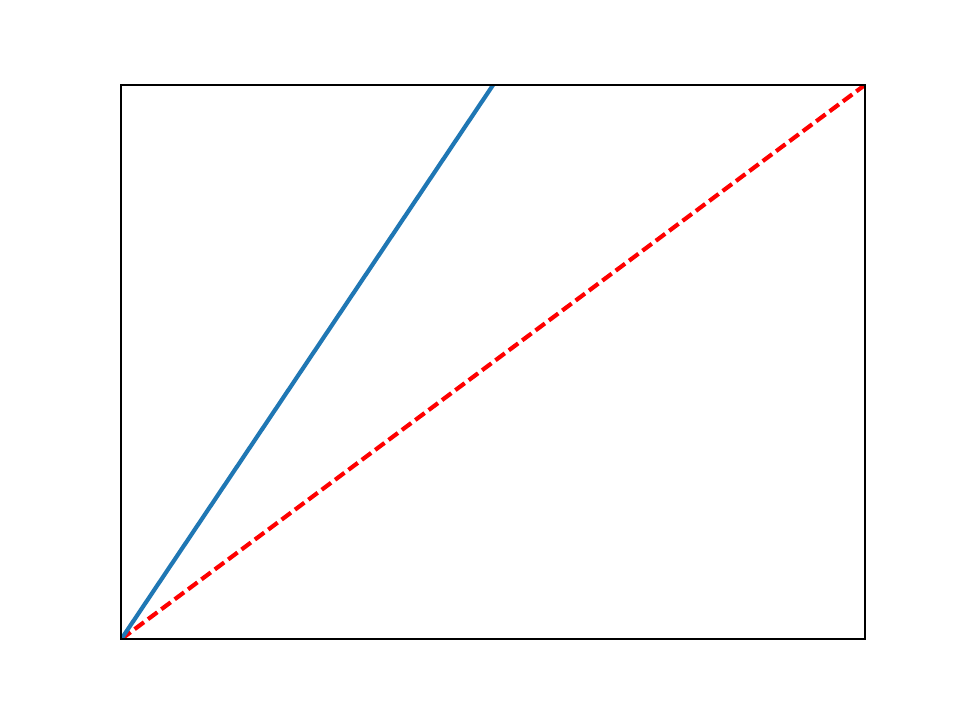}
  \caption{Linear}\label{fig:examples_A}
 \end{subfigure}
 \begin{subfigure}[H]{\x}
  \centering
  \includegraphics[width = \y]{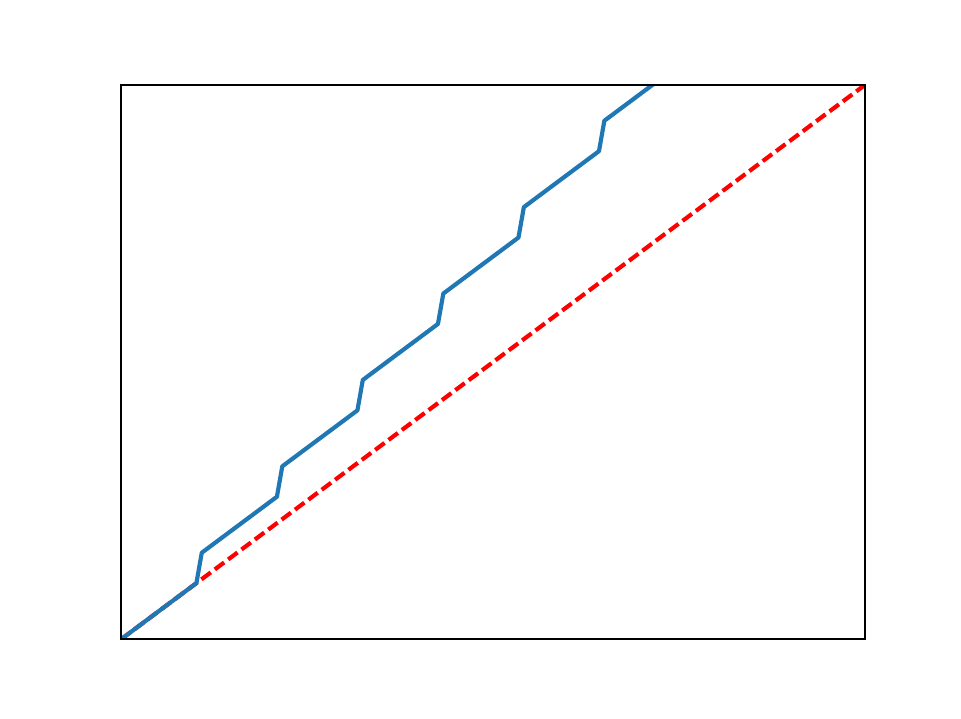}
  \caption{Piecewise linear}\label{fig:examples_B}
 \end{subfigure}
 \end{figure}

\begin{figure}[H]
 \ContinuedFloat
 \setlength{\x}{0.48\linewidth}
 \setlength{\y}{0.7\x}
 \begin{subfigure}[H]{\x}
  \centering
  \includegraphics[width = \y]{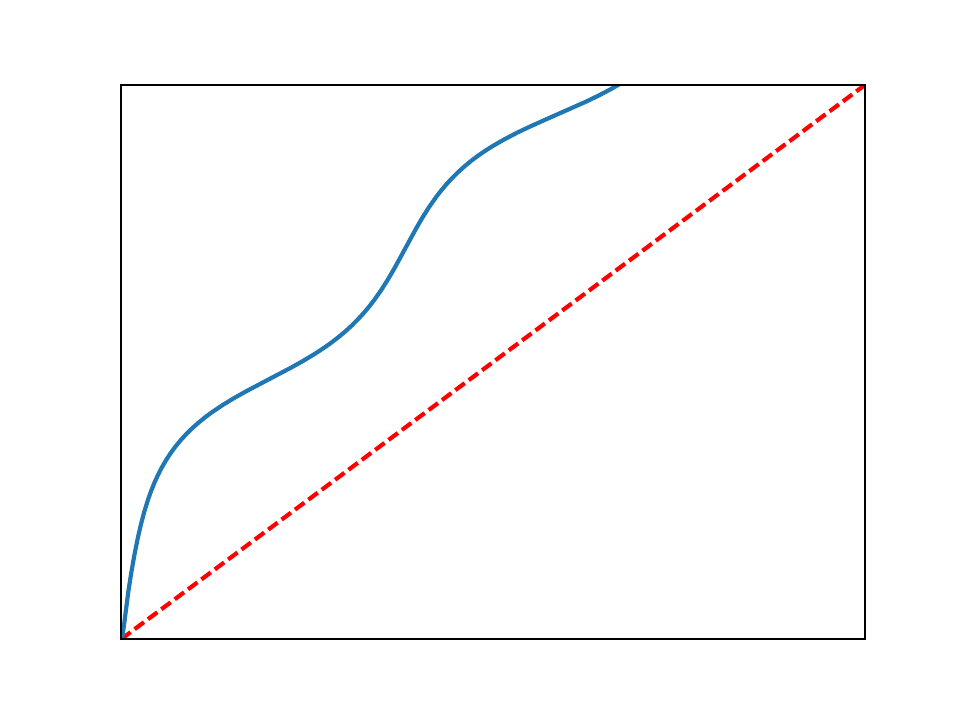}
  \caption{Differentiable}\label{fig:examples_C}
 \end{subfigure}
 \begin{subfigure}[H]{\x}
  \centering
  \includegraphics[width = \y]{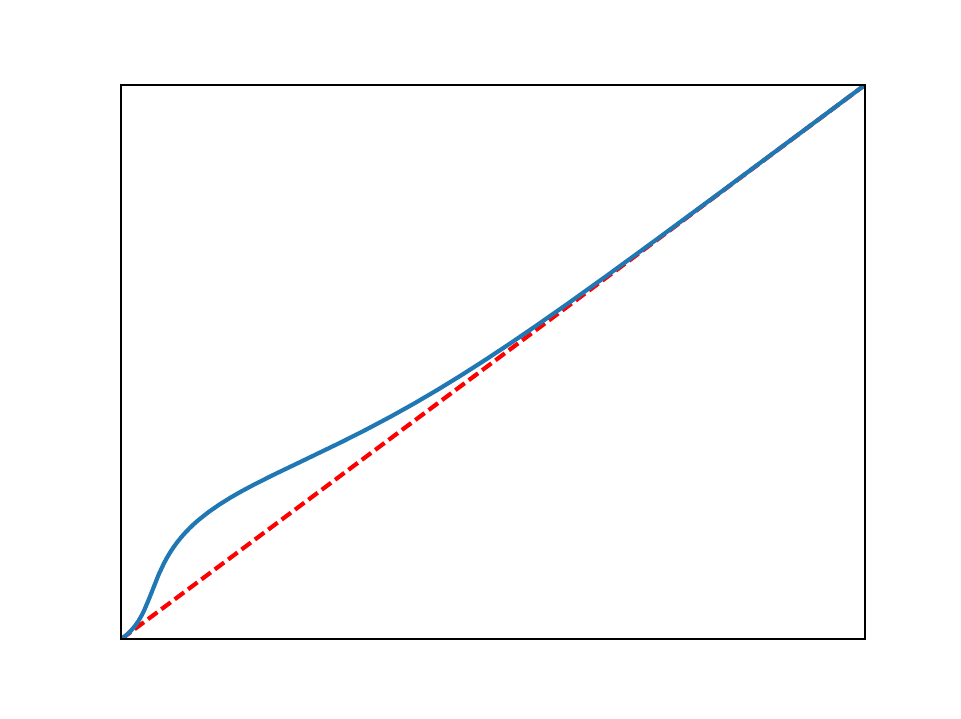}
  \caption{Differentiable}\label{fig:counterexample}
 \end{subfigure}
 \caption{Examples of expansive functions and one counterexample (solid blue lines). At each point, the function growths at least as fast as the line $y=x$ (dashed red line).}\label{fig:examples}
\end{figure}
\end{example}

\begin{remark} \label{rmk:noabs}
If $\eta$ is increasing, Eq.~\eqref{eqn:expansive} is equivalent to
\begin{equation}\label{eqn:expansivenoabs}
	\eta(x) - \eta(y) \ge x - y \quad \text{for all } x \ge y.
\end{equation}
In fact, this equation alone implies $\eta$ increasing and thus Eq.~\eqref{eqn:expansive}. We could drop the continuity hypothesis in Definition \ref{def:expansive}, and define an expansive function simply by Eq.~\eqref{eqn:expansivenoabs}. In practice, however, a monotonic transformation (Definition \ref{def:monotonic-transformation}) can always be realised by a continuous, piecewise linear function $\eta$ (Lemma \ref{lem:monotonic_without_eta}). 
\end{remark}
\subsection{Monotonic transformations}\label{sec:MonotonicTransformations}
In Kleinberg's original Consistency axiom, arbitrary transformations that increase inter-cluster distances and decrease intra-cluster distances are allowed. To avoid an impossibility result, we restrict to transformations obtained via an expansive function $\eta$, as follows. Recall that we write $x \sim_\mathcal{P} y$ if $x$ and $y$ are in the same cluster with respect to a partition $\mathcal{P}$, and $x \not\sim_\mathcal{P} y$ if not.

\begin{definition}\label{def:monotonic-transformation}
Let $d$ be a distance on a set $X$, and $\mathcal P$ a partition of $X$. A \emph{${\mathcal P}$-monotonic transformation of $d$} is any distance $d'$ on $X$ such that 
\begin{eqnarray}\label{eqn:MonotonicTransformation}
\begin{cases}
 d(x,y) = \eta(d'(x,y)) 	  &\text{ if $x \sim_\mathcal{P} y$, and}\\
 d(x,y) = \eta^{-1}(d'(x,y))  &\text{ if $x \not\sim_\mathcal{P} y$,}
\end{cases}
\end{eqnarray}
for some expansive map $\eta \colon [0,\infty) \to [0,\infty)$, and all $x, y \in X$. (Note that such $\eta$ necessarily satisfies $\eta(0)=0$.)
\end{definition}

\begin{definition}\label{def:MonotonicConsistent}
A clustering algorithm $F$ is \emph{Monotonic Consistent} if $F(d') = F(d)$ whenever $d'$ is a ${F(d)}$-monotonic transformation of $d$.
\end{definition}

Note that, given $d$ and $\mathcal{P}$, $d'$ is uniquely determined by $\eta$. Since $\eta(x) \ge x$ and $\eta^{-1}(x) \le x$ for all $x$ (Lemma \ref{lem:expansive-properties}), the distance function $d'$ increases inter-cluster distances and decreases intra-cluster distances (hence Consistency implies Monotonic Consistency). However, our allowed transformations do so globally ($d'$ depends on distances between points, not the actual points) and monotonically (the rates at which we expand or contract distances are the inverse of one another). Finally, note that $\mathcal{P}$-monotonic transformations can be composed and this corresponds to the composition $\eta_2 \circ \eta_1$ of expansive maps. 

\begin{example}
The following are examples of $\mathcal{P}$-monotonic transformations. 
\begin{enumerate}
\item (Linear) Let $\eta(x)=\alpha x$, $\alpha \ge 1$. The corresponding $\mathcal{P}$-monotonic transformation multiplies inter-cluster distances by $\alpha$, and intra-cluster distances by $1/\alpha$. This is similar to Inner and Outer Consistency, introduced in \cite{ackerman:2010}, except that the expansion and contraction rates are not arbitrary, but the reciprocal of one another.
\item (Linear step function) This is the function
\begin{equation}\label{eq:step_linear}
	\eta(x) = 
	\begin{cases}
	 x & 0 \le x \le d_1,\\
	 \alpha (x-d_1)+d_1 & d_1 \le x \le d_2, \\
	 (x-d_2) + \alpha d_2 & d_2 \le x,
	\end{cases}
\end{equation}
for some $0 \le d_1 < d_2$ and $\alpha>1$. The associated $\mathcal{P}$-monotonic transformation preserves (inter- or intra-cluster) distances below $d_1$, scales distances between $d_1$ and $d_2$ as in \Cref{fig:examples_B}, and (necessarily) translates distances above $d_2$, adding $\eta(d_2)=\alpha d_2$ to inter-cluster distances, and subtracting $\eta(d_2)$ to intra-cluster distances. Note that $d_2$ can be equal to $+\infty$ and so the third line in \Cref{eq:step_linear} becomes obsolete.
\item (Piecewise linear) This is generalises both (1) and (2): For the piecewise linear $\eta$ as in \Cref{fig:examples_C}, we have a rate of expansion/contraction $\alpha_i$, and a translation by $\eta(d_i)$, for distances in the interval $[d_i,d_{i+1}]$ where $\eta$ is linear. It can be shown that each piecewise linear function is a composition of linear step functions. 
\end{enumerate}
\end{example}
Below, we show that every $\mathcal{P}$-monotonic transformation is induced by a piecewise linear $\eta$, or, equivalently, by a finite composition of linear step functions. 

\subsection{Characterisation of monotonic transformations} \label{sec:Characterisation}
Although $d'$ is uniquely determined by $\eta$, this $\eta$ is not unique, that is, different choices of $\eta$ may result in the same $\mathcal{P}$-monotonic transformation $d'$. Indeed, any expansive $\eta$ interpolating the points $(d'(x,y),d(x,y))$ with $x \sim_\mathcal{P} y$ and $(d(x,y),d'(x,y))$ with $x \not\sim_\mathcal{P} y$ necessarily gives the same $\mathcal{P}$-monotonic transformation $d'$, by Eq.~\eqref{eqn:MonotonicTransformation}.  In particular, we can always assume $\eta$ to be piecewise linear in Definition \ref{def:monotonic-transformation}, and, in fact, we can determine whether such function exists directly from $d'$, as the next result shows. 

\begin{figure}[H]
 \setlength{\x}{0.48\linewidth}
 \setlength{\y}{0.8\x}
 \begin{subfigure}[c]{\x}
  \centering
  \includegraphics[width = \y]{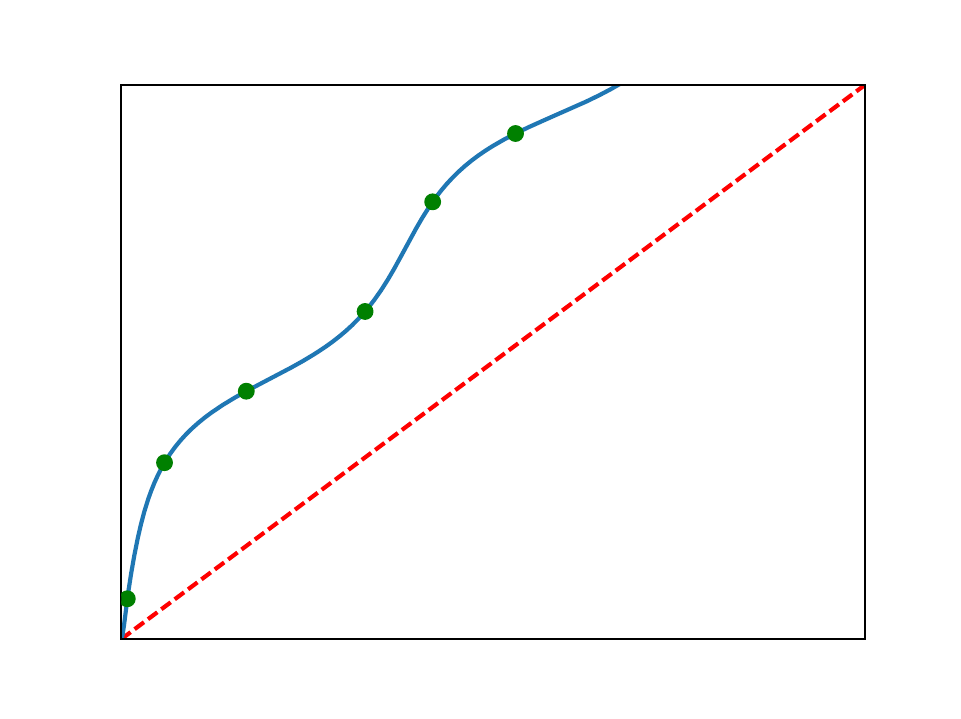}
 \end{subfigure}
 \begin{subfigure}[c]{\x}
  \centering
  \includegraphics[width = \y]{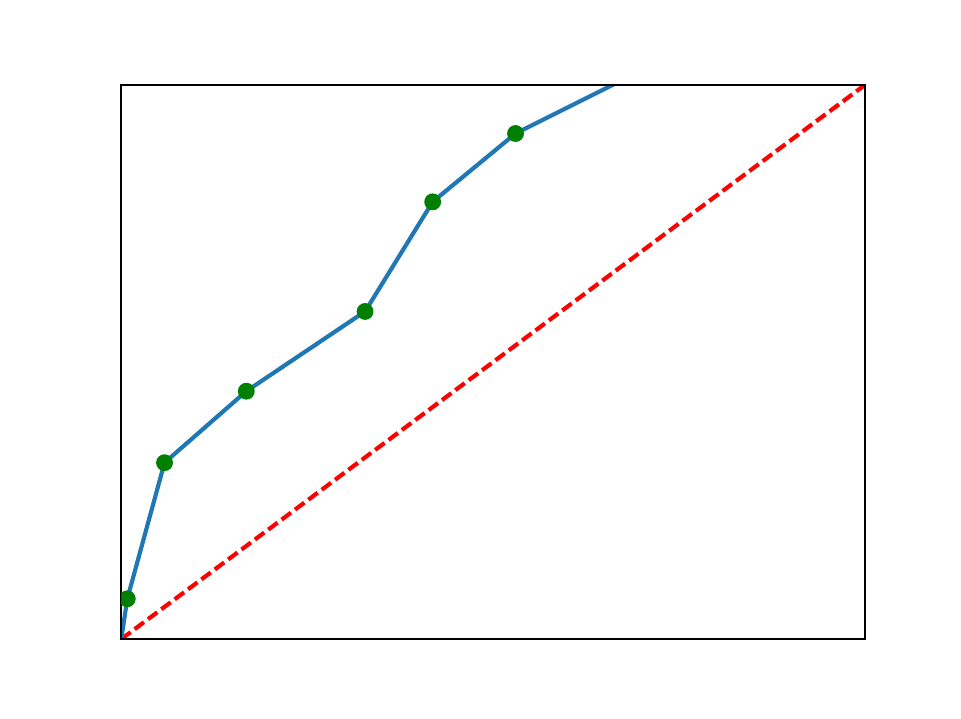}
 \end{subfigure}
 \caption{Expansive map (left) and linear interpolation (right) through the points in the subset $S$ (as in Lemma \ref{lem:monotonic_without_eta}). Both maps determine the same $\mathcal{P}$-monotonic transformation $d'$ of a distance $d$. In the linear interpolation (right), the slope of each successive segment must be at least 1.}
\end{figure}


\begin{lemma}\label{lem:monotonic_without_eta}
Let $d$ and $d'$ be distances on a finite set $X$ and $\mathcal{P}$ a partition of $X$. Then $d'$ is a $\mathcal{P}$-monotonic transformation of $d$ if and only if a linear interpolation of the points
\[
  S = \left\{ \left(d(x,y),d'(x,y)\right) \mid  x \sim_\mathcal{P} y \right\} \ \cup \ 
  \left\{ \left(d'(x,y),d(x,y)\right) \mid  x \not\sim_\mathcal{P} y \right\} \subseteq \R^2
\]
is a well-defined expansive map $\eta \colon [0,\infty) \to [0,\infty)$.
\end{lemma}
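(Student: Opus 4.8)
The plan is to prove both implications by reading the candidate function directly off the defining equations \eqref{eqn:MonotonicTransformation}. The guiding observation is that the points collected in $S$ are exactly those through which the graph of $\eta$ must pass: for $x \sim_\mathcal{P} y$ the equation says $\eta(d'(x,y)) = d(x,y)$, so the pair $\{x,y\}$ pins the point $(d'(x,y),d(x,y))$ onto the graph, while for $x \not\sim_\mathcal{P} y$ it says $\eta(d(x,y)) = d'(x,y)$, pinning $(d(x,y),d'(x,y))$. Thus each pair contributes a single (input, output) node, and ``$d'$ is a $\mathcal{P}$-monotonic transformation'' is precisely the assertion that all these nodes lie on the graph of one expansive map. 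Throughout I would use Lemma \ref{lem:expansive-properties}, the increasing-map form of expansiveness from Remark \ref{rmk:noabs} (equation \eqref{eqn:expansivenoabs}), and the graphical criterion of Example \ref{ex:eta}(4).

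The easy direction ($\Leftarrow$) is essentially immediate once this correspondence is set up. Assume the linear interpolation of $S$ is a well-defined expansive map $\eta \colon [0,\infty)\to[0,\infty)$. By Lemma \ref{lem:expansive-properties} it is a strictly increasing bijection with $\eta(0)=0$, so $\eta^{-1}$ exists; since a linear interpolation passes through all its nodes, $\eta$ satisfies $\eta(d'(x,y))=d(x,y)$ for $x\sim_\mathcal{P} y$ and $\eta(d(x,y))=d'(x,y)$ for $x\not\sim_\mathcal{P} y$, which are exactly the two cases of \eqref{eqn:MonotonicTransformation}. Hence $d'$ is a $\mathcal{P}$-monotonic transformation of $d$.

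For the main direction ($\Rightarrow$), assume $d'$ is a $\mathcal{P}$-monotonic transformation realised by some expansive $\eta_0$, so every node of $S$ lies on the graph of $\eta_0$, and I would deduce both required properties of the interpolant from this single fact. First, \emph{well-definedness}: if two nodes share a first coordinate $t$, both have second coordinate $\eta_0(t)$ because $\eta_0$ is a function, so same-cluster and different-cluster pairs can never impose conflicting values at a common abscissa; ordering the distinct first coordinates as $0=t_0<t_1<\cdots<t_m$ (the value $0$ arising from the pairs $x=y$, or adjoined by hand since $\eta_0(0)=0$) yields a genuine piecewise-linear function. Second, \emph{expansiveness}: on the segment joining $(t_i,\eta_0(t_i))$ to $(t_{i+1},\eta_0(t_{i+1}))$ the slope equals $(\eta_0(t_{i+1})-\eta_0(t_i))/(t_{i+1}-t_i)\ge 1$ by \eqref{eqn:expansivenoabs}, so every segment has slope at least $1$ and the interpolant is expansive by Example \ref{ex:eta}(4). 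Finally I would extend the interpolant beyond $t_m$ with slope $1$, keeping it nonnegative and expansive, so that it is a well-defined expansive map $[0,\infty)\to[0,\infty)$ as required.

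The only real obstacle is the bookkeeping in the well-definedness step, namely ensuring that the two families of nodes (same-cluster contributing $\eta(d')=d$, different-cluster contributing $\eta(d)=d'$) are mutually consistent at shared abscissae; the point is that this consistency is guaranteed precisely by the existence of a \emph{single} realising map $\eta_0$ through all of $S$, so the argument must be organised around one function rather than the two families treated separately. Everything else — strict monotonicity, the inverse, the slope-$\ge 1$ computation, nonnegativity, and the extension to $[0,\infty)$ — is routine given Lemma \ref{lem:expansive-properties} and Remark \ref{rmk:noabs}.
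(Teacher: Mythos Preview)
Your proposal is correct and follows essentially the same route as the paper: both directions hinge on the single observation that every point of $S$ lies on the graph of a realising expansive map, so the piecewise-linear interpolant through consecutive abscissae inherits slopes $\ge 1$ from \eqref{eqn:expansivenoabs}. Your version is a touch more careful than the paper's in two places --- you spell out why shared abscissae cannot give conflicting ordinates, and you explicitly extend the interpolant past the largest node with slope $1$ to obtain a map on all of $[0,\infty)$ --- but these are refinements of the same argument rather than a different approach.

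One small bookkeeping point: in your opening paragraph you describe the same-cluster node as $(d'(x,y),d(x,y))$ and the different-cluster node as $(d(x,y),d'(x,y))$, which is the correct reading of \eqref{eqn:MonotonicTransformation} for the graph of $\eta$, but note that the displayed $S$ in the lemma statement lists the coordinates the other way round; the paper's own proof silently works with the orientation you use, so just make sure your write-up is internally consistent with whichever convention you adopt.
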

\begin{proof}
Clearly, if there exists a linear interpolation $\eta$ of the points in $S$
such that it is a well-defined expansive map, then $d'$ is a $\mathcal{P}$-monotonic transformation of $d$, by definition. 

Now assume $d'$ is a $\mathcal{P}$-monotonic transformation of $d$. Then we can write
\[
  S=\left\{ \left(d(x,y),\eta(d(x,y))\right) \mid  x \sim_\mathcal{P} y \right\} \ \cup \
  \left\{ \left(d'(x,y),\eta(d'(x,y))\right) \mid  x \not\sim_\mathcal{P} y \right\},
\]
where $\eta \colon [0,\infty) \to [0,\infty)$ is an expansive map. 
To define a linear interpolation of $S$ we will assume that $S$ is ordered lexicographically
\[
  S=\{(x_0, y_0),(x_1,y_1),\dots,(x_N,y_N)\},
\]
where $y_i = \eta(x_i)$ for $0\leq i\leq N$ and $x_i<x_{i+1}$. We can assume the latter since $\eta$ is injective: if $x_i = x_{i+1}$ then $y_i=y_{i+1}$. Consider now the linear interpolation of $S$ consisting of segments between consecutive pairs of points $(x_i, y_i)$ and $(x_{i+1},y_{i+1})$. As every point in $S$ is of the form $(x,\eta(x))$, we have that the slope of each segment is
\[
 \frac{\eta(x_{i+1})-x_{i+1}}{\eta(x_{i})-x_{i}}\geq 1,
\]
as $\eta$ is expansive, Eq.~\eqref{eqn:expansivenoabs}. From this we have that the linear interpolation above, effectively a discretization of $\eta$, is in fact a well-defined expansive map.
\end{proof}

\subsection{Avoidance of problematic behaviour} \label{sec:AvoidanceProblematic}
Recall that Kleinberg's Consistency axiom allows us to arbitrarily emphasize any subcluster structure within a cluster without changing the output of the clustering algorithm (Fig.~\ref{fig:1}). We now show how this problematic behaviour is explicitly avoided by Monotonic Consistency. Suppose that we have a set $X$ and a partition $\mathcal{P}=F(d)$ with respect to a clustering algorithm $F$ and a distance $d$ on $X$. Choose a cluster $C$ and a partition $C=C_1 \cup C_2$ that we wish to emphasize on a new distance $d'$ which (necessarily) decreases the intra-cluster distances, but in a way that distances within each $C_1$ and $C_2$ decrease much faster than distances between  $C_1$ and $C_2$, in order to achieve the behaviour depicted in Fig.~\ref{fig:1}. 

Let $u, v \in C_1$ distinct and $w \in C_2$, and call $x=d(u,v)$, $x'=d'(u,v)$, $y=d(u,w)$ and $y'=d'(u,w)$. We impose $x' \le x$ and $y' \le y$, and, in addition, we want to make $x - x'$ large while keeping $y - y'$ small (Fig.~\ref{fig:avoidance}). This is not possible if $d'$ if a $\mathcal{P}$-monotonic transformation of $d$, as follows. Let $\eta$ be an expansive map realising $d'$. Then $x=\eta(x')$ and $y=\eta(y')$. Assume first $x \le y$. Then Eq.~\eqref{eqn:expansivenoabs} gives
\begin{align}\label{eq:7}
	\eta(y') - y' \ge \eta(x') - x' \iff y-y' \ge x-x'.
\end{align}
This implies that if we want to reduce the distances inside of a subcluster ($x-x'$ large), we need to reduce the distances between the clusters ($y-y'$) by at least the same amount. The remaining case, $x \ge y$, follows from $\eta^{-1}$ being a decreasing function (Lemma \ref{lem:expansive-properties}),
\begin{align}\label{eq:8}
	x \ge y \implies x' = \eta^{-1}(x) \ge \eta^{-1}(y) = y',
\end{align}
so that we cannot decrease the intra-cluster distance $x$ without also decreasing the inter-cluster distance $y$. 

\begin{figure}[H]
 \centering
 \includegraphics[width = .4\textwidth]{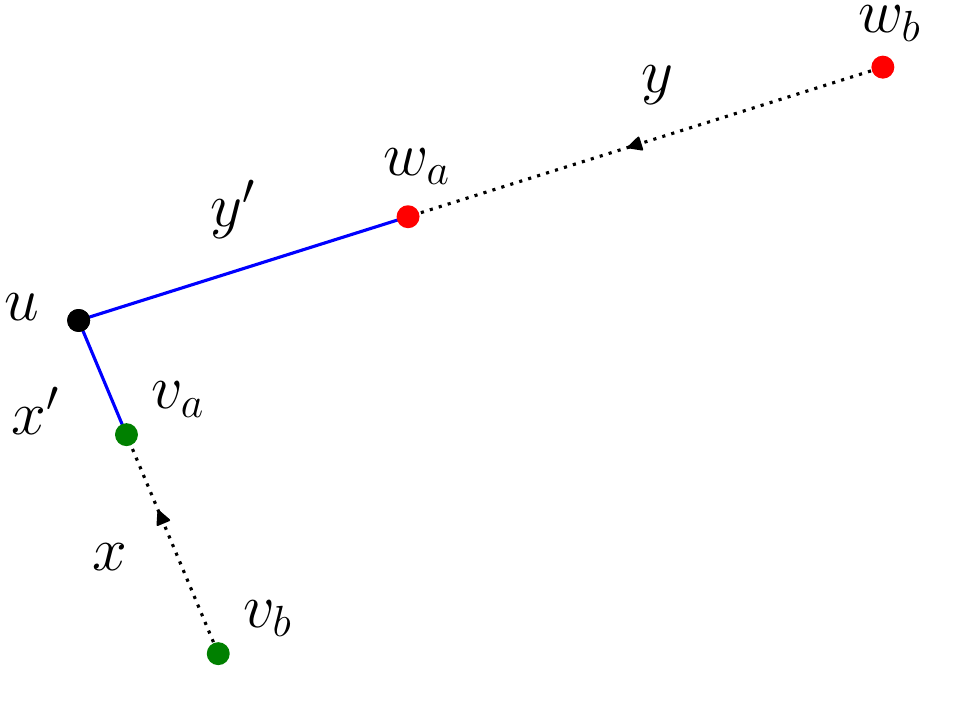}
 \caption{Avoidance of the problematic behaviour by Monotonic Consistency. A $\mathcal{P}$-monotonic transformation of $d$ reduces the distance from $u$ to $v$ by $x-x'$, and the distance from $u$ to $w$ by $y-y'$ (here shown with subscripts `b' and `a' indicating `before' and `after' the transformation). Then either $x' \ge y'$ (Eq.~\eqref{eq:8}), or $y-y' \ge x-x'$ (Eq.~\eqref{eq:7}). In either case, we cannot separate $u$ and $v$ from $w$ within the same cluster.}
 \label{fig:avoidance}
\end{figure}

We finish Section \ref{sec:MonotonicConsistency} by exploring Monotonic Consistency for Single Linkage, and for metrics. 

\subsection{Single Linkage does not satisfy Monotonic Consistency}
We will show that Monotonic-Consistency, a weakening of Consistency, can be satisfied together with Richness and Scale-Invariance by a particular instance of Morse clustering (Corollary \ref{cor:Possibility}). This is in contrast with Single Linkage, which, with different stopping conditions, satisfies each pair of Kleinberg's axioms \cite{kleinberg:2003}. The instance of Single-Linkage satisfying Richness and Scale Invariance, namely Scale-$\alpha$ Single Linkage with $0 < \alpha < 1$, does not satisfy Monotonic Consistency, as we show next. This means that no version of Single Linkage can be used to show our possibility result for Monotonic Consistency. Recall that Scale-$\alpha$ Single Linkage applied to $(X,d)$ returns the connected components of the graph with vertex $X$ and edges $(i,j)$ such that $d(i,j)<\alpha \cdot \max_{s,t \in X}d(s,t)$.
\begin{lemma}
Let $\alpha \in (0,1)$. Then Scale-$\alpha$ Single-Linkage does not satisfy Monotonic Consistency.
\end{lemma}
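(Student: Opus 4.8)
The plan is to disprove Monotonic Consistency by exhibiting a single counterexample: a finite set $X$, a distance $d$ with partition $\mathcal{P} = F(d)$, and a $\mathcal{P}$-monotonic transformation $d'$ of $d$ for which $F(d') \neq F(d)$. The mechanism I would exploit is the \emph{global} threshold $T = \alpha \max_{s,t} d(s,t)$ used by Scale-$\alpha$ Single Linkage, which keeps an edge $(i,j)$ precisely when $d(i,j) < T$. Since a $\mathcal{P}$-monotonic transformation expands inter-cluster distances through $\eta$ and contracts intra-cluster distances through $\eta^{-1}$, the maximal distance (which is inter-cluster) grows, and hence the threshold grows; if a \emph{different} inter-cluster distance is expanded much less, it can fall below the new threshold and connect two previously separate clusters.

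Concretely, I would take $X = \{a, b, c\}$ with partition $\mathcal{P} = \{\{a,b\}, \{c\}\}$, so that $(a,b)$ is the unique intra-cluster pair and $(a,c), (b,c)$ are inter-cluster. Choosing $d(a,b) = 1$, $d(a,c) = 2$ and $d(b,c) = 2/\alpha$ gives maximal distance $M = 2/\alpha$ and threshold $T = \alpha M = 2$, so only $(a,b)$ lies strictly below $T$ and $F(d) = \mathcal{P}$ as required. I would then pick the piecewise-linear expansive map
\[
  \eta(x) =
  \begin{cases}
  x & 0 \le x \le 2, \\
  2x - 2 & x \ge 2,
  \end{cases}
\]
which has slopes $1$ and $2$ (hence is expansive by Example \ref{ex:eta}(2)) and satisfies $\eta(0)=0$. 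The induced $\mathcal{P}$-monotonic transformation $d'$ (intra-cluster via $\eta^{-1}$, inter-cluster via $\eta$) then has $d'(a,b) = 1$, $d'(a,c) = 2$ and $d'(b,c) = 4/\alpha - 2$.

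To finish, I would compute the new threshold $T' = \alpha \max_{s,t} d'(s,t) = \alpha(4/\alpha - 2) = 4 - 2\alpha$ and note that, since $\alpha < 1$, both $d'(a,b) = 1$ and $d'(a,c) = 2$ are strictly below $T'$, whereas $d'(b,c)$ is not. Hence the retained edges $(a,b), (a,c)$ connect all three points and $F(d') = \{\{a,b,c\}\} \neq \mathcal{P} = F(d)$, contradicting Monotonic Consistency. I would also verify directly from Eq.~\eqref{eqn:MonotonicTransformation} that $d'$ is a genuine $\mathcal{P}$-monotonic transformation; the only nontrivial check is $\eta^{-1}(4/\alpha - 2) = 2/\alpha = d(b,c)$, which uses the second branch of $\eta^{-1}$ and holds because $4/\alpha - 2 \ge 2$ for $\alpha \le 1$.

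The step I expect to require the most care is choosing the distances and $\eta$ as explicit functions of $\alpha$ so that one construction works uniformly for every $\alpha \in (0,1)$ while keeping $\eta$ expansive (all slopes $\ge 1$). The scaling $d(b,c) = 2/\alpha$ is precisely what makes the crossing inequality $d'(a,c) = 2 < 4 - 2\alpha = T'$ collapse to the trivially true $\alpha < 1$, avoiding any case analysis; the remaining verification that this choice also yields $F(d) = \mathcal{P}$ (namely $d(a,b) = 1 < T \le d(a,c)$) is the routine consistency check underpinning the argument.
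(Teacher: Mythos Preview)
Your argument is correct and follows essentially the same mechanism as the paper: construct distances so that $F(d)=\mathcal{P}$, then apply a $\mathcal{P}$-monotonic transformation that inflates the maximal (inter-cluster) distance enough that another inter-cluster edge slips below the new threshold, collapsing everything to the trivial partition. The only cosmetic differences are that the paper works with an arbitrary $|X|\ge 3$ and the smooth expansive map $\eta(x)=(x^2+x)/\alpha$, whereas you use a minimal three-point set and a piecewise-linear $\eta$; both choices lead to the same contradiction.
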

\begin{proof}
Let $X$ be any set with at least three points, $\mathcal{P}$ any partition of $X$ with at least two clusters, and $x,y \in X$ such that $x\not\sim_\mathcal{P} y$. Define $d$ on $X$ as follows
\[
	d(u,v) =
\begin{cases}
 	\dfrac{\alpha}{2}, 	& \text{ if } u \sim_\mathcal{P} v,\\
 	1,					& \text{ if } u=x, v=y,\\
 	\alpha, 			& \text{ otherwise}.
\end{cases}
\]
Let $d_\text{max} = \max_{s,t \in X}d(s,t) = 1$. If we represent $(X,d)$ by a complete graph with vertex set $X$ and edges $(i,j)$, $i\neq j$, weighted by $d(i,j)>0$, Scale-$\alpha$ Single-Linkage returns the connected component of the graph obtained after removing all edges $(i,j)$ with value $d(i,j)\ge\alpha \, d_\text{max} = \alpha$, in this case. Consequently, Scale-$\alpha$ Single-Linkage applied to $d$ returns the original partition ${\mathcal P}$.  

Let $d'$ be the ${\mathcal P}$-monotonic transformation of $d$ given by 
$$
	\eta(x) = \frac{x^2+x}{\alpha}.
$$
(Note that $\eta(0)=0$ and $\eta'(x)=\frac{2x+1}{\alpha}>1$ for all $x$, so $\eta$ is indeed expansive.) Then
\[
	d'(u,v) =
\begin{cases}
 	\eta^{-1}\left(\frac{\alpha}{2}\right) = \frac{-1+\sqrt{1+2\alpha^2}}{2}, 	
    & \text{ if } u \sim_\mathcal{P} v,\\
 	\eta(1) = \frac{2}{\alpha},		
    & \text{ if } u=x, v=y,\\
 	\eta(\alpha) = 1+\alpha, 			& \text{ otherwise}.
\end{cases}
\]
We now have $d'_\text{max}= \eta(1) = \frac{2}{\alpha}$ and thus scale$-\alpha$ Single-Linkage removes the edges $(i,j)$ with $d(i,j) \ge \alpha d'_\text{max} = 2$. Since $\alpha < 1$, the only removed edge is $d(x,y)$ and, since $X$ has at least three points, the algorithm returns the trivial partition $\{X\}$, clearly not $\mathcal{P}$.
\end{proof}


\subsection{Monotonic Consistency for metrics}
A \emph{metric} is a distance (in the sense of this article) which also satisfies the triangle inequality, $d(u,w) \le d(u,v) + d(v,w)$ for all $u,v,w$. Metrics arise naturally when $X$ is embedded in a metric space such as $\mathbb{R}^m$, and, in fact, for many clustering algorithms (for example $k$-means clustering), the distance function is always a metric. It is therefore natural to ask whether Monotonic Consistency is a useful property in this context, namely, whether a non-trivial (that is, $\eta$ not the identity) ${\mathcal P}$-monotonic transformation of a metric can be a metric. (If not, Monotonic Consistency would become an empty clustering axiom for metrics.) Of course, not every ${\mathcal P}$-monotonic transformation of a metric will be a metric, but we show below that, given a metric $d$ and an arbitrary partition ${\mathcal P}$, we can always find ${\mathcal P}$-monotonic transformations of $d$ which are metrics.

Given a distance $d$ on a set $X$, we call a triple of points $i,j,k \in X$ \emph{aligned} if they are distinct and $d(i,k) = d(i,j) + d(j,k)$. 

\begin{theorem}\label{thm:monotonic-metric}
Let $X$ be a set, $\mathcal P$ a partition of $X$, and $d$ a distance on $X$ such that no triple of nodes is aligned. Then there exists a constant $c(d,\mathcal{P})>1$ such that, for all $s \in [1, c(d,\mathcal{P}))$, the ${\mathcal P}$-monotonic transformation of $d$ given by $\eta(x)=sx$ is a metric. Moreover, there is a universal constant $c(d)$ independent of the partition, that is, $1<c(d)\le c(d,\mathcal{P})$ for all partitions $\mathcal P$ of $X$. 
\end{theorem}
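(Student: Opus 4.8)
The plan is to note first that, for $\eta(x)=sx$ with $s\ge 1$, the $\mathcal{P}$-monotonic transformation $d'$ contracts each intra-cluster distance to $d(x,y)/s$ and expands each inter-cluster distance to $s\,d(x,y)$. Such a rescaling preserves symmetry, non-negativity, and the property $d'(x,y)=0\iff x=y$, so the only thing that can fail is the triangle inequality, and that is all I would verify. I would then classify every triple of distinct points $\{i,j,k\}$ by how its three pairs split into intra-cluster and inter-cluster. Since $\sim_\mathcal{P}$ is an equivalence relation, a triple cannot have exactly two intra-cluster pairs, leaving precisely three types: all three intra-cluster, all three inter-cluster, and exactly one intra-cluster pair with two inter-cluster pairs.

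In the first two types all three edges are scaled by the common factor $1/s$ or $s$, so the triangle inequality for $d'$ is equivalent to that for $d$ and holds for every $s\ge 1$. The genuine constraint comes only from the mixed type, say with $i\sim_\mathcal{P} j$ and $k$ in another cluster, so that $d'(i,j)=d(i,j)/s$ while $d'(i,k)=s\,d(i,k)$ and $d'(j,k)=s\,d(j,k)$. I would write out the three triangle inequalities: the one bounding the contracted edge, $d(i,j)/s\le s\,(d(i,k)+d(j,k))$, holds for all $s\ge 1$ directly from the metric inequality for $d$ (since $d(i,j)/s\le d(i,j)\le d(i,k)+d(j,k)$); the two inequalities bounding the expanded edges rearrange to the single condition $s^2\,|d(i,k)-d(j,k)|\le d(i,j)$.

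The main step is to read off the threshold. When $d(i,k)=d(j,k)$ this constraint is vacuous; otherwise it is $s\le\sqrt{d(i,j)/|d(i,k)-d(j,k)|}$. The metric inequality gives $|d(i,k)-d(j,k)|\le d(i,j)$, and the hypothesis that no triple is aligned upgrades this to a \emph{strict} inequality, so every such threshold is strictly greater than $1$. As $X$ is finite there are finitely many mixed triples, and I would set $c(d,\mathcal{P})$ to be the minimum of their thresholds (and $+\infty$, or any value above $1$, if none occur); then $c(d,\mathcal{P})>1$, and for every $s\in[1,c(d,\mathcal{P}))$ all triangle inequalities hold, so $d'$ is a metric.

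For the universal constant I would take the minimum not merely over the mixed triples arising from $\mathcal{P}$, but over \emph{every} triple $\{i,j,k\}$ and every choice of a distinguished vertex (equivalently, every way a single pair could be the intra-cluster one), defining $c(d)=\min\sqrt{d(i,j)/|d(i,k)-d(j,k)|}$ over all such configurations. Each factor is again $>1$ by no-alignment, the minimum over finitely many such quantities is $>1$, and since any partition realises only a subset of these configurations, $c(d)\le c(d,\mathcal{P})$ for every $\mathcal{P}$, giving $1<c(d)\le c(d,\mathcal{P})$. I expect the only delicate point to be the bookkeeping of the three cases and the observation that the no-alignment hypothesis is exactly what forces the threshold strictly above $1$: an aligned triple would make $|d(i,k)-d(j,k)|=d(i,j)$ and push the threshold down to $1$, leaving no room for $s>1$. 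The inequalities themselves are routine rearrangements.
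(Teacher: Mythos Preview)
Your proposal is correct and follows essentially the same route as the paper: reduce to the triangle inequality, split triples according to how the three pairs fall relative to $\mathcal{P}$, observe that the only nontrivial case is one intra-cluster pair and two inter-cluster pairs, rearrange to $s^2\,|d(i,k)-d(j,k)|\le d(i,j)$, and define $c(d,\mathcal{P})$ and $c(d)$ as the minimum of the resulting thresholds, using the no-alignment hypothesis to get strictness above $1$. The only cosmetic differences are that the paper avoids the absolute value by restricting to $d(i,k)>d(j,k)$, and organises the case split by cluster membership of $i,j,k$ rather than by the count of intra-cluster pairs; your observation that exactly two intra-cluster pairs is impossible by transitivity is a slightly cleaner way to enumerate the cases.
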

\begin{proof}
Let $d'$ be the ${\mathcal P}$-monotonic transformation of $d$ given by $\eta(x)=sx$ for some $s \ge 1$. We will find conditions on $s$ to guarantee that $d'$ satisfies the triangle inequality. Let $i,j,k \in X$ distinct (if not, the triangle inequality is automatically satisfied). We want to show that
\[
  d(i,k) \le d(i,j) + d(j,k)\implies d'(i,k) \le d'(i,j) + d'(j,k).
\]
Recall that
\[
d'(i,j) = 
 \begin{cases}
  s\,d(i,j)&\text{ if }i \sim_\mathcal{P} j,\\
  \dfrac{d(i,j)}{s}&\text{ otherwise.}
 \end{cases}
\]
If $i$, $j$ and $k$ are in the same cluster then clearly
\[\frac{d(i,k)}{s} \le \frac{d(i,j)}{s}+\frac{d(j,k)}{s}.\]
If they are all in pairwise different clusters, then
\[s\ d(i,k)\le s\ d(i,j)+s d(j,k).\]
If $i$ and $k$ are in the same cluster but $j$ is not, then (recall $s \ge 1$)
\[\frac{d(i,k)}{s} \le d(i,k) \le s\, d(i,j)+s\, d(j,k).\]
Since $i$ and $k$ are interchangeable in the triangle inequality above, the only remaining case is when $i$ and $j$ are in the same cluster, but $k$ is not. In this case, we want to show that 
\begin{equation}\label{eqn:trian-ineq}
	s\, d(i,k) \le \frac{d(i,j)}{s} + s\, d(j,k).
\end{equation}
If $d(i,k) \le d(j,k)$ then $s\, d(i,k) \le s\, d(j,k)$ and Eq.~\eqref{eqn:trian-ineq} is automatically satisfied. If $d(i,k) > d(j,k)$, Eq.~\eqref{eqn:trian-ineq} is satisfied if and only if 
\[
	s^2 \left( d(i,k) - d(j,k) \right) \le d(i,j)
    \iff s \le \sqrt{\frac{d(i,j)}{d(i,k)-d(j,k)}}\,.
\]
Define 
\begin{align*}
	c(d,\mathcal{P}) &=  
    	\min_{\substack{
        i\sim_\mathcal{P} j, 
        i\not\sim_\mathcal{P} k\\
        d(i,k) > d(j,k)}}
        \sqrt{\frac{d(i,j)}{d(i,k)-d(j,k)}}
        & \text{and}  &
    & c(d) = \min_{\substack{
        d(i,k) > d(j,k)}}
        \sqrt{\frac{d(i,j)}{d(i,k)-d(j,k)}}\,.
\end{align*}
Clearly, $c(d) \le c(d,\mathcal{P})$ for all partitions $\mathcal{P}$. To finish the proof, note that the triangle inequality for $d$ guarantees $c(d) \ge 1$, and $c(d) = 1$ if and only if there is an aligned triple of points.
\end{proof}

Defining the minimum of an empty set as infinity, we might have $c(d,\mathcal{P})=\infty$ (or $c(d)=\infty$), meaning that the $\mathcal{P}$-monotonic transformation of $d$ given by $\eta(x)=sx$ is a metric for any $s \ge 1$, and Theorem \ref{thm:monotonic-metric} still holds. Of course, this would only occur if for all $i$, $j$, $k$ with $i\sim_\mathcal{P} j$ and $i\not\sim_\mathcal{P} k$, we have $d(i,k)=d(j,k)$. 


\section{Morse Clustering} \label{sec:MorseClustering}
In this section we consider the clustering algorithm \texttt{Morse} in the form of three variants: \texttt{SiR-Morse}, $k$-\texttt{Morse} and $\delta$-\texttt{Morse} (the last two are described in full detail in \ref{appendix:instances}). 
Each of them satisfy one pair of the original Kleinberg axioms, and all of them satisfy Monotonic Consistency. In particular, one of them (\texttt{SiR-Morse}) satisfies Scale Invariance and Richness, showing that our three axioms can be simultaneously satisfied (Corollary \ref{cor:Possibility}). Morse clustering is inspired by Topology and Differential Geometry, namely Morse theory \cite{Milnor:1963} and its discretisation due to Forman \cite{forman:1998}. We start with a brief introduction to both continuous and discrete Morse theory and explain how they motivate our clustering algorithm. 

\begin{figure}
 \centering
\begin{center}
 \begin{tikzpicture}[scale = 0.35]
  \draw (0,7) node {\textit{A}};
  \draw (0,1) node {\textit{B}};
  \draw (0,-1) node {\textit{C}};
  \draw (0,-7) node {\textit{D}};
  \draw (-9,0) node {height};
  \draw[line width=0.3mm,decoration={markings, mark=at position 1 with {\arrow{>}}},postaction={decorate}] (-7,6.5) -- (-7, -6.5);

  \draw[line width=0.1mm] (0,0) circle (6cm);
  \draw[line width=0.1mm] (0,0) circle (2cm);
  \draw[line width=0.2mm] (0,6) arc (90:270:1cm and 2cm);
  \draw[line width=0.2mm] (0,-2) arc (90:270:1cm and 2cm);
  \draw[line width=0.2mm,dashed] (0,-6) arc (-90:90:1cm and 2cm);
  \draw[line width=0.2mm,dashed] (0,2) arc (-90:90:1cm and 2cm);
  \draw[line width=0.2mm] (-2,0) arc (0:-180:2cm and 1cm);
  \draw[line width=0.2mm] (6,0) arc (0:-180:2cm and 1cm);

  \filldraw[fill=red, draw=red] (0,6) circle (.2);
  \filldraw[fill=red, draw=red] (0,2) circle (.2);
  \filldraw[fill=red, draw=red] (0,-2) circle (.2);
  \filldraw[fill=red, draw=red] (0,-6) circle (.2);

  \draw[blue,line width=0.3mm,decoration={markings, mark=at position 1 with {\arrow{>}}},postaction={decorate}] (3,5.2) arc (60:30:6cm);
  \draw[blue,line width=0.3mm,decoration={markings, mark=at position 1 with {\arrow{>}}},postaction={decorate}] (-3,5.2) arc (120:150:6cm);
  \draw[blue,line width=0.3mm,decoration={markings, mark=at position 1 with {\arrow{>}}},postaction={decorate}] (5.2,-3) arc (-30:-60:6cm);
  \draw[blue,line width=0.3mm,decoration={markings, mark=at position 1 with {\arrow{>}}},postaction={decorate}] (-5.2,-3) arc (-150:-120:6cm);

  \draw[blue,line width=0.3mm,decoration={markings, mark=at position 1 with {\arrow{>}}},postaction={decorate}] (1,1.73) arc (60:30:2cm);
  \draw[blue,line width=0.3mm,decoration={markings, mark=at position 1 with {\arrow{>}}},postaction={decorate}] (-1,1.73) arc (120:150:2cm);
  \draw[blue,line width=0.3mm,decoration={markings, mark=at position 1 with {\arrow{>}}},postaction={decorate}] (1.73,-1) arc (-30:-60:2cm);
  \draw[blue,line width=0.3mm,decoration={markings, mark=at position 1 with {\arrow{>}}},postaction={decorate}] (-1.73,-1) arc (-150:-120:2cm);

  \draw[blue,line width=0.3mm,decoration={markings, mark=at position 1 with {\arrow{>}}},postaction={decorate}] (.5,5.73) arc (60:30:1cm and 2cm);
  \draw[blue,line width=0.3mm,decoration={markings, mark=at position 1 with {\arrow{>}}},postaction={decorate}] (-.5,5.73) arc (120:150:1cm and 2cm);
  \draw[blue,line width=0.3mm,decoration={markings, mark=at position 1 with {\arrow{>}}},postaction={decorate}] (.97,3.5) arc (-15:-45:1cm and 2cm);
  \draw[blue,line width=0.3mm,decoration={markings, mark=at position 1 with {\arrow{>}}},postaction={decorate}] (-.97,3.5) arc (195:225:1cm and 2cm);

  \draw[blue,line width=0.3mm,decoration={markings, mark=at position 1 with {\arrow{>}}},postaction={decorate}] (.5,-2.27) arc (60:30:1cm and 2cm);
  \draw[blue,line width=0.3mm,decoration={markings, mark=at position 1 with {\arrow{>}}},postaction={decorate}] (-.5,-2.27) arc (120:150:1cm and 2cm);
  \draw[blue,line width=0.3mm,decoration={markings, mark=at position 1 with {\arrow{>}}},postaction={decorate}] (.97,-4.5) arc (-15:-45:1cm and 2cm);
  \draw[blue,line width=0.3mm,decoration={markings, mark=at position 1 with {\arrow{>}}},postaction={decorate}] (-.97,-4.5) arc (195:225:1cm and 2cm);

  \draw[blue,line width=0.3mm,decoration={markings, mark=at position 1 with {\arrow{>}}},postaction={decorate}] (4,-.97) arc (0:-20:4cm and 5cm);
  \draw[blue,line width=0.3mm,decoration={markings, mark=at position 1 with {\arrow{>}}},postaction={decorate}] (-4,-.97) arc (180:200:4cm and 5cm);

  \filldraw[blue] (3,5.2) circle (0.1cm);
  \filldraw[blue] (-3,5.2) circle (0.1cm);
  \filldraw[blue] (5.2,-3) circle (0.1cm);
  \filldraw[blue] (-5.2,-3) circle (0.1cm);

  \filldraw[blue] (1,1.73) circle (0.1cm);
  \filldraw[blue] (-1,1.73) circle (0.1cm);
  \filldraw[blue] (1.73,-1) circle (0.1cm);
  \filldraw[blue] (-1.73,-1) circle (0.1cm);

  \filldraw[blue] (.5,5.73) circle (0.1cm);
  \filldraw[blue] (-.5,5.73) circle (0.1cm);
  \filldraw[blue] (.97,3.5) circle (0.1cm);
  \filldraw[blue] (-.97,3.5) circle (0.1cm);

  \filldraw[blue] (.5,-2.27) circle (0.1cm);
  \filldraw[blue] (-.5,-2.27) circle (0.1cm);
  \filldraw[blue] (.97,-4.5) circle (0.1cm);
  \filldraw[blue] (-.97,-4.5) circle (0.1cm);

  \filldraw[blue] (4,-.97) circle (0.1cm);
  \filldraw[blue] (-4,-.97) circle (0.1cm);
 \end{tikzpicture}
\end{center}
 \caption{Morse function (vertical height) on a torus, critical points (red), and associated flow (blue). The flow represents a unique maximal descent (or ascend, if we reverse time) path that a particle, such as a drop of water, would follow on the surface. It is defined everywhere except at four critical points, which can be thought of as a flow source ($A$), sink ($D$) or a combination of both ($B$, $C$). The  number and type of critical points, for any Morse function, is a topological invariant of the torus.}\label{fig:2:1}
\end{figure}
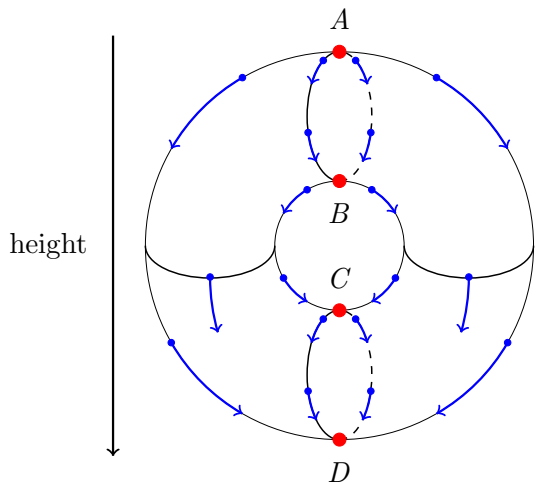

\subsection{Morse theory}\label{sec:MorseTheory}
Topology is the mathematical study of `shape' \cite{prasolov:1995}. It considers properties of a space (such as a 2D surface, or 3D object) which are invariant
under continuous deformations such as stretching, bending or collapsing. A topological invariant is a property, for example whether the space is disconnected, which is invariant under such deformations. A standard approach in Topology is to study a space via functions defined on the space. Morse theory \cite{Milnor:1963} considers potential-like functions called Morse functions and their associated flow on the space, defined by the unique direction of maximal descent at every point, except at a few critical points (see Fig.~\ref{fig:2:1}).

\cite{forman:1998} introduced a discrete version of Morse Theory which applies to discretisations of continuous spaces, such as a polygonal mesh of a continuous surface. Such discretisation decomposes the space into vertices, edges, triangles, etc.~called simplices. A discrete Morse function assigns a real number to each simplex under certain combinatorial restrictions, and we have associated notions of critical simplex, and discrete Morse flow (Fig.~\ref{fig:2:5}). 

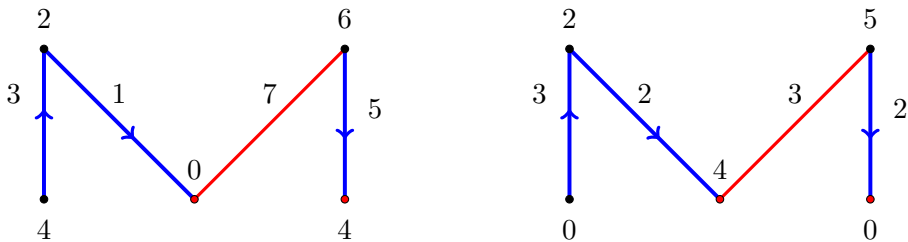
\begin{figure}
 \centering
\begin{minipage}{0.4\textwidth}
 \centering
 \begin{tikzpicture}
    \draw (2,.4) node {\textit{$0$}};
    \draw (1.,1.4) node {\textit{$1$}};
    \draw (0,2.4) node {\textit{$2$}};
    \draw (-.4,1.4) node {\textit{$3$}};
    \draw (4,-.4) node {\textit{$4$}};
    \draw (0,-.4) node {\textit{$4$}};
    \draw (4.4,1.2) node {\textit{$5$}};
    \draw (4,2.4) node {\textit{$6$}};
    \draw (3.,1.4) node {\textit{$7$}};
    
    \draw[red, very thick] (2,0) -- (4,2);
    \draw[blue,line width = 1.5, decoration={markings, mark=at position 0.6 with {\arrow{>}}},postaction={decorate}] (0,0) -- (0,2);
    \draw[blue,line width = 1.5, decoration={markings, mark=at position 0.6 with {\arrow{>}}},postaction={decorate}] (0,2) -- (2,0);
    \draw[blue,line width = 1.5, decoration={markings, mark=at position 0.6 with {\arrow{>}}},postaction={decorate}] (4,2) -- (4,0);
    
    \filldraw[fill = red, draw = black] (2,0) circle (.05);
    \filldraw[fill = red, draw = black] (4,0) circle (.05);
    \filldraw[fill = black, draw = black] (0,0) circle (.05);
    \filldraw[fill = black, draw = black] (0,2) circle (.05);
    \filldraw[fill = black, draw = black] (4,2) circle (.05);
  \end{tikzpicture}
\end{minipage}
\begin{minipage}{0.45\textwidth}
 \centering
 \begin{tikzpicture}
    \draw (2,.4) node {\textit{$4$}};
    \draw (1.,1.4) node {\textit{$2$}};
    \draw (0,2.4) node {\textit{$2$}};
    \draw (-.4,1.4) node {\textit{$3$}};
    \draw (4,-.4) node {\textit{$0$}};
    
    \draw (0,-.4) node {\textit{$0$}};
    \draw (4.4,1.2) node {\textit{$2$}};
    \draw (4,2.4) node {\textit{$5$}};
    \draw (3.,1.4) node {\textit{$3$}};
    
    \draw[red, very thick] (2,0) -- (4,2);
    \draw[blue,line width = 1.5, decoration={markings, mark=at position 0.6 with {\arrow{>}}},postaction={decorate}] (0,0) -- (0,2);
    \draw[blue,line width = 1.5, decoration={markings, mark=at position 0.6 with {\arrow{>}}},postaction={decorate}] (0,2) -- (2,0);
    \draw[blue,line width = 1.5, decoration={markings, mark=at position 0.6 with {\arrow{>}}},postaction={decorate}] (4,2) -- (4,0);   
    
    \filldraw[fill = red, draw = black] (2,0) circle (.05);
    \filldraw[fill = red, draw = black] (4,0) circle (.05);
    \filldraw[fill = black, draw = black] (0,0) circle (.05);
    \filldraw[fill = black, draw = black] (0,2) circle (.05);
    \filldraw[fill = black, draw = black] (4,2) circle (.05);
  \end{tikzpicture}
\end{minipage}
 \caption{Morse flows on the same discrete space (here a small graph) associated to a discrete Morse function (left) and to arbitrary edge and vertex weight (right). Our algorithm (Algorithm \ref{alg:morseflow}) generalises the standard construction \cite{forman:1998} of the Morse flow associated to a discrete Morse function (left) to the Morse flow of an arbitrary edge and vertex weighted on a graph (right). In both cases, we show the associated discrete Morse flow as blue directed edges and critical simplices (vertices and edges) in red. A particle on a vertex has a unique direction of descent following the blue arrow, except at the two critical vertices shown in red, both sinks of the flow. After removing the critical edge, we have two connected components, each a tree rooted at a critical vertex (Algorithm \ref{alg:morseclustering}).} 
 \label{fig:2:5}
\end{figure}

Discrete Morse theory can be applied to clustering by representing a set $X$ with distance $d$ as an undirected weighted graph $G$ with vertex set $X$, and an edge between $i$ and $j$ if $d(i,j)>0$, and no such edge otherwise. (This is an all-to-all, or complete, graph.) A graph is a discretisation of a curve and hence discrete Morse theory applies. To obtain a partition of $X$ using Morse theory, first we extend the edge weights given by the distances $d(i,j)>0$ to a Morse function on the graph by assigning weights to the vertices as well. This Morse function determines a unique flow on the vertices which, in turn, gives a natural partition of the vertex set. The clusters are the connected components of the graph after removing the critical edges (edges not participating in the flow), and each cluster becomes a tree rooted at a critical vertex (a sink of the flow), see Fig.~\ref{fig:2:5}. We describe this in detail next.

\subsection{Morse flow} \label{sec:MorseFlowAlgorithm}
Let $X$ be a finite set and $d$ a distance (dissimilarity) on $X$. The Morse clustering of $(X,d)$ is obtained from the Morse flow on the graph representation of $(X,d)$, by removing the edges not participating in the flow. In turn, the Morse flow is determined by the direction of maximal descent at every vertex together with the initial and final vertex weights (to guarantee a descending path). In its more general form, rather than weights, we only need a way of comparing vertices and edges locally. Formally, this consists on a choice of vertex and edge preorders. 

A \emph{preorder} on a set is a binary relation $\preceq$ that is reflexive ($a \preceq a$ for all $a$) and transitive ($a \preceq b$ and $b \preceq c$ implies $a \preceq c$ for all $a, b, c$). We write $a \prec b$ if $a \preceq b$ and $b \not\preceq a$ (that is, $b \preceq a$ does not hold). A preorder is \emph{total} if $a \preceq b$ or $b \preceq a$ for all $a,b$. Our main examples are the total preorders induced by an edge or vertex weight function on a graph (Example \ref{exa:WeightsPreorders}). By a \emph{graph} $\mathcal{G}=(V,E)$ we mean a non-empty vertex set $V$ and an edge set $E \subseteq V \times V$ so that $(u,v) \in E$ represents a directed edge from $u$ to $v$. A graph is \emph{undirected} if $(v,w) \in E$ whenever $(w,v) \in E$, for all $v, w \in V$, \emph{loopless} if $(v,v) \not\in E$ for all $v \in V$, and \emph{finite} if $V$ (and therefore $E$) is a finite set. 


\begin{example}\label{exa:WeightsPreorders}
Let $\mathcal{G}=(V,E)$ be a graph.  
\begin{itemize}
\item[(1)] (Edge weights) For any function $w \colon E \to \R$, the relation $e \preceq f$ if $w(e) \le w(f)$ is a total preorder on $E$.
\item[(2)] (Vertex weights) For any function $w \colon V \to \R$, the relation $u \preceq v$ if $w(u) \le w(v)$ is a total preorder on $V$.
\end{itemize}
\end{example}
A distance $d$ on a set $X$ is an edge weight function for the complete graph with vertex set $X$, and hence induces a total edge preorder on the graph representation of $X$. Similarly, a labelling $X=\{x_1,x_2,\ldots,x_n\}$ induces a vertex weight $w(x_i)=i$ and hence a total preorder on the vertices $V=X$ of such graph representation. 

\begin{remark}
A preorder is an \emph{order} if it is also anti-symmetric ($a \preceq b$ and $b \preceq a$ implies $a=b$). Our examples above are not necessarily orders, as we may have $w(a)=w(b)$ with $a\neq b$. If $\preceq$ is a total order, $a \prec b$ is equivalent to $a \preceq b$ and $a \neq b$. Note that  any total preorder on a set is induced by a weight function $w \colon X \to \mathbb{N}$. 
\end{remark}



Morse clustering applies to an arbitrary finite graph $\mathcal{G}$ with a choice of edge and vertex preorders $\preceq_E$ and $\preceq_V$. First, it finds the (ascending) Morse flow associated to $(\mathcal{G}, \preceq_E, \preceq_V)$ (Algorithm \ref{alg:morseflow}), then the vertex partition associated to the Morse flow, that is, the connected components of the graph after removing the critical edges (Algorithm \ref{alg:morseclustering}); see also Fig.~\ref{fig:2:5}. First, we need to introduce some notation and terminology. 

Given a node $v$, we define the set of \emph{edges at $v$} as
\[
	E_v = \left\{  (v,w) \in E \right\}.
\]
A \emph{maximal edge at $v$} is a maximum for $E_v$ with respect to the edge preorder, that is, an edge $e \in E_v$ such that $f \preceq e$ for all $f \in E_v$. Note that a maximal edge at $v$ may not exist (e.g.~if the preorder is not total), or it may not be unique (e.g.~if $w(e)=w(f)$ for some edge weights). An edge $(v,w)$ is \emph{ascending}, or \emph{admissible}, if $v \prec_V w$. If an edge is the unique maximal edge at $v$, and it is ascending, we call it a \emph{non-critical} or \emph{flow} edge at $v$. 

We define the \emph{Morse flow} of a graph $\mathcal{G}$ with a choice of preorders $\preceq_E$ and $\preceq_V$ as the map $\Phi \colon V \to V$ given by
\begin{align*}
	\Phi(v) = \begin{cases} 
    	w & \text{if $(v,w)$ is the unique maximal edge at $v$, and it is ascending $v \prec_V w$,}\\
	v & \text{otherwise.}
	\end{cases}.
\end{align*}

\begin{remark}
There is a similar notion of \emph{descending} edges and Morse flow. For simplicity, we define Morse flow as ascending, and achieve descending flows simply by reversing the vertex preorder $\preceq_V$.
\end{remark}

Graphically, we can represent a Morse flow as directed edges $(v,w)$ whenever $\Phi(v)=w$ (blue directed edges in Fig.~\ref{fig:2:5}). Edges not participating in the flow (red edges in Fig.~\ref{fig:2:5}) are called \emph{critical edges}, and fixed points of the flow, $\Phi(v)=v$, are called \emph{critical vertices} (red vertices in Fig.~\ref{fig:2:5}), and correspond to `sinks' of the flow. 

\begin{algorithm2e}
\SetAlgoLined
\DontPrintSemicolon
\SetKwFunction{Sum}{sum}\SetKwFunction{Diag}{diag}
\KwIn{graph $\mathcal{G}=(V,E)$, edge preorder $\preceq_E$, vertex preorder $\preceq_V$}
\KwOut{Morse flow $\Phi \colon V \to V$}
\BlankLine
\ForEach{$v \in V$}{\eIf{maximal edge $e_v=(v,w)$ of $E_v$ exists and it is unique and $v \prec_V w$}{$\Phi(v)=w$\;}{$\Phi(v)=v$\;}}
\caption{Morse flow algorithm.}
\label{alg:morseflow}
\end{algorithm2e}

The Morse flow of a graph can be computed using Algorithm \ref{alg:morseflow}. This algorithm can be easily implemented so that each edge is visited only once, and thus has linear time complexity $\mathcal{O}(m)$ where $m$ is the number of edges.

An important observation is that we \emph{first} use the edge preorder to extract the maximal edge at a vertex (if it exists and is unique), and \emph{then} use it in the flow only if it is also ascending. In particular, if the maximal edge does not exist, or it is not unique, or, crucially, it is not ascending, we define $\Phi(v)=v$, that is, the flow stops at $v$. 
This design choice works well in practice \cite{schofield:2019}, produces a rich family of clustering algorithms (Section \ref{sec:ThreeInstances} and \ref{appendix:instances}) and, crucially, allows us to distinguish local maxima (Fig.~\ref{fig:local_maxima}) without introducing additional scaling/threshold parameters. 


Note that the Morse flow algorithm work for both directed and undirected networks. For undirected networks (the case we are concern with in this paper), each undirected edge $\{(v,w), (w,v)\}$ is evaluated twice, once at $v$ and once at $w$. Since at most one of $(v,w)$ or $(w,v)$ is admissible (ascending), at most one of them belongs to the flow. The fact that the flow is (strictly) ascending, also means that there cannot be any cycles in the flow. 



\begin{figure}
 \centering
 \includegraphics[scale=0.4]{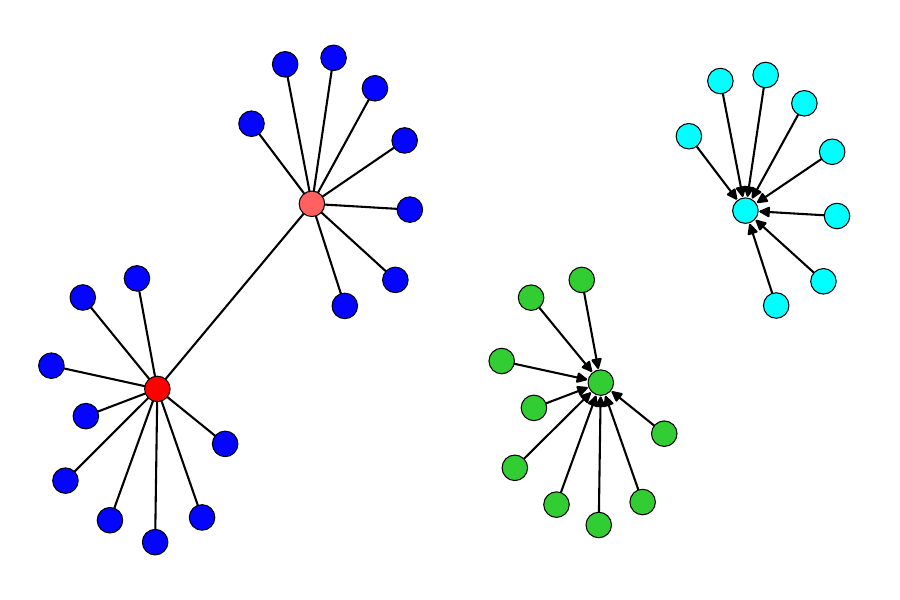}
 \caption{Morse clustering can separate nearby local maxima. Left: Toy graph with node weights shown by colour from low (dark blue) to high (red, critical vertex). Right: Output of the Morse flow and clustering algorithm (two clusters, shown by colour). The closest (highest weight) edge at each red vertex is a `downhill' edge to a blue vertex. Hence both vertices, and the edge between them, are critical.}
 \label{fig:local_maxima}
\end{figure}

Recall that an edge is critical if it does not participate in the flow, and a vertex is critical if it is a fixed point, $\Phi(v)=v$, that is, a `sink' of the flow.  Formally, we define
\begin{align*}
	V_\text{crit} = \{v \in V \mid \Phi(v) = v \} \quad \text{and} \quad
     E_\text{crit} = \{(v,w) \in E \mid \Phi(v) \neq w \}.
\end{align*}

We will see that, after removing the critical edges, what remains is a partition of the graph into a disjoint union of directed trees rooted at critical vertices (edge directions given by the flow). The cluster associated to a critical vertex $v$ is 
\begin{equation}\label{eqn:Tv}
	T_v =\{ w \in V \mid \Phi^N(w)=v \text{ for some } N \ge 0\}.
\end{equation}
Here $\Phi^N$ is the composition of $\Phi$ with itself $N$ times (and $\Phi^0$ is the identity map), so that $\Phi^N(w)$ is the vertex at which we arrive from $w$ after following the flow $N$ steps (across $N$ edges). In the dynamical system terminology, we can describe each $T_v$ as the `basin of attraction' of $v$. 

Let us write $\mathcal{T}_v$ for the subgraph with vertex set $T_v$ and edge set all non-critical edges between vertices in $T_v$. Recall that the \emph{depth} of a rooted tree is the maximal distance to its root. 


\begin{theorem}\label{thm:morseflow}
Let $\mathcal{G}=(V,E)$ be a finite graph with edge and vertex preorders $\preceq_E$ and $\preceq_V$, and associated Morse flow $\Phi \colon V \to V$. Then:
\begin{itemize}
\item[(i)] The Morse flow stabilises, that is, there is $N \ge 0$ such that $\Phi^N = \Phi^{N+1}$;
\item[(ii)] $\{T_v \mid v \in V_\text{crit}\}$ is a partition of $V$;
\item[(iii)] $\mathcal{T}_v$ is a directed (edge directions given by the flow) rooted tree with root $v$;
\item[(iv)] Within $T_v$, the vertex $v$ is the only critical vertex, and it is maximal with respect to the vertex preorder;
\item[(v)] $\max \{ \textup{depth}(T_v) \mid v \in V_\text{crit}\} = \min\,\{N\ge 0 \mid \Phi^N = \Phi^{N+1}\}$;
\item[(vi)] The graph $(V,E\setminus E_\text{crit})$ equals the disjoint union of the graphs $\mathcal{T}_v$ for $v \in V_\text{crit}$. 
\end{itemize}
\end{theorem}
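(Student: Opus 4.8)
The plan is to build everything on a single observation: the flow is strictly ascending, meaning that whenever $\Phi(v) \neq v$ we have $v \prec_V \Phi(v)$ by the definition of an admissible edge. I would first record that the strict relation $\prec_V$ derived from a preorder is itself irreflexive and transitive (a routine check: transitivity of $\preceq_V$ upgrades $a \prec_V b \prec_V c$ to $a \prec_V c$, and reflexivity forbids $a \prec_V a$), so $\prec_V$ is a strict partial order and admits no cycles. For part (i), the orbit $v, \Phi(v), \Phi^2(v), \dots$ is therefore strictly increasing in $\prec_V$ until it first reaches a fixed point, and since $V$ is finite a strictly ascending chain has at most $|V|$ terms, so the orbit stabilises after at most $|V|-1$ steps; taking $N$ to be the maximum of these first-hitting times over all $v \in V$ gives $\Phi^N = \Phi^{N+1}$.

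For part (ii), I would note that once an orbit reaches a critical vertex it remains there (as $\Phi(v)=v$ for critical $v$), so each orbit converges to a unique critical vertex, the \emph{limit} of $w$; then $w \in T_v$ precisely when $v$ is the limit of $w$. Existence of a limit (part (i)) shows the $T_v$ cover $V$, and uniqueness of the limit of a deterministic orbit shows they are disjoint. For (iii) and (iv) together, I would first prove (iv): if $w \in T_v$ were critical with $w \neq v$, then $\Phi^N(w)=w$ for all $N$, contradicting $\Phi^N(w)=v$; and following the ascending orbit from any $w \in T_v$ to $v$ yields, by transitivity, $w \preceq_V v$, so no $w \in T_v$ satisfies $v \prec_V w$ and $v$ is maximal. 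Given (iv), every $w \in T_v$ with $w \neq v$ is non-critical, hence possesses a single outgoing flow edge $(w, \Phi(w))$ with $\Phi(w) \in T_v$; declaring $\Phi(w)$ the parent of $w$ makes $\mathcal{T}_v$ a functional graph in which every vertex has a directed path to $v$ and in which there are no cycles (strict ascent), that is, a rooted tree with root $v$, establishing (iii).

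For part (vi), I would observe that $E \setminus E_\text{crit}$ is exactly the set of flow edges $\{(w,\Phi(w)) \mid \Phi(w)\neq w\}$, and each such edge lies inside the block $T_v$ containing both $w$ and $\Phi(w)$; since the $T_v$ partition $V$ (part (ii)) and the edges of $\mathcal{T}_v$ are by definition the flow edges within $T_v$, the graph $(V, E\setminus E_\text{crit})$ is the disjoint union of the $\mathcal{T}_v$. Finally, part (v) is a bookkeeping identity: the distance from $w$ to the root of its tree is the least $N_w$ with $\Phi^{N_w}(w)=\Phi^{N_w+1}(w)$, i.e.\ the first time the orbit of $w$ becomes critical, so $\textup{depth}(T_v)=\max\{N_w \mid w\in T_v\}$ and hence $\max_v \textup{depth}(T_v) = \max_{w\in V} N_w = \min\{N\ge 0 \mid \Phi^N=\Phi^{N+1}\}$.

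I expect the only genuine subtleties to be the preorder bookkeeping in the first step, namely confirming that $\prec_V$ has no cycles even though $\preceq_V$ need not be antisymmetric, and the careful matching of the two quantities in (v); the remaining parts follow directly from the deterministic, acyclic, out-degree-one structure of the flow on non-critical vertices.
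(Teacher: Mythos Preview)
Your proposal is correct and follows essentially the same approach as the paper: both arguments hinge on the observation that each non-fixed step of the flow is strictly ascending with respect to $\prec_V$, so orbits are acyclic and, by finiteness of $V$, must terminate at a critical vertex. Your treatment is in places slightly more careful than the paper's (you explicitly verify that $\prec_V$ is a strict partial order before using acyclicity, and your argument for (v) handles both inequalities cleanly, whereas the paper's proof of (v) only spells out one direction), and you reorder (iii) and (iv), but these are cosmetic differences rather than a genuinely distinct route.
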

\begin{proof}
(i) Let $v \in V$. By the definition of the Morse flow, either $\Phi(v)=v$ (a critical vertex), or $\Phi(v)=w$ and $v \prec_V w$ (which implies $v \neq w$ by reflexivity). Therefore, the sequence $v=v_0, v_1, v_2, \ldots$ where $v_i = \Phi^i(v)$, must contain a critical vertex before the first repetition: otherwise, we would have $v_i \prec_V v_{i+1} \prec_V \ldots \prec_V v_{k}=v_i$ and thus $v_i \prec_V v_{i}$ by transitivity, a contradiction. Since the graph is finite, say $|V|=n$, there will be repetition in any subset of $n+1$ vertices. Consequently, there is a critical vertex $v_k=\Phi^k(v)$ in the sequence above and, in fact, $k\le n$. All in all, the flow stabilises after at most $n=|V|$ steps. (The case $N=0$ can occur if all vertices are critical.)

(ii) Let $v \in V$. By the argument in (i), the sequence $v_i=\Phi^i(v)$ ($i\ge 0$) stabilises, that is, there is $k \ge 0$ such that $\{v_i \mid 0\le i \le k\}$ are distinct, non-critical, and $v_j=v_k$ critical for all $j \ge k$. In particular, $v \in T_{v_k}$, by Eq.~\eqref{eqn:Tv}. This shows that every vertex belong to a set $T_v$ for $v \in V_\text{crit}$, and that these sets must be disjoint. 

(iii) Since all edges in $\mathcal{T}_v$ are non-critical, we have $v \prec_V w$ across each edge and thus a cycle would imply $u \prec_V u$ for some vertex $u$, a contradiction. All edges are directed and point towards the root $v$, by the discussion above.

(iv) For each critical vertex $w$, we have $\Phi(w)=w$ hence $w \in T_w$. Since, by (ii), they form  a partition of the vertex set, $v$ is the only critical vertex in $T_v$. Every (directed) edge $(u,w)$ in $\mathcal{T}_v$ is not critical, hence admissible, so that $u \prec_V w$. As $v$ is the root of the tree $\mathcal{T}_v$, it must then be maximal with respect to $\preceq_V$. 

(v) It suffices to show that, for any $v\in V_\text{crit}$, and any $N \ge 0$ such that $\Phi^N=\Phi^{N+1}$, we have $\textup{depth}(\mathcal{T}_v) \le N$. Let $k=\textup{depth}(\mathcal{T}_v)$. Then there is $w \in T_v$ such that $w_i=\Phi^i(w)$, $i \ge 0$, stabilises after exactly $k$ steps. In particular, $w_0,\ldots,w_{k-1}$ are all distinct and hence $k \le N$. 

(vi) Let $\mathcal{G}_1=(V,E\setminus E_\text{crit})$ and $\mathcal{G}_2$ the disjoint union of the graphs $\mathcal{T}_v$ for $v\in V_\text{crit}$. Since $\{T_v \mid v \in V_\text{crit}\}$ is a partition of $V$, both $\mathcal{G}_1$ and $\mathcal{G}_2$ have the same vertex set. We show they also have the same edge set and hence they are equal. The edges in $\mathcal{G}_2$ are non-critical thus a subset of $E\setminus E_\text{crit}$. Conversely, given a non-critical edge $(v,w)$ in $\mathcal{G}_1$, we have $w=\Phi(v)$ and the sequence $v,\Phi(v)=w,\Phi^2(v),\ldots$ shows that $v$ and $w$ belong to the same tree critical tree, and thus this tree contains the edge $(v,w)$.
\end{proof}
\subsection{Morse clustering algorithm}\label{sec:MorseClusteringAlgorithm}
The \emph{Morse partition} of a graph $\mathcal{G}$ with a choice of vertex and edge preorders $\preceq_V$ and $\preceq_E$ is the partition of the vertex set given by the connected components of the graph $\mathcal{G}_\text{Morse}=(V,E\setminus E_\text{crit})$. By Theorem \ref{thm:morseflow}, there is a cluster for each critical vertex, and, in fact, $\mathcal{G}_\text{Morse}$ is a disjoint union of directed rooted trees with roots at the critical vertices.

A complete algorithm that returns the Morse clustering of $(\mathcal{G},\preceq_V,\preceq_E)$ is given below (Algorithm \ref{alg:morseclustering}). Its time complexity is clearly linear on the number of vertices and edges. Alternatively, the Morse flow and clustering can be computed simultaneously one edge at a time, by keeping a list of critical edges and of the maximal edge at each vertex. Therefore, the time complexity of (any instance of) Morse clustering is $\mathcal{O}(m)$ where $m$ is the number of edges of the graph. 

\begin{algorithm2e}
\SetAlgoLined
\DontPrintSemicolon
\SetKwFunction{MorseFlow}{MorseFlow}\SetKwFunction{Diag}{diag}
\SetKwData{EmptyGraph}{EmptyGraph}
\KwIn{graph $\mathcal{G}=(V,E)$, edge preorder $\preceq_E$, vertex preorder $\preceq_V$}
\KwOut{partition $\mathcal{P}$ of $V$}
\BlankLine
$n \leftarrow |V|$\;
initialise $G_\texttt{Morse}$ to an empty graph on $n$ vertices\;
$\Phi \leftarrow \MorseFlow(G,\preceq_E, \preceq_V)$\;
\BlankLine
\ForEach{$v \in V$}
	{\If{$\Phi(v) \neq v$}
		{add edge $(v,\Phi(v))$ to $G_\texttt{Morse}$\;}}
\BlankLine
$\mathcal{P} \leftarrow $ connected components of $G_\texttt{Morse}$\;
\caption{Morse clustering algorithm.}
\label{alg:morseclustering}
\end{algorithm2e}

We finish with a useful result, needed later, to determine when two Morse partitions are equal. 

\begin{lemma}\label{lem:refinement}
Let $\Phi$ and $\Phi'$ be Morse flows on $X$ with associated Morse partitions $\mathcal{P}$ and $\mathcal{P}'$. If $x \sim_\mathcal{P} \Phi'(x)$ for all $x \in X$, then $\mathcal{P}'$ is a refinement of $\mathcal{P}$.
\end{lemma}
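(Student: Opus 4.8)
The plan is to reduce everything to the equivalence-relation structure of the two partitions together with the basin-of-attraction description of their clusters coming from Theorem \ref{thm:morseflow}. Being in the same cluster is an equivalence relation, so both $\sim_\mathcal{P}$ and $\sim_{\mathcal{P}'}$ are reflexive, symmetric and transitive. Moreover, by Theorem \ref{thm:morseflow}(ii)--(iii) and (vi), the Morse partition $\mathcal{P}'$ is exactly the collection of basins $T'_v = \{w \mid (\Phi')^N(w) = v \text{ for some } N \ge 0\}$, as $v$ ranges over the critical vertices of $\Phi'$. Hence $x \sim_{\mathcal{P}'} y$ holds precisely when $x$ and $y$ flow under $\Phi'$ to a common critical vertex.

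The key step is to upgrade the one-step hypothesis $x \sim_\mathcal{P} \Phi'(x)$ to the many-step statement $x \sim_\mathcal{P} (\Phi')^N(x)$ for every $N \ge 0$, which I would prove by induction on $N$. The base case $N = 0$ is reflexivity of $\sim_\mathcal{P}$. For the inductive step, I apply the hypothesis at the point $(\Phi')^N(x)$ to obtain $(\Phi')^N(x) \sim_\mathcal{P} (\Phi')^{N+1}(x)$, and then chain this with the inductive hypothesis $x \sim_\mathcal{P} (\Phi')^N(x)$ using transitivity of $\sim_\mathcal{P}$.

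With this in hand the refinement is immediate. Suppose $x \sim_{\mathcal{P}'} y$. By the basin description above there is a critical vertex $v$ of $\Phi'$ and exponents $N, M \ge 0$ with $(\Phi')^N(x) = v = (\Phi')^M(y)$. The many-step statement then yields $x \sim_\mathcal{P} v$ and $y \sim_\mathcal{P} v$, so by symmetry and transitivity $x \sim_\mathcal{P} y$. Therefore every cluster of $\mathcal{P}'$ lies inside a single cluster of $\mathcal{P}$, which is exactly the claim that $\mathcal{P}'$ refines $\mathcal{P}$.

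I expect the only genuine content to be the inductive upgrade of the hypothesis; the remainder is bookkeeping about equivalence relations and a direct appeal to the structure theorem. The subtlety to keep in mind is that one should not try to transport the relation along the unique tree path joining $x$ and $y$ inside $\mathcal{T}'_v$ (such a path may descend as well as ascend), but instead route both points through the common root $v$, where the one-directional, strictly ascending flow hypothesis applies cleanly.
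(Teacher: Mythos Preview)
Your proposal is correct and follows essentially the same approach as the paper: upgrade the one-step hypothesis to $x \sim_\mathcal{P} (\Phi')^N(x)$ for all $N$ (the paper phrases this as the whole $\Phi'$-flow path $p'(x)$ lying in a single $\mathcal{P}$-cluster), then route any two points of a $\mathcal{P}'$-cluster through their common critical vertex. Your presentation via the equivalence-relation language is slightly more streamlined than the paper's explicit index-chasing, but the content is the same.
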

\begin{proof}
Write $\mathcal{P}=\{X_1,\ldots,X_n\}$ and $\mathcal{P}'=\{X'_1,\ldots,X'_{n'}\}$. Write $x_i$, respectively $x'_j$, for the critical vertex in $X_i$, respectively $X'_j$, for all $i, j$.
Choose $N \ge 1$ such that both $\Phi$ and $\Phi'$ stabilise, that is, $\Phi^N=\Phi^{N+1}$ and $(\Phi')^N=(\Phi')^{N+1}$. We need to show that, for each $j$ there is $i$ such that $X'_j \subseteq X_i$. 

Let $x \in X'_j$ and consider the flow paths
\begin{align*}
	p(x) &= \{x, \Phi(x), \ldots, \Phi^N(x)=x_i\} \ \text{ and }\ 
    p'(x) = \{x, \Phi'(x), \ldots, (\Phi')^N(x)=x'_j\}
\end{align*}
By definition of Morse partition, all points in $p(x)$ are in the same cluster of $\mathcal{P}$, namely $X_i$, and all points in $p'(x)$ in the same cluster of $\mathcal{P}'$, namely $X'_j$. By hypothesis, $(\Phi')^n(x) \sim_\mathcal{P} (\Phi')^{n+1}(x)$ for all $n \ge 0$, so $p'(x) \subseteq X_i$. In particular 
$x'_j \sim_\mathcal{P} x_i$.

Given any other $y \in X'_j$,  
\begin{align*}
	p(y) &= \{y, \Phi(y), \ldots, \Phi^N(y)=x_k\} \subseteq X_k \ \text{ and }\ 
    p'(y) = \{y, \Phi'(y), \ldots, (\Phi')^N(y)=x'_j\} \subseteq X'_j,
\end{align*}
for a possibly different cluster $X_k$. Again, by hypothesis, we have $p'(y) \subseteq X_k$ and, in particular, $x'_j \sim_\mathcal{P} x_k$. Then $x'_j \in X_i \cap X_k \neq \emptyset$ and hence $i=k$, as distinct clusters are disjoint. Since $y$ was arbitrary, we conclude that $X'_j \subseteq X_i$.
\end{proof}

Note that two partitions are equal if and only if each is the refinement of the other, or if they have the same size (number of clusters) and one is the refinement of the other.  

\subsection{A possibility theorem for Monotonic Consistency} \label{sec:ThreeInstances}\label{sec:Possibility}
Morse Clustering depends on a choice of edge and vertex preorders on a given graph. 
Different choices of edge and vertex preorders result in different instances of Morse Clustering. We now show an instance of Morse Clustering that satisfies Scale-Invariance and Richness (Theorem \ref{thm:SiRMorse}) as well as Monotonic Consistency (Theorem \ref{thm:SiRMorseMonotonic}), proving that these three axioms are mutually compatible. 

\begin{remark}
One can in fact define two further instances of Morse Clustering, that we call \texttt{$k$-Morse} and \texttt{$\delta$-Morse}, which satisfy the other two pairs of Kleinberg's axioms, namely Consistency and Scale-Invariance and Consistency and Richness, resp. \texttt{$k$-Morse}  and \texttt{$\delta$-Morse}. Furthermore, they both satisfy Monotonic Consistency (see Table \ref{table:axioms} and \ref{appendix:instances}). 
\end{remark}

Let $(X,d)$ be a set with a distance function, and consider the complete graph with vertex set $X$. Let us fix, once and for all, a labelling $X=\{x_1,x_2,\ldots,x_n\}$, which we will use to create the vertex preorders (see the remarks at the end of this section on labelling). We also assume that $X$ has at least three points.

We now define an instance of Morse Clustering that we call $\texttt{SiR-Morse}$ (\texttt{S}cale-\texttt{i}nvariant and \texttt{R}ich). It is given the the following choice of vertex and edge preorders.

\begin{itemize}
 \item $v_i \preceq_V v_j $ if $i \le j$
 \item $(v, w) \preceq_E (v,t)$ if $d(v,w) \ge d(v,t)$
\end{itemize}

Note that the vertex preorder is a total order, and the edge preorder is also locally total (at each vertex). The corresponding Morse flow chooses, at each vertex $v$, the edge with smallest distance, if it is unique and admissible. On the other hand, if more than one edge at $v$ achieves the smallest distance, or if such edge is not admissible, then $v$ is critical, that is, the Morse flow fixes $v$, $\Phi(v)=v$. 

\begin{theorem}\label{thm:SiRMorse}
\texttt{SiR-Morse} is Scale-Invariant and Rich.
\end{theorem}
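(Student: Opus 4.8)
The plan is to verify the two axioms separately, with Scale Invariance being immediate and Richness requiring an explicit construction.

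\textbf{Scale Invariance.} The Morse flow $\Phi$, and hence the Morse partition $F(d)$, depends on $d$ only through the edge preorder $\preceq_E$, since the vertex preorder $\preceq_V$ is the fixed index order and does not involve $d$ at all. The edge preorder is defined purely by comparisons $d(v,w) \ge d(v,t)$, and for any $\alpha > 0$ we have $d(v,w) \ge d(v,t) \iff \alpha\, d(v,w) \ge \alpha\, d(v,t)$. Thus $\preceq_E$ is unchanged under $d \mapsto \alpha d$, so $\Phi$ and the resulting partition are unchanged, giving $F(\alpha d) = F(d)$.

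\textbf{Richness.} Given an arbitrary partition $\mathcal{P} = \{X_1, \ldots, X_k\}$ of $X$, I would exhibit a distance $d$ whose Morse partition is exactly $\mathcal{P}$. For each cluster $X_\ell$, let $r_\ell$ denote its \emph{root}, the vertex of largest index in $X_\ell$. Define
\[
  d(u,v) = \begin{cases} 0 & u = v, \\ 1 & u \neq v,\ u,v \text{ in the same cluster, and } u = r_\ell \text{ or } v = r_\ell, \\ 2 & \text{otherwise.}\end{cases}
\]
This is a symmetric, positive, non-degenerate distance (no triangle inequality is needed). The intention is that every non-root vertex sees its cluster root as its unique nearest neighbour, while each root fails to flow out of its cluster.

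I would then compute the Morse flow. For a non-root vertex $u \in X_\ell$, the edge $(u, r_\ell)$ has distance $1$ while every other edge at $u$ has distance $2$; hence $(u,r_\ell)$ is the unique maximal edge at $u$, and it is ascending since $r_\ell$ has larger index, so $\Phi(u) = r_\ell$. For a root $r_\ell$ of a cluster of size $\ge 2$, its minimal-distance edges are those to the other cluster members (distance $1$), each of which is descending; so the maximal edge is either non-unique or not ascending, giving $\Phi(r_\ell) = r_\ell$. For a singleton cluster $X_\ell = \{r_\ell\}$, all edges at $r_\ell$ have distance $2$, and since $|X| \ge 3$ there are at least two of them, so there is no unique maximal edge and again $\Phi(r_\ell) = r_\ell$. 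Consequently every flow edge stays within a single cluster and each cluster is connected to its root, so by Theorem \ref{thm:morseflow} the connected components of $\mathcal{G}_\text{Morse}$ are exactly the clusters $X_\ell$, that is, $F(d) = \mathcal{P}$.

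\textbf{Main obstacle.} The delicate point is guaranteeing that roots, and especially singleton clusters, are \emph{critical} rather than flowing into a neighbouring cluster. This is exactly where the design of the Morse flow is used: a vertex is fixed unless its nearest neighbour is both unique and ascending. Setting all inter-cluster distances equal forces ties (non-uniqueness) at singletons, while placing the strictly smallest edges at each root toward its lower-index members forces non-admissibility at roots; the singleton case relies on the hypothesis $|X| \ge 3$ to supply at least two tied nearest neighbours. The remaining work is the routine verification that the stated distance realises this flow.
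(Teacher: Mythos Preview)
Your proof is correct and follows essentially the same approach as the paper: Scale Invariance is immediate from the scale-invariance of the distance comparisons defining $\preceq_E$, and Richness uses the identical distance construction (distance $1$ to the maximal-index vertex within each cluster, $2$ otherwise). Your verification is in fact more thorough than the paper's, which does not explicitly argue that the roots are critical and does not isolate the singleton case you handle via the $|X|\ge 3$ hypothesis.
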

\begin{proof}
({\textbf{Scale-invariance}})
Scale-Invariance does not affect the vertex or edge preorders, since $\preceq_V$ is independent of $d$ and, for $\preceq_E$, $d(v,w) \le d(v,t)$ if and only if $\alpha\, d(v,w) \le \alpha\, d(v,t)$ for all $\alpha >0$. Hence the output of $\texttt{SiR-Morse}$ for $(X,d)$ and for $(X,\alpha d)$ are the same.

\smallskip
({\textbf{Richness}}) Consider $V=V_1\cup\ldots\cup V_k$ an arbitrary partition of $V$. Let $v_i$ be the maximal vertex in $V_i$ ($\preceq_V$ is a total order) and define a distance $d$ as follows
\[
d(v,w) = 
\begin{cases}
1, & \mbox{ if } v,w\in V_i\mbox{ for some $i$, and either }v = v_i\mbox{ or }w = v_i,\\
2, & \mbox{ otherwise},
\end{cases}
\]
for all $v \neq w$. 
If $v \in V_i$, the edge to $v_i$ is always admissible and the largest with respect to $\preceq_E$, so $\Phi(v)=v_i$ for the Morse flow, and we recover the partition $V_1\cup\ldots\cup V_k$. 
\end{proof}

\begin{table}
\centering
 \begin{tabular}{|c|c|c|c:c|}
  \hline
  \small & \footnotesize Scale-Invariance & \footnotesize Richness & \footnotesize Consistency & \footnotesize Monotonic-Consistency\\
  \hline
  \texttt{SiR-Morse} & \checkmark & \checkmark & \xmark & \checkmark\\
  \hline
  \texttt{$k$-Morse} & \checkmark & \xmark & \checkmark & \checkmark\\
  \hline
  \texttt{$\delta$-Morse} & \xmark & \checkmark & \checkmark & \checkmark\\
  \hline
 \end{tabular}
 \caption{Clustering axioms and three instances of Morse clustering.}
 \label{table:axioms}
\end{table}

Our main result is that this instance of Morse Clustering also satisfies Monotonic Consistency. 

\begin{theorem}\label{thm:SiRMorseMonotonic}
\texttt{SiR-Morse} satisfies Monotonic Consistency.
\end{theorem}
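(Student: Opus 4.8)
The plan is to prove the strongest possible statement, namely that the Morse flow itself is unchanged: if $\mathcal{P} = \texttt{SiR-Morse}(d)$ with flow $\Phi$, and $d'$ is a $\mathcal{P}$-monotonic transformation of $d$ via an expansive map $\eta$, with associated flow $\Phi'$, then $\Phi' = \Phi$ pointwise. This immediately yields identical Morse partitions and hence $F(d')=F(d)$. (One could instead route through Lemma \ref{lem:refinement}, but the direct comparison is cleaner and self-contained, since the reverse transformation is not itself monotonic.)

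I would first record the three ingredients the argument rests on. From the description of \texttt{SiR-Morse}, the flow sends $v$ to its unique closest neighbour (smallest $d(v,\cdot)$) provided that neighbour has a larger label, and fixes $v$ otherwise; crucially, the vertex preorder is independent of the distance, so admissibility is never affected by the transformation. From Theorem \ref{thm:morseflow}(iii) and (vi), every flow edge lies inside a single cluster, so $\Phi(v)=w\neq v$ forces $v\sim_\mathcal{P} w$; from (iv), a critical vertex is maximal in its own cluster, so any intra-cluster neighbour of a critical vertex has a strictly smaller label. Finally, by Definition \ref{def:monotonic-transformation} and Lemma \ref{lem:expansive-properties}, we have $d'(v,u)=\eta^{-1}(d(v,u))\le d(v,u)$ when $v\sim_\mathcal{P} u$ and $d'(v,u)=\eta(d(v,u))\ge d(v,u)$ when $v\not\sim_\mathcal{P} u$, with $\eta$ and $\eta^{-1}$ both strictly increasing.

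Then I would split into two cases. For a \emph{non-critical} vertex $v$, let $w=\Phi(v)$ be the unique $d$-closest neighbour; it is admissible and, being a flow edge, intra-cluster, so $d'(v,w)=\eta^{-1}(d(v,w))$. I check $w$ stays the unique $d'$-closest neighbour: for any other intra-cluster $u$, strict monotonicity of $\eta^{-1}$ preserves $d(v,w)<d(v,u)$; for any inter-cluster $u$, the chain $d'(v,u)=\eta(d(v,u))\ge d(v,u)>d(v,w)\ge \eta^{-1}(d(v,w))=d'(v,w)$ does the job. Admissibility is unchanged, so $\Phi'(v)=w=\Phi(v)$. For a \emph{critical} vertex $v$, I argue by contradiction: suppose $\Phi'(v)=w\neq v$, so $w$ is the unique $d'$-closest neighbour and $v\prec_V w$. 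Maximality (Theorem \ref{thm:morseflow}(iv)) forces $w\not\sim_\mathcal{P} v$, since an intra-cluster $w$ would satisfy $w\prec_V v$; hence $d'(v,w)=\eta(d(v,w))$. Unwinding $d'(v,u)>d'(v,w)$ for every $u\neq w$ then recovers $d(v,u)>d(v,w)$ in both subcases --- for inter-cluster $u$ by strict monotonicity of $\eta$, and for intra-cluster $u$ via $d(v,u)\ge \eta^{-1}(d(v,u))>\eta(d(v,w))\ge d(v,w)$. Thus $w$ was already the unique $d$-closest admissible neighbour, i.e. $\Phi(v)=w\neq v$, contradicting criticality. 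Hence $\Phi'(v)=v=\Phi(v)$.

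The main obstacle is precisely this second case. A monotonic transformation can break ties and reshuffle the relative sizes of intra- versus inter-cluster distances, so it is not a priori clear that a critical vertex remains critical --- one might fear a tie at the minimum resolving into a fresh admissible flow edge. The resolution hinges on the structural observation that admissibility can only ever be granted to an \emph{inter-cluster} neighbour, because every intra-cluster neighbour of a critical vertex has a smaller label by Theorem \ref{thm:morseflow}(iv); this is combined with the fact that $\eta$ preserves the ordering among inter-cluster distances, while the global bounds $\eta^{-1}(x)\le x\le \eta(x)$ allow me to transfer any strict $d'$-inequality back to a strict $d$-inequality. Once this point is isolated, both cases reduce to short inequality chains.
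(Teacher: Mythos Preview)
Your proposal is correct and follows essentially the same strategy as the paper's own proof: show directly that the Morse flows $\Phi$ and $\Phi'$ coincide, splitting into the non-critical and critical cases and invoking the monotonicity and sandwich properties $\eta^{-1}(x)\le x\le\eta(x)$ from Lemma~\ref{lem:expansive-properties}, together with Theorem~\ref{thm:morseflow}(iv) to force any admissible neighbour of a critical vertex to be inter-cluster. Your treatment of the critical case is in fact slightly more economical than the paper's: there the argument first shows $d(v,w)$ is a (possibly non-unique) $d$-minimum and then disposes of the tie case separately, whereas you obtain the strict inequality $d(v,u)>d(v,w)$ for every $u\neq v,w$ in a single pass, reaching the contradiction $\Phi(v)=w$ immediately.
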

\begin{proof}
Let $d$ be a distance on $X$, $\mathcal{P}$ the output partition of \texttt{SiR-Morse} on $(X,d)$, and $d'$ a $\mathcal{P}$-monotonic transformation of $d$. We want to show that \texttt{SiR-Morse} produces the same partition on $(X,d')$. We will prove that, in fact, the associated Morse flows $\Phi$ and $\Phi'$ are identical.

Let $\eta$ be a monotonic transformation realising $d'$, that is, 
\[
\begin{array}{ll}
 d(u,v) = \eta(d'(u,v)) 		&\text{ if $u \sim_\mathcal{P} v$, and}\\
 d(u,v) = \eta^{-1}(d'(u,v))  &\text{ if $u \not\sim_\mathcal{P} v$.}
\end{array}
\]
Let $v \in X$ and consider first the case $w=\Phi(v)\neq v$. Then, by the definition of \texttt{SiR-Morse} preorders, 
\[
  d(v,w)<d(v,s) \text{ for all } s\neq v, w.
\]
To prove that $\Phi'(v) = w$, we need to show that $d'(v,w)<d'(v,s)$ for all $s\neq v, w$. We have two subcases. 
\begin{enumerate}
 \item If $s\sim_{\mathcal{P}} v$, we have $d'(v,w)=\eta^{-1}(d(v,w))$ and $d'(v,s)=\eta^{-1}(d(v,s))$, so
  \[
   d(v,w)<d(v,s) \text{ implies } d'(v,w)<d'(v,s),
  \]
 as $\eta^{-1}$ is increasing (Lemma \ref{lem:expansive-properties}). 
 \item If $s\not\sim_{\mathcal{P}} v$, we have $d'(v,w)=\eta^{-1}(d(v,w))$ and $d'(v,s)=\eta(d(v,s))$, so 
  \[
   d(v,w)<d(v,s) \text{ implies } d'(v,w)\le d(v,w)<d(v,s)\le d'(v,s),
  \]
as $\eta^{-1}(x)\le x\le \eta(x)$ for all $x$ (Lemma \ref{lem:expansive-properties}). 
\end{enumerate}
In conclusion, we have $d'(v,w)<d'(v,s)$ for all $s\neq v,w$ so $\Phi'(v) = w$.

The remaining case is $\Phi(v)=v$. Suppose, by contradiction, that $w=\Phi'(v)\neq v$. This implies $v\prec_V w$ and $d'(v,w)<d'(v,s)$ for all $s\neq v,w$. Note that, since $v$ is critical and therefore maximal within its cluster, we have $v \not\sim_\mathcal{P} w$. On the other hand, $\Phi(v) = v$ means that either the unique maximal edge is not admissible, or it is admissible but the maximum is not unique. 

First we show that $d(v,w)$ is also a minimal distance at $v$ (possibly not unique). Suppose, by contradiction, $d(v,s) < d(v,w)$ for some $s \neq v, w$. There are two subcases.
\begin{enumerate}
 \item If $s\sim_{\mathcal{P}} v$, then we have $d'(v,s)=\eta^{-1}(d(v,s))$ and $d'(v,w)=\eta(d(v,w))$, so
  \[
   d(v,s)<d(v,w) \text{ implies } d'(v,s)\le d(v,s)<d(v,w)\le d'(v,w),
  \]
 as $\eta^{-1}(x)\le x\le \eta(x)$ (Lemma \ref{lem:expansive-properties}).
 \item If $s\not\sim_{\mathcal{P}} v$, then we have $d'(v,s)=\eta(d(v,s))$ and $d'(v,w)=\eta(d(v,w))$, so 
  \[
   d(v,s)<d(v,w) \text{ implies } d'(v,s)<d'(v,w),
  \]
as $\eta$ is increasing (Lemma \ref{lem:expansive-properties}). 
\end{enumerate}
In either case, we have $d'(v,s) < d'(v,w)$, a contradiction to the minimality of $d'(v,w)$. 

Since $d(v,w)$ is a minimal distance and $v \prec_V w$, but $\Phi(v)=v\neq w$, the minimal distance (maximal edge) cannot be unique. Let $d(v,s)=d(v,w)$ for some $s \neq v, w$. We have, again, two subcases. 
\begin{enumerate}
 \item If $s\sim_{\mathcal{P}} v$, then we have $d'(v,s)=\eta^{-1}(d(v,s))$ and $d'(v,w)=\eta(d(v,w))$, so
  \[
   d(v,s)=d(v,w) \text{ implies } d'(v,s)\le d(v,s)=d(v,w)\le d'(v,w),
  \]
 as $\eta^{-1}(x)\le x\le \eta(x)$ (Lemma \ref{lem:expansive-properties}).
 \item If $s\not\sim_{\mathcal{P}} v$, then we have $d'(v,s)=\eta(d(v,s))$ and $d'(v,w)=\eta(d(v,w))$, so 
  \[
   d(v,s)=d(v,w) \text{ implies } d'(v,s)=d'(v,w),
  \]
  as $\eta$ is injective (Lemma \ref{lem:expansive-properties}).
\end{enumerate}
This implies that $d'(v,s)\le d'(v,w)$, so $d'(v,w)$ cannot be the unique minimal distance for $d'$ at $v$, a contradiction.
\end{proof}

\begin{corollary}\label{cor:Possibility}
Scale Invariance, Richness and Monotonic Consistency are mutually compatible clustering axioms. 
\end{corollary}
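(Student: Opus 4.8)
The plan is to observe that the Corollary is an immediate synthesis of the two theorems immediately preceding it, and that no independent argument is required. Recall that three axioms being \emph{mutually compatible} means exactly that there exists a single clustering algorithm satisfying all three simultaneously. Thus the entire content of the proof is to produce one witness algorithm and cite the properties already established for it.

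First I would take the witness to be \texttt{SiR-Morse}, the instance of Morse Clustering defined via the total vertex order $v_i \preceq_V v_j$ iff $i \le j$ and the (locally total) edge preorder $(v,w) \preceq_E (v,t)$ iff $d(v,w) \ge d(v,t)$. Theorem \ref{thm:SiRMorse} establishes that this algorithm is both Scale-Invariant and Rich, while Theorem \ref{thm:SiRMorseMonotonic} establishes that it satisfies Monotonic Consistency. Since all three properties hold for the \emph{same} algorithm $F = \texttt{SiR-Morse}$, the three axioms are simultaneously satisfiable, which is precisely the assertion of the Corollary.

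In this sense there is no genuine obstacle left to overcome at the level of the Corollary: the substantive work has already been discharged in Theorems \ref{thm:SiRMorse} and \ref{thm:SiRMorseMonotonic}. If I were to identify where the difficulty really lay, it would be in the proof of Theorem \ref{thm:SiRMorseMonotonic}, namely showing that a $\mathcal{P}$-monotonic transformation leaves the \emph{entire} Morse flow $\Phi$ (and not merely the induced partition) unchanged, via the global monotonicity sandwich $\eta^{-1}(x) \le x \le \eta(x)$ together with the strict monotonicity and injectivity of $\eta$ from Lemma \ref{lem:expansive-properties}. The Corollary itself then follows formally, and I would state it as a direct consequence of the conjunction of the two theorems, remarking that this simultaneously furnishes the promised alternative to Kleinberg's impossibility result by replacing Consistency with Monotonic Consistency.
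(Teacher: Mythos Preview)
Your proposal is correct and matches the paper's approach exactly: the Corollary is stated immediately after Theorems \ref{thm:SiRMorse} and \ref{thm:SiRMorseMonotonic} with no separate proof, being an immediate consequence of \texttt{SiR-Morse} serving as a common witness to all three axioms. There is nothing to add.
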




\section{Axiomatic Approach to Graph Clustering }\label{sec:Impossibility}
In this section, we consider the axiomatic approach in the context of graph clustering, that is, of distances supported on a given graph $G$. Mathematically, we allow the distance function to take the value $0$ (Definition \ref{def:pseudo-distance}). Conceptually, there are two different approaches depending on whether $0$ is considered a numerical value (minimum distance) or indicating that the distance is `not defined'. 
The first approach is essentially equivalent to that of \cite{laarhoven:2014}, where a possibility theorem for Kleinberg's axioms is shown. The second approach, on the other hand, is closer to the usual interpretation of graph clustering, or partitioning, in network and computer science \cite{newman2018networks, schaeffer:2007}. In this case, we prove an impossibility result for Consistency (Section \ref{sec:ImpossibilityGraph}), even when Richness is replaced by the more natural Connected-Richness axiom, and a possibility result for Monotonic Consistency (Section \ref{sec:PossibilityGraph}). First, we discuss the two approaches. 

\subsection{Two approaches to graph clustering}\label{sec:Duality}
If we allow a distance function to take the value 0 between pairs of distinct points, we obtain what we call a pseudo-distance. 
\begin{definition}\label{def:pseudo-distance}
A \emph{pseudo-distance on a set} $X$ is a function $d:X\times X \to \R$ such that $d(v,w)=d(w,v) \ge 0$ and $d(v,v)=0$ for all $v, w \in X$ (that is, we allow $d(u,v)=0$ for $u \neq v$).
\end{definition}
We can represent a pseudo-distance on $X$ as a graph with vertex set $X$ in the usual way: an edge between $i$ and $j$ if $d(i,j)>0$, and no such edge if $d(i,j)=0$. Pseudo-distances occur naturally in network clustering or community detection \cite{fortunato2010community} to represent absent edges, as well as in distance measures \cite{xu2015comprehensive} that allow 0 values, such as the Pearson correlation distance or the cosine distance for unnormalised vectors.

A direct generalisation of Kleinberg's definition of clustering algorithm, Eq.~\eqref{eqn:clus-algo-Kleinberg}, is
\begin{equation}\label{eq:CA1}
	F:\{ d \text{ pseudo-distance on } X\}\to\{{\mathcal P}\mbox{ partition of } X\}.
\end{equation}
Kleinberg's original axioms make sense in this setting, however a possibility result now holds: the function that returns the connected components of the graph representation of $d$ (as above), is clearly scale-invariant, rich and consistent (cf.~\cite{laarhoven:2014}). 

The main issue with Eq.~\eqref{eq:CA1} is that the graph becomes irrelevant: although Scale-Invariance does not change the underlying graph, the Consistency axiom can create and eliminate edges, by setting their values to zero, or not zero. Hence this approach focuses on the set $X$ rather than on a fixed graph $G$. 


\begin{remark}
In \cite{laarhoven:2014} the authors define a graph as a pair $V$ vertex set and $E \colon V \times V \to \mathbb{R}^{\ge0}$, with $0$ effectively signifying the lack of an edge. Their Consistence axiom (which they call Consistency Improvement), allows $E'(i,j) \ge E(i,j)$ whenever $i \sim_C j$ and  $E'(i,j) \le E(i,j)$ whenever $i \not\sim_C j$, for a clustering $C$ of the same vertex set $V$. In particular, we are allowed to create or eliminate edges by setting $E'(i,j) > E(i,j)=0$, respectively  $E'(i,j) = 0 < E(i,j)$. 
\end{remark}



Instead, we suggest a more natural approach when the focus is on the graph $G=(V,E)$: we allow arbitrary positive distances on edges while keeping $d(u,v)=0$ whenever $(u,v) \not\in E$. In clustering problems, we are normally interested in minimising the edge cut \cite{schaeffer:2007}, and hence the absence of an edge is significant. In fact, the underlying hypothesis in graph clustering is that the structure of the graph, or network, carries information. For this reason, we fix a graph $G$ and restrict to distances supported on (the edges of) $G$, and define distances, and clustering algorithms, accordingly. 

\begin{definition}\label{def:pseudo-distance-revised}
A \emph{pseudo-distance on a graph} $G=(V,E)$ is a pseudo-distance $d$ on the vertex set $V$ that is supported on the edge set, that is, $d(v,w)\neq0$ if and only if $(v,w)\in E$. (Equivalently, a positive weight function on undirected edges.) 
\end{definition}
Note that, for this definition to make sense, $G$ must be loopless and undirected (we will assume this from now on). Given a graph $G=(V,E)$, we define a \emph{graph clustering algorithm} as any function 
\begin{equation}\label{eqn:pseudoclustering}
	F:\{ d \text{ pseudo-distance on } G \}\to\{{\mathcal P}\mbox{ partition of } V\}.
\end{equation}
Clearly, a distance on a set $X$ is the same as a pseudo-distance on the complete graph with vertex set $V=X$. Hence this so-called \emph{sparse} setting generalises Kleinberg's setting from a complete to an arbitrary (but fixed) graph on $X$.

\subsection{An impossibility theorem for graph clustering}\label{sec:ImpossibilityGraph}
Kleinberg's axioms can be stated in the graph clustering setting above (Eq.~\eqref{eqn:pseudoclustering}), as follows. 

\begin{itemize}
\item \textbf{Scale-invariance}: For any pseudo-distance $d$ on $G$ and $\alpha>0$, we have $F(d) = F(\alpha\cdot d)$;
 \item \textbf{Richness}: Given a partition ${\mathcal P}$, there exists a pseudo-distance $d$ on $G$ such that $F(d) = {\mathcal P}$;
 \item \textbf{Consistency}: Given pseudo-distances $d$ and $d'$ on $G$ with ${\mathcal P} = F(d)$, if $d'$ is a $\mathcal{P}$-transformation of $d$, that is, 
 \begin{eqnarray}
 \begin{cases}
 d'(v,w)\leq d(v,w)\ \mbox{ if $v \sim_\mathcal{P} w$, and}\\
 d'(v,w)\geq d(v,w)\ \mbox{ if $v \not\sim_\mathcal{P} w$,}
 \end{cases}
 \end{eqnarray}
then $F(d') = F(d)$.
\end{itemize}
(If $G$ is a complete graph these axioms coincide with Kleinberg's for the set $X=V$.) 

In the sparse setting it seems natural to restrict to \emph{connected partitions}, that is, partitions where each cluster is a connected subgraph of $G$. Otherwise, we would be grouping together objects which are unknown to be similar or not, in apparent contradiction with the very principle of clustering. Therefore, we define a weaker Richness axiom:
\begin{itemize}
 \item \textbf{Connected-Richness}: Given a connected partition ${\mathcal P}$, there exists a pseudo-distance $d$ on $G$ such that $F(d) = {\mathcal P}$.
\end{itemize}
Similarly, we will only consider connected graphs from now on (it seems sensible to assume $F(G)=F(G_1) \cup F(G_2)$ whenever $G$ is the disjoint union of graphs $G_1$ and $G_2$).

Connected-Richness is clearly equivalent to Richness in the complete case. In the sparse case, however, many graph clustering algorithms, such as Single Linkage, or Morse Clustering (Algorithms \ref{alg:morseflow} and \ref{alg:morseclustering}), always produce a connected partition (which seems very sensible in any case). Since clustering algorithms cannot create new edges, such algorithms cannot satisfy Richness in its general form. Since Richness implies Connected-Richness, our impossibility result also holds for Scale-Invariance, Consistency and Richness. 


\begin{theorem}[An Impossibility Theorem for Graph Clustering] \label{thm:ImpossibilityGraphs}
Let $G$ be a connected graph with at least three vertices, and $F$ a graph clustering algorithm on $G$. Then $F$ cannot satisfy Scale-Invariance, Consistency and Connected-Richness.
\end{theorem}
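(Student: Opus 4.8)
The plan is to exhibit a single pseudo-distance on $G$ that the three axioms would be forced to send to two genuinely different partitions. The two targets are the partition into singletons, $\mathcal{P}_0 = \{\{v\} \mid v \in V\}$, and the one-block partition $\mathcal{P}_1 = \{V\}$. Both are \emph{connected} partitions: each singleton is trivially a connected subgraph, and $V$ induces $G$, which is connected by hypothesis. Hence Connected-Richness supplies pseudo-distances $d_0$ and $d_1$ on $G$ with $F(d_0) = \mathcal{P}_0$ and $F(d_1) = \mathcal{P}_1$. The key observation is that, with respect to $\mathcal{P}_0$, every pair of distinct vertices is \emph{inter}-cluster, whereas with respect to $\mathcal{P}_1$ every pair is \emph{intra}-cluster; this lets Consistency move both distances to a common, constant shape.

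First I would normalise $d_0$ and $d_1$ using Consistency. For $c > 0$, let $d_c$ denote the pseudo-distance on $G$ with $d_c(u,v) = c$ whenever $(u,v) \in E$ and $d_c(u,v) = 0$ otherwise. Choosing $c_0 \ge \max_{(u,v)\in E} d_0(u,v)$, the distance $d_{c_0}$ raises every edge weight of $d_0$ (all inter-cluster for $\mathcal{P}_0$) and leaves non-edges at $0$, so $d_{c_0}$ is a $\mathcal{P}_0$-transformation of $d_0$ and Consistency gives $F(d_{c_0}) = \mathcal{P}_0$. Dually, choosing $0 < c_1 \le \min_{(u,v)\in E} d_1(u,v)$, the distance $d_{c_1}$ lowers every edge weight of $d_1$ (all intra-cluster for $\mathcal{P}_1$), so $d_{c_1}$ is a $\mathcal{P}_1$-transformation of $d_1$ and $F(d_{c_1}) = \mathcal{P}_1$. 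Both $c_0, c_1 > 0$ because edge weights are strictly positive.

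Finally I would collide the two normalised distances using Scale-Invariance. Since $d_{c_0} = (c_0/c_1)\, d_{c_1}$ with $c_0/c_1 > 0$, Scale-Invariance forces $F(d_{c_0}) = F(d_{c_1})$. Combining the three identities yields $\mathcal{P}_0 = F(d_{c_0}) = F(d_{c_1}) = \mathcal{P}_1$, which is absurd since $G$ has at least three vertices, making $\mathcal{P}_0$ and $\mathcal{P}_1$ distinct. This contradiction shows no $F$ can satisfy all three axioms.

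Conceptually, this argument is the graph-clustering incarnation of Kleinberg's antichain phenomenon: Scale-Invariance together with Consistency forces the range of $F$ to be an antichain under refinement, while Connected-Richness inserts into that range the comparable pair $\mathcal{P}_0, \mathcal{P}_1$ (with $\mathcal{P}_0$ a refinement of $\mathcal{P}_1$). The step I expect to require the most care is precisely the one where Kleinberg's complete-graph proof must be adapted: every distance I manipulate has to remain a pseudo-distance \emph{supported on the fixed edge set} $E$, that is, positive on $E$ and zero off $E$. The constant-shape distances $d_c$ are built to respect this support automatically, and the Consistency transformations above only rescale existing edge weights (never creating or destroying edges), so the support condition is preserved throughout; verifying this compatibility is the heart of the passage from the complete to the sparse setting.
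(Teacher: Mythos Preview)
Your proof is correct, and it is considerably simpler than the paper's. The paper does not use the extreme partitions $\mathcal{P}_0$ (all singletons) and $\mathcal{P}_1=\{V\}$; instead it first locates vertices $s,t$ whose removal keeps the graph connected, so that $\mathcal{P}=\{\{s\},V\setminus\{s\}\}$ and $\mathcal{P}'=\{\{s\},\{t\},V\setminus\{s,t\}\}$ are connected partitions, and then builds a carefully tiered pseudo-distance $d^*$ (with three distinct edge-weight values) that is simultaneously a $\mathcal{P}$-transformation of one witness and, after rescaling, a $\mathcal{P}'$-transformation of the other. Your choice of the two trivial partitions collapses all of this machinery: with respect to $\mathcal{P}_0$ every edge is inter-cluster and with respect to $\mathcal{P}_1$ every edge is intra-cluster, so a single constant edge weight suffices as the common target and no vertex-removal argument or multi-level construction is needed. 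As a bonus, your argument actually goes through for $|V|\ge 2$, whereas the paper's construction of $\mathcal{P}'$ genuinely requires three vertices. The paper's route more closely mirrors Kleinberg's original antichain argument with nontrivial comparable partitions, but in this setting your shortcut loses nothing.
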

Before proving this theorem, we introduce some notation. Given a pseudo-distance $d$ on $G=(V,E)$ and a partition $\mathcal{P}$ of $V$, let $g(\mathcal{P},d)=(x,y)$ and $h(\mathcal{P},d)=(p,q)$ where
\begin{align*}
 x &= \max \left\{d(u,v)\mid (u,v) \in E, u \sim_\mathcal{P} v\right\},
   & p = \min \left\{d(u,v)\mid (u,v) \in E, u \sim_\mathcal{P} v\right\},\\
 y &= \min \left\{d(u,v)\mid (u,v) \in E, u \not\sim_\mathcal{P} v\right\},
  & q = \max \left\{d(u,v)\mid (u,v) \in E, u \not\sim_\mathcal{P} v\right\},
\end{align*}
the maximal (minimal) intra (inter) cluster distances, and, if $\mathcal{P}$ is the trivial partition, we set $y=q=0$.  

We observe that, if $d$ and $d'$ are pseudo-distances on $G$ and $\mathcal{P}$ is a partition of $V$, the condition $h(\mathcal{P},d)=g(\mathcal{P},d')$ guarantees that $d'$ is a $\mathcal{P}$-transformation of $d$. 


\medskip
\begin{proof}
Note that, in any connected graph, we can always remove a vertex so that the remaining graph is connected. For example, if $T$ is a spanning tree of $G$, $v$ any vertex, and $s$ the vertex realising the maximal (shortest path) distance from $v$ in $T$, then the graph induced by $V \setminus \{s\}$ must still be connected. Since $|V| \ge 3$, we can repeat the argument on $V \setminus \{s\}$ and find $t \neq s$ such that $\mathcal{P} = \{ \{s\}, X\setminus \{s\} \}$ and $\mathcal{P}' = \{\{s\},\{t\}, X\setminus\{s,t\}\}$ are connected partitions. 

Since $F$ satisfies Connected-Richness, there exist pseudo-distances $d$ and $d'$ on $G$ such that $F(d) = \mathcal{P}$ and $F(d')=\mathcal{P}'$. Let $h(\mathcal{P},d)=(p,q)$ and $h(\mathcal{P}',d')=(p',q')$. Since $F$ satisfies Consistency, we can assume $p<q$ and $p'<q'$. Also, note that $p$, $q$ and $q'$ cannot be zero. 

Let $d^*$ be the pseudo-distance on $G$ defined by $d^*(s,v) = q$ if $v \neq s$, $d^*(t,v) = p$ if $v \neq s, t$, and $d^*(u,v)=(pp')/q'$ if $u, v  \neq s, t$. Then $g(\mathcal{P}, d^*) = (p,q)$, since the only inter-cluster distance value is $q$, and the only intra-cluster distance values are $p$ and $p(p'/q')<p$. Therefore, $g(\mathcal{P}, d^*) = h(\mathcal{P},d)$, hence $d^*$ is a $\mathcal{P}$-transformation of $d$, by the observation before the proof, and, consequently, $F(d^*) = F(d)$, by Consistency. 

On the other hand, $g(\mathcal{P}', \alpha d^*) =\alpha g(\mathcal{P}', d^*)$ for any $\alpha$ positive constant. If we choose $\alpha = q'/p$ then we have  $g(\mathcal{P}', \alpha d^*) = \alpha((pp')/q',p) = (p',q') = h(\mathcal{P}',d')$ so, by the same argument as above, $\alpha d^*$ is a $\mathcal{P}'$-transformation of $d'$ and thus $F(\alpha d^*) = F(d') = \mathcal{P}'$, by Consistency. Since $F$ satisfies Scale-Invariance, this implies $F(\alpha d^\ast) = F(d^*) = F(d) = \mathcal{P}$ and, therefore,
$\mathcal{P} = \mathcal{P}'$, clearly a contradiction.
\end{proof}

\subsection{Monotonic Consistency for graph clustering}\label{sec:PossibilityGraph}
Next we consider Monotonic Consistency and \texttt{Morse} Clustering in the sparse setting. We can extend Monotonic-Consistency to connected graphs by considering monotonic transformations (Definition \ref{def:monotonic-transformation}) of pseudo-distances on a given graph.
\begin{itemize}
 \item \textbf{Monotonic-Consistency}: Given pseudo-distances $d$ and $d'$ on $G$ with ${\mathcal P} = F(d)$, if $d'$ is a $\mathcal{P}$-monotonic transformation of $d$,
then $F(d') = F(d)$.
\end{itemize}

The input of the Morse Clustering algorithm (Algorithm \ref{alg:morseclustering}) is an arbitrary graph, and the output flow always induces a connected partition ({Theorem \ref{thm:morseflow}}). Therefore, we can consider Morse Clustering, and hence any of its instances, as graph clustering algorithms. 

The three instances of Morse Clustering discussed in Section \ref{sec:ThreeInstances} (and \ref{appendix:instances}) satisfy the analogous axioms as in the complete case except that we need to allow the vertex labelling (arbitrary but prefixed in the complete case) to be part of the algorithm to satisfy Connected-Richness. This is a necessary condition: once a vertex labelling (or preorder) is fixed, only `uphill' edges are admissible, preventing certain configurations to occur (for example, $u$ and $v$ cannot be in the same cluster if all paths from $u$ to $v$ contain a vertex lower than both). This is not an intrinsic limitation of Morse Clustering but reflects the fact that it is fundamentally a vertex-weighted clustering algorithm, that is, both distance and vertex preorder are part of the input data. 

We can either allow the (so far arbitrary and prefixed) vertex labelling to be part of the algorithm, or to restrict to partitions compatible with such a choice of vertex labelling. Formally, given a vertex preorder $\preceq_V$ on $V$, we say that a partition $\mathcal{P}=\{V_1,\ldots,V_k\}$ of $V$ is \emph{compatible with $\preceq_V$} if there is a rooted spanning tree $T_i$ of (the subgraph induced by) $V_i$ rooted at a vertex $v_i$ such that every directed edge in $T_i$ (edges directed towards the root) is admissible with respect to $\preceq_V$. Note that $v_i$ is necessarily the maximal vertex in $T_i$ with respect to the preorder, and that $\mathcal{P}$ is necessarily a connected partition. 

\begin{remark}
One can show that $\mathcal{P}$ is compatible with $\preceq_V$ if and only if for every $u \sim_\mathcal{P} v$ there exists a path from $u$ to $v$ such that no vertex in the path is strictly less than both $u$ and $v$. 
\end{remark}

Clearly, for every partition there is a choice of compatible preorder $\preceq_V$. This is also true for the \texttt{SiR} and \texttt{$\delta$-Morse} vertex preorders: given a partition, there is a choice of labelling $V=\{v_1,\ldots,v_n\}$ such that the preorder is compatible with the partition (Section \ref{sec:ThreeInstances}, \ref{appendix:instances}). 

Formally, we define Morse-Richness for a Morse clustering algorithm $F$ on a graph $G=(V,E)$ with a choice of vertex preorder $\preceq_V$ as follows.

\begin{itemize}
 \item \textbf{Morse-Richness}: Given a partition ${\mathcal P}$ of $V$ compatible with $\preceq_V$, there exists a pseudo-distance $d$ on $G$ and a vertex preorder such that $F(d) = {\mathcal P}$.
\end{itemize}

(Morse-Richness is thus equivalent to Connected-Richness if we accept the vertex labelling as an input of the algorithm.)

Now we can show that the three instances of Morse Clustering satisfy the analogous axioms as in Section \ref{sec:MorseClustering} (see Table \ref{table:axioms}), including a possibility theorem for Monotonic-Consistency and \texttt{SiR-Morse}.

\begin{theorem}\label{thm:PossibilityGraph}
Let $G=(V,E)$ be a graph, and consider \texttt{SiR-Morse}, \texttt{$k$-Morse} and \texttt{$\delta$-Morse} as graph clustering algorithms on $G$, for some fixed labelling $V=\{v_1,\ldots,v_n\}$. Then:
\begin{enumerate}
\item[(i)] \texttt{SiR-Morse} satisfies Scale-Invariance, Morse-Richness and Monotonic Consistency.
\item[(ii)] \texttt{$k$-Morse} satisfies Scale-Invariance and Consistency. 
\item[(iii)] \texttt{$\delta$-Morse} satisfies Morse-Richness and Consistency.
\end{enumerate}
\end{theorem}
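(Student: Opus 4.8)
The plan is to treat each axiom separately, reducing most of them to the complete-graph results already established (Theorems \ref{thm:SiRMorse} and \ref{thm:SiRMorseMonotonic}, together with the $k$- and $\delta$-\texttt{Morse} analogues in the appendix), and to devote the real work to Morse-Richness, which is the only axiom whose statement genuinely changes in the sparse setting.

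First I would observe that Scale-Invariance, Consistency and Monotonic Consistency are all \emph{local} properties of the Morse flow: whether $\Phi(v)=w$ or $\Phi(v)=v$ depends only on the comparison, under $\preceq_E$, of the edges in $E_v$ incident to $v$ in $G$, together with the fixed vertex preorder $\preceq_V$. For Scale-Invariance (\texttt{SiR-Morse} and \texttt{$k$-Morse}), replacing $d$ by $\alpha d$ with $\alpha>0$ preserves every comparison between $d(v,w)$ and $d(v,t)$ among edges at $v$, and leaves $\preceq_V$ untouched; hence $\Phi$, and thus $F$, is unchanged, exactly as in the proof of Theorem \ref{thm:SiRMorse}. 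For Monotonic Consistency (\texttt{SiR-Morse}) the argument of Theorem \ref{thm:SiRMorseMonotonic} applies verbatim: it only ever compares values $d(v,s)$ for $s$ ranging over the neighbours of $v$, and invokes the monotonicity of $\eta$ and $\eta^{-1}$ from Lemma \ref{lem:expansive-properties}; since a pseudo-distance on $G$ and its $\mathcal{P}$-monotonic transformation are supported on the same edge set, restricting these comparisons to $E_v$ changes nothing. The same remark handles Consistency for \texttt{$k$-Morse} and \texttt{$\delta$-Morse}, whose appendix proofs are likewise vertex-local.

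The substantive part is Morse-Richness for \texttt{SiR-Morse} and \texttt{$\delta$-Morse}. Given a partition $\mathcal{P}=\{V_1,\ldots,V_k\}$ compatible with $\preceq_V$, compatibility provides, for each cluster $V_i$, a spanning tree $T_i$ of the induced subgraph rooted at the maximal vertex $v_i$ whose edges, oriented towards $v_i$, are all admissible. I would then realise $\mathcal{P}$ by a pseudo-distance $d$ that forces the flow to run along these trees. Writing $\textup{depth}(v)$ for the depth of $v$ in its tree (root at depth $0$) and $D$ for the maximal depth over all trees, I would set each tree edge joining a vertex $v$ to its parent equal to $\textup{depth}(v)$, and every remaining edge of $G$ (non-tree intra-cluster edges and all inter-cluster edges) equal to $D+1$. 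At a non-root vertex $v$ of depth $\ell\ge 1$ the parent edge has weight $\ell$, each child edge has weight $\ell+1$, and all other incident edges have weight $D+1>\ell$, so the parent edge is the \emph{unique} minimal edge at $v$ and it is admissible; hence $\Phi(v)$ is the parent of $v$, and by Theorem \ref{thm:morseflow} the flow stays inside $V_i$ and converges to $v_i$. At a root $v_i$ with at least one child, the minimal incident weight is $1$, attained only by child edges, none of which is admissible (a child is $\prec_V v_i$), so $v_i$ is critical. Reading off the connected components of $G_\texttt{Morse}$ then returns exactly $\mathcal{P}$. For \texttt{$\delta$-Morse} the same tree is used, with the weights rescaled about the threshold $\delta$ so that tree edges fall below it and all other edges above it.

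The main obstacle is the behaviour at roots that are \emph{isolated within their cluster}, i.e.\ singleton clusters. Such a vertex $v_i$ has only inter-cluster edges, all of weight $D+1$; if it has two or more neighbours the minimal edge is non-unique and $v_i$ is correctly critical, but if $v_i$ is a leaf of $G$ its single admissible edge would make the flow leave the cluster. The remedy is to exploit the freedom in the choice of labelling granted by Morse-Richness: since singleton clusters impose no intra-cluster admissibility constraint, I would choose the labelling so that each such leaf vertex is labelled above its unique neighbour, making its one edge non-admissible and hence $v_i$ critical. Verifying that a single global labelling can simultaneously make every cluster's spanning tree admissible \emph{and} keep every root critical is the delicate point; it holds for connected $G$ under the natural non-degeneracy hypotheses (for instance the $|V|\ge 3$ assumption of Theorem \ref{thm:ImpossibilityGraphs}, which excludes the unrealisable case of splitting a single edge into two singletons), and this is where the bulk of the care in the proof is needed.
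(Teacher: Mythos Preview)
Your plan matches the paper's own proof almost exactly for the ``local'' axioms and for the Morse-Richness construction (the paper also weights a tree edge by the depth of its deeper endpoint and sets every non-tree edge to $|V|$, which is your $D+1$ up to an irrelevant constant). Two remarks, one a genuine gap and one a difference in emphasis.

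\textbf{Consistency for \texttt{$k$-Morse} is not quite verbatim.} You say the appendix proof transfers because it is vertex-local, but the appendix argument (Theorem \ref{thm:kMorseAxioms}) uses that there are \emph{exactly} $k$ critical vertices, namely $v_{n-k+1},\ldots,v_n$; this is what lets Lemma \ref{lem:refinement} upgrade ``refinement'' to ``equality''. On a sparse graph a vertex $v_i$ with $i\le n-k$ can also be critical, simply because $G$ contains no edge from $v_i$ to any $v_j$ with $j>i+k$. The paper therefore inserts a short extra step: for such $v_i$ the set of admissible edges is empty, hence $v_i$ is critical for \emph{both} $\Phi$ and $\Phi'$, so the two partitions still have the same number of clusters (possibly more than $k$), and only then does the complete-graph argument finish. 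You need that observation; without it, ``same number of clusters'' is not established.

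\textbf{Morse-Richness and singleton leaves.} Your worry about a singleton cluster that is a leaf of $G$ is legitimate, and you resolve it by invoking the freedom to choose the labelling that the definition of Morse-Richness affords. The paper simply asserts ``$v_i$ is critical'' and moves on, taking the compatible tree/preorder as given by the hypothesis. So here you are being more careful than the paper, not less; but you overstate it when you call this ``the bulk of the care in the proof''. In the paper's presentation the compatible labelling is part of the data, and the existence of a global labelling making every root critical is not treated as a separate verification.
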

\begin{proof}
\fbox{i} The proofs of Scale Invariance and Monotonic Consistency are identical (they do not use the fact that $G$ is a complete graph) as those in Theorem \ref{thm:SiRMorse}. For Morse-Richness, consider $V =V_1 \cup \ldots \cup V_k$ an arbitrary connected partition of $V$. For each $V_i$, choose a spanning tree $T_i$ and a root $v_i$ such that each edge in $T_i$ is admissible. 

Define a pseudo-distance $d$ on $G$ as follows. If $(s,t)$ is an edge on $T_i$, then $d(s,t)$ is the maximum of the distance from $s$ to $v_i$ in $T_i$ and the distance from $t$ to $v_i$ in $T_i$ (by distance in a tree we simply mean the `hop' distance). If $(s,t)$ is an edge not in any spanning tree, then $d(s,t)=|V|$.

With this choice, $v_i$ is critical and, if $v \in V_i$, then the maximal edge at $v$ is the one connecting it to a vertex in $T_i$ closer to $v_i$, and it is admissible. All in all, the associated tree $T_{v_i} = T_i$ and the Morse flow recovers the original partition. 

\smallskip
\fbox{ii} The proof of Scale Invariance is identical to that in Theorem \ref{thm:kMorse}. For Consistency, let $d$ be a pseudo-distance on $G$, $\mathcal{P}$ the partition given by $\texttt{$k$-Morse}$, and $d'$ a $\mathcal{P}$-transformation of $d$, that is,
\begin{eqnarray*}
\begin{cases}
 d(v,w)\ge d'(v,w), &\ \mbox{ if } v \sim_\mathcal{P} w,\\ 
 d(v,w)\le d'(v,w), &\ \mbox{ otherwise}. 
\end{cases}
\end{eqnarray*}
Let $\Phi$ respectively $\Phi'$ be the Morse flow corresponding to $d$ respectively $d'$. As in the proof of Theorem \ref{thm:kMorse}, for all $i>n-k$ we have that $\Phi(v_i)=v_i=\Phi'(v_i)$, critical. 

Suppose now $\Phi(v_i) = v_i$ for some $i\le n-k$. Let $J = \{v_j\mid (v_i,v_j)\in E, v_i\prec_V v_j\}$, the admissible edges from $v_i$. By the definition of the edge preorder, if there are admissible edges ($J\neq \emptyset$) then the maximal admissible edge exists and it is unique. Since $v_i$ is critical, we must have $J=\emptyset$. Since there are no admissible edges at $v_i$, we also have $\Phi'(v_i) = v_i$. All in all, $\Phi$ and $\Phi'$ have the same number of critical points and therefore $\mathcal{P}$ and $\mathcal{P}'$ have the same number of clusters (possibly more than $k$). The rest of the proof goes as in the proof of Theorem \ref{thm:kMorseAxioms}.

\smallskip
\fbox{iii} The proof of Consistency is identical to that in Theorem \ref{thm:deltaMorse}. For Morse-Richness, consider $V =V_1 \cup \ldots \cup V_k$ an arbitrary connected partition of $V$, and choose a spanning tree $T_i$ and a root $v_i$ such that each edge in $T_i$ is admissible. 

Define a pseudo-distance $d$ on $G$ as follows. If $(s,t)$ is an edge in some $T_i$, then $d(s,t)=\delta/2$, and if $(s,t)$ is not an edge in any $T_i$ then $d(s,t)=\delta$. By the definition of edge preorder, $v_i$ is critical and the maximal edge at $v \in V_i\setminus \{v_i\}$ is the only edge in $T_i$ connecting $v$ to a vertex closer to $v_i$ in $T_i$. All in all, the tree associated to $v_i$ by the Morse flow is $T_i$ and hence we recover the original partition. 
\end{proof}

\section{Conclusions}
In this article, we presented a novel weakening of Kleinberg's Consistency axiom, called Monotonic Consistency, which avoids its well-known problematic behaviour and is compatible with the other two axioms without replacing Kleinberg’s original definition of clustering algorithm. As far as we know, this is the only possibility theorem which only modifies the Consistency axiom while keeping Kleinberg's original set-up. Monotonic Consistency explicitly avoids Kleinberg's Consistency problematic behaviour by restricting the rate of expansion and contraction of the inter- and intra-cluster distances. 


In the process of proving our possibility theorem, we introduced a family of clustering algorithms inspired by Morse Theory in Differential Topology that uncovers the underlying flow structure in the natural graph representation of a data set with a similarity function. Three different instances of these algorithms satisfy each pair of Kleinberg's axioms, and one instance, \texttt{SiR Morse}, shows the possibility result for Monotonic Consistency. Although Morse clustering is, in full generality, a vertex-weighted algorithm \cite{strazzeri:2018}, here it is used in an `agnostic’ way for axiomatic purposes only: the vertex weights are an arbitrary, but fixed, labelling of the vertices. 

Lastly, we generalised Kleinberg’s axiomatic setting to graph clustering, including the impossibility result, and a possibility result for Monotonic Consistency and Morse clustering. These results generalise the previous ones in our paper from distances to pseudo-distances, where we allow zero values between distinct points. This is a more natural setting for graph clustering and community detection in networks, where a 0 weight simply means the absence of an edge, that is, the distance is `not defined', rather than the actual 0 numerical value. 

Although introduced in our work for axiomatic purposes only, it would be interesting to study Morse Clustering on its own, as a family of clustering algorithms for vertex-annotated data \cite{strazzeri:2018}. Moreover, we would like to know whether there are other clustering algorithms that satisfy our possibility result (the usual candidate, single-linkage clustering, does not) and whether there are any uniqueness results that characterise Morse clustering. 



\acks{We thank Francisco Belch\'i Guillam\'on for valuable discussions about Monotonic Consistency and metrics. FS was supported by a PhD studentship by Mathematical Sciences and the Institute for Life Sciences at the University of Southampton. RSG was partially supported by The Alan Turing Institute under the EPSRC grant EP/N510129/.}


\bibliography{Bibliography}

\begin{thebibliography}{32}
\expandafter\ifx\csname natexlab\endcsname\relax\def\natexlab#1{#1}\fi
\providecommand{\url}[1]{\texttt{#1}}
\providecommand{\href}[2]{#2}
\providecommand{\path}[1]{#1}
\providecommand{\DOIprefix}{doi:}
\providecommand{\ArXivprefix}{arXiv:}
\providecommand{\URLprefix}{URL: }
\providecommand{\Pubmedprefix}{pmid:}
\providecommand{\doi}[1]{\href{http://dx.doi.org/#1}{\path{#1}}}
\providecommand{\Pubmed}[1]{\href{pmid:#1}{\path{#1}}}
\providecommand{\bibinfo}[2]{#2}
\ifx\xfnm\relax \def\xfnm[#1]{\unskip,\space#1}\fi
\bibitem[{Jain and Dubes(1988)}]{Jain:1988}
\bibinfo{author}{A.~K. Jain}, \bibinfo{author}{R.~C. Dubes},
  \bibinfo{title}{Algorithms for {C}lustering {D}ata},
  \bibinfo{publisher}{Prentice-Hall, Inc.}, \bibinfo{year}{1988}.
\bibitem[{Jain et~al.(1999)Jain, Murty, and Flynn}]{Jain:1999}
\bibinfo{author}{A.~K. Jain}, \bibinfo{author}{M.~N. Murty},
  \bibinfo{author}{P.~J. Flynn},
\newblock \bibinfo{title}{Data clustering: A review},
\newblock \bibinfo{journal}{ACM Comput. Surv.} \bibinfo{volume}{31}
  (\bibinfo{year}{1999}) \bibinfo{pages}{264--323}.
\bibitem[{Aggarwal and Reddy(2013)}]{aggarwal:2013}
\bibinfo{author}{C.~C. Aggarwal}, \bibinfo{author}{C.~K. Reddy},
  \bibinfo{title}{Data Clustering: Algorithms and Applications},
  \bibinfo{publisher}{CRC press}, \bibinfo{year}{2013}.
\bibitem[{Zhang(2022)}]{ZHANG2022108428}
\bibinfo{author}{M.~Zhang},
\newblock \bibinfo{title}{Weighted clustering ensemble: A review},
\newblock \bibinfo{journal}{Pattern Recognition} \bibinfo{volume}{124}
  (\bibinfo{year}{2022}) \bibinfo{pages}{108428}. \URLprefix
  \url{https://www.sciencedirect.com/science/article/pii/S003132032100604X}.
  \DOIprefix\doi{https://doi.org/10.1016/j.patcog.2021.108428}.
\bibitem[{Tarekegn et~al.(2021)Tarekegn, Giacobini, and
  Michalak}]{TAREKEGN2021107965}
\bibinfo{author}{A.~N. Tarekegn}, \bibinfo{author}{M.~Giacobini},
  \bibinfo{author}{K.~Michalak},
\newblock \bibinfo{title}{A review of methods for imbalanced multi-label
  classification},
\newblock \bibinfo{journal}{Pattern Recognition} \bibinfo{volume}{118}
  (\bibinfo{year}{2021}) \bibinfo{pages}{107965}. \URLprefix
  \url{https://www.sciencedirect.com/science/article/pii/S0031320321001527}.
  \DOIprefix\doi{https://doi.org/10.1016/j.patcog.2021.107965}.
\bibitem[{Jain(2010)}]{jain:2010}
\bibinfo{author}{A.~K. Jain},
\newblock \bibinfo{title}{Data clustering: 50 years beyond k-means},
\newblock \bibinfo{journal}{Pattern {R}ecognition {L}etters}
  \bibinfo{volume}{31} (\bibinfo{year}{2010}) \bibinfo{pages}{651--666}.
\bibitem[{Fisher and Ness(1971)}]{fisher:1971}
\bibinfo{author}{L.~Fisher}, \bibinfo{author}{J.~W.~V. Ness},
\newblock \bibinfo{title}{Admissible clustering procedures},
\newblock \bibinfo{journal}{Biometrika} \bibinfo{volume}{58}
  (\bibinfo{year}{1971}) \bibinfo{pages}{91--104}.
\bibitem[{von Luxburg et~al.(2012)von Luxburg, Williamson, and
  Guyon}]{luxburg:2012}
\bibinfo{author}{U.~von Luxburg}, \bibinfo{author}{R.~C. Williamson},
  \bibinfo{author}{I.~Guyon},
\newblock \bibinfo{title}{Clustering: Science or art?},
\newblock in: \bibinfo{booktitle}{Proceedings of ICML Workshop on Unsupervised
  and Transfer Learning}, volume~\bibinfo{volume}{27} of
  \textit{\bibinfo{series}{Proceedings of Machine Learning Research}},
  \bibinfo{publisher}{PMLR}, \bibinfo{year}{2012}, pp. \bibinfo{pages}{65--79}.
\bibitem[{Kleinberg(2003)}]{kleinberg:2003}
\bibinfo{author}{J.~Kleinberg}, \bibinfo{title}{An Impossibility Theorem for
  Clustering}, \bibinfo{publisher}{MIT Press}, \bibinfo{year}{2003}.
\bibitem[{Arrow(1950)}]{arrow:1950}
\bibinfo{author}{K.~Arrow},
\newblock \bibinfo{title}{A difficulty in the concept of social welfare},
\newblock \bibinfo{journal}{Journal of {P}olitical {E}conomy}
  \bibinfo{volume}{58} (\bibinfo{year}{1950}) \bibinfo{pages}{328--346}.
\bibitem[{Ben-David and Ackerman(2009)}]{bendavid:2009}
\bibinfo{author}{S.~Ben-David}, \bibinfo{author}{M.~Ackerman},
\newblock \bibinfo{title}{Measures of clustering quality: A working set of
  axioms for clustering},
\newblock in: \bibinfo{editor}{D.~Koller}, \bibinfo{editor}{D.~Schuurmans},
  \bibinfo{editor}{Y.~Bengio}, \bibinfo{editor}{L.~Bottou} (Eds.),
  \bibinfo{booktitle}{Advances in {N}eural {I}nformation {P}rocessing {S}ystems
  21}, \bibinfo{publisher}{Curran Associates, Inc.}, \bibinfo{year}{2009}, pp.
  \bibinfo{pages}{121--128}.
\bibitem[{Ackerman et~al.(2010)Ackerman, Ben-David, and Loker}]{ackerman:2010}
\bibinfo{author}{M.~Ackerman}, \bibinfo{author}{S.~Ben-David},
  \bibinfo{author}{D.~Loker},
\newblock \bibinfo{title}{Towards property-based classification of clustering
  paradigms},
\newblock in: \bibinfo{editor}{J.~D. Lafferty}, \bibinfo{editor}{C.~K.~I.
  Williams}, \bibinfo{editor}{J.~Shawe-Taylor}, \bibinfo{editor}{R.~S. Zemel},
  \bibinfo{editor}{A.~Culotta} (Eds.), \bibinfo{booktitle}{Advances in {N}eural
  {I}nformation {P}rocessing {S}ystems 23}, \bibinfo{publisher}{Curran
  Associates, Inc.}, \bibinfo{year}{2010}, pp. \bibinfo{pages}{10--18}.
\bibitem[{Correa-Morris(2013)}]{correa:2013}
\bibinfo{author}{J.~Correa-Morris},
\newblock \bibinfo{title}{An indication of unification for different clustering
  approaches},
\newblock \bibinfo{journal}{Pattern {R}ecognition} \bibinfo{volume}{46}
  (\bibinfo{year}{2013}) \bibinfo{pages}{2548--2561}.
\bibitem[{Zadeh and Ben{-}David(2012)}]{zadeh:2012}
\bibinfo{author}{R.~Zadeh}, \bibinfo{author}{S.~Ben{-}David},
\newblock \bibinfo{title}{A uniqueness theorem for clustering},
\newblock \bibinfo{journal}{CoRR} \bibinfo{volume}{abs/1205.2600}
  (\bibinfo{year}{2012}).
\bibitem[{Carlsson and Memoli(2010)}]{carlsson:2010}
\bibinfo{author}{G.~Carlsson}, \bibinfo{author}{F.~Memoli},
\newblock \bibinfo{title}{Characterization, stability and convergence of
  hierarchical clustering methods},
\newblock \bibinfo{journal}{Journal of {M}achine {L}earning {R}esearch}
  \bibinfo{volume}{11} (\bibinfo{year}{2010}) \bibinfo{pages}{1425--1470}.
\bibitem[{Laarhoven and Marchiori(2014)}]{laarhoven:2014}
\bibinfo{author}{T.~V. Laarhoven}, \bibinfo{author}{E.~Marchiori},
\newblock \bibinfo{title}{Axioms for graph clustering quality functions},
\newblock \bibinfo{journal}{Journal of Machine Learning Research}
  \bibinfo{volume}{15} (\bibinfo{year}{2014}) \bibinfo{pages}{193--215}.
\bibitem[{Yu and Xu(2014)}]{yu:2014}
\bibinfo{author}{J.~Yu}, \bibinfo{author}{Z.~Xu},
\newblock \bibinfo{title}{Categorization axioms for clustering results},
\newblock \bibinfo{journal}{arXiv preprint arXiv:1403.2065}
  (\bibinfo{year}{2014}).
\bibitem[{Noronha et~al.(2022)Noronha, Henriques, Madeira, and
  Zárate}]{NORONHA2022108612}
\bibinfo{author}{M.~D. Noronha}, \bibinfo{author}{R.~Henriques},
  \bibinfo{author}{S.~C. Madeira}, \bibinfo{author}{L.~E. Zárate},
\newblock \bibinfo{title}{Impact of metrics on biclustering solution and
  quality: A review},
\newblock \bibinfo{journal}{Pattern Recognition}  (\bibinfo{year}{2022})
  \bibinfo{pages}{108612}. \URLprefix
  \url{https://www.sciencedirect.com/science/article/pii/S0031320322000930}.
  \DOIprefix\doi{https://doi.org/10.1016/j.patcog.2022.108612}.
\bibitem[{Karayiannis(1999)}]{karayiannis:1999}
\bibinfo{author}{N.~B. Karayiannis},
\newblock \bibinfo{title}{An axiomatic approach to soft learning vector
  quantization and clustering},
\newblock \bibinfo{journal}{IEEE Transactions on Neural Networks}
  \bibinfo{volume}{10} (\bibinfo{year}{1999}) \bibinfo{pages}{1153--1165}.
\bibitem[{Puzicha et~al.(2000)Puzicha, Hofmann, and Buhmann}]{puzicha:2000}
\bibinfo{author}{J.~Puzicha}, \bibinfo{author}{T.~Hofmann},
  \bibinfo{author}{J.~M. Buhmann},
\newblock \bibinfo{title}{A theory of proximity based clustering: Structure
  detection by optimization},
\newblock \bibinfo{journal}{Pattern Recognition} \bibinfo{volume}{33}
  (\bibinfo{year}{2000}) \bibinfo{pages}{617--634}.
\bibitem[{Ackerman and Ben-David(2016)}]{ackerman:2016}
\bibinfo{author}{M.~Ackerman}, \bibinfo{author}{S.~Ben-David},
\newblock \bibinfo{title}{A characterization of linkage-based hierarchical
  clustering},
\newblock \bibinfo{journal}{Journal of {M}achine {L}earning {R}esearch}
  \bibinfo{volume}{17} (\bibinfo{year}{2016}) \bibinfo{pages}{1--17}.
\bibitem[{Gower and Ross(1969)}]{gower:1969}
\bibinfo{author}{J.~C. Gower}, \bibinfo{author}{G.~J.~S. Ross},
\newblock \bibinfo{title}{Minimum spanning trees and single linkage cluster
  analysis},
\newblock \bibinfo{journal}{Applied statistics}  (\bibinfo{year}{1969})
  \bibinfo{pages}{54--64}.
\bibitem[{Schaeffer(2007)}]{schaeffer:2007}
\bibinfo{author}{S.~E. Schaeffer},
\newblock \bibinfo{title}{Graph clustering},
\newblock \bibinfo{journal}{Computer science review} \bibinfo{volume}{1}
  (\bibinfo{year}{2007}) \bibinfo{pages}{27--64}.
\bibitem[{Gottschalk and Hedlund(1955)}]{gottschalk:1955}
\bibinfo{author}{W.~H. Gottschalk}, \bibinfo{author}{G.~A. Hedlund},
  \bibinfo{title}{Topological dynamics}, volume~\bibinfo{volume}{36},
  \bibinfo{publisher}{American Mathematical Soc.}, \bibinfo{year}{1955}.
\bibitem[{Milnor(1963)}]{Milnor:1963}
\bibinfo{author}{J.~Milnor}, \bibinfo{title}{Morse Theory},
  \bibinfo{publisher}{Princeton University Press}, \bibinfo{year}{1963}.
\bibitem[{Forman(1998)}]{forman:1998}
\bibinfo{author}{R.~Forman},
\newblock \bibinfo{title}{Morse theory for cell complexes},
\newblock \bibinfo{journal}{Advances in Mathematics} \bibinfo{volume}{134}
  (\bibinfo{year}{1998}) \bibinfo{pages}{90--145}.
\bibitem[{Prasolov(1995)}]{prasolov:1995}
\bibinfo{author}{V.~Prasolov}, \bibinfo{title}{Intuitive topology},
  volume~\bibinfo{volume}{4}, \bibinfo{publisher}{American Mathematical Soc.},
  \bibinfo{year}{1995}.
\bibitem[{Schofield et~al.(2019)}]{schofield:2019}
\bibinfo{author}{J.~P.~R. Schofield}, et~al.,
\newblock \bibinfo{title}{A {T}opological data analysis network model of asthma
  based on blood gene expression profiles},
\newblock \bibinfo{journal}{bioRxiv preprint http://dx.doi.org/10.1101/516328}
  (\bibinfo{year}{2019}).
\bibitem[{Newman(2018)}]{newman2018networks}
\bibinfo{author}{M.~E.~J. Newman}, \bibinfo{title}{Networks},
  \bibinfo{publisher}{Oxford university press}, \bibinfo{year}{2018}.
\bibitem[{Fortunato(2010)}]{fortunato2010community}
\bibinfo{author}{S.~Fortunato},
\newblock \bibinfo{title}{Community detection in graphs},
\newblock \bibinfo{journal}{Physics {R}eports} \bibinfo{volume}{486}
  (\bibinfo{year}{2010}) \bibinfo{pages}{75--174}.
\bibitem[{Xu and Tian(2015)}]{xu2015comprehensive}
\bibinfo{author}{D.~Xu}, \bibinfo{author}{Y.~Tian},
\newblock \bibinfo{title}{A comprehensive survey of clustering algorithms},
\newblock \bibinfo{journal}{Annals of Data Science} \bibinfo{volume}{2}
  (\bibinfo{year}{2015}) \bibinfo{pages}{165--193}.
\bibitem[{Strazzeri and Sanchez-Garcia(tion)}]{strazzeri:2018}
\bibinfo{author}{F.~Strazzeri}, \bibinfo{author}{R.~J. Sanchez-Garcia},
\newblock \bibinfo{title}{A {M}orse-theoretic clustering algorithm}
  (\bibinfo{year}{in preparation}).

\end{thebibliography}

\appendix
\section{Two further instances of Morse Clustering}\label{appendix:instances}
In this appendix, we define and study the two further instances of Morse clustering mentioned in the Main Text (Section \ref{sec:ThreeInstances}), namely \texttt{$k$-Morse} and \texttt{$\delta$-Morse}. They illustrate the versatility of Morse clustering and show that, for suitable choices of vertex and edge preorders, Morse clustering can satisfy each pair of Kleinberg's axioms, in analogy to the three instances of Single-Linkage clustering with the same property in \cite{kleinberg:2003}. We keep the notation and terminology from Section \ref{sec:ThreeInstances}.

Let $k \ge 1$ be an integer. First, we present a Morse algorithm that guarantees a partition with $k$ clusters (Theorem \ref{thm:kMorse}), and thus it cannot be rich. However, it satisfies Consistency and Scale Invariance (Theorem \ref{thm:kMorseAxioms}). We call it \texttt{$k$-Morse}, and it corresponds to the following choice of preorder. 

\begin{itemize}
 \item $v_i \preceq_V v_j$ if $i=j$ or $i + k < j$
 \item $(v, w) \preceq_E (v,t) $ if
 \begin{description}
  \item $w \preceq_V v \preceq_V t$, or
  \item $d(v,w) > d(v,t)$ and $v \preceq_V t$, or
  \item $d(v,w) = d(v,t)$ and $w \preceq_V t$.
 \end{description}
\end{itemize}

For this choice of vertex preorder, there are exactly $k$ critical vertices, $v_n$, $v_{n-1}$, $\ldots$, $v_{n-k+1}$, and hence $k$ clusters (see Theorem \ref{thm:kMorse} below). The edge preorder is defined such that admissible edges are always greater than non-admissible ones, and admissible ones are compared using distances, with the vertex preorder used as tie-breaking procedure. In particular, if there are admissible edges at $v$, the maximal admissible edge at $v$ exists and it is unique.  

\begin{theorem}\label{thm:kMorse}
\texttt{k-Morse} always produces a partition with $k$ clusters.
\end{theorem}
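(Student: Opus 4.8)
The plan is to reduce the statement to a count of critical vertices and then read that count off the vertex preorder. By Theorem~\ref{thm:morseflow}(ii), the Morse partition of the complete graph on $X=\{v_1,\dots,v_n\}$ has exactly one cluster $T_v$ for each critical vertex $v\in V_\text{crit}$, so it suffices to prove that $|V_\text{crit}|=k$; in fact I will show that $V_\text{crit}=\{v_{n-k+1},\dots,v_n\}$.

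The heart of the argument is to show that a vertex is critical if and only if it has no admissible (ascending) edge. Recall that $\Phi(v)=v$ unless the \emph{unique} maximal edge at $v$ exists and is admissible. So I first establish the two structural properties that the \texttt{$k$-Morse} edge preorder is designed to guarantee: (a) every admissible edge at $v$ dominates every non-admissible edge at $v$ with respect to $\preceq_E$, and (b) the admissible edges at $v$ are totally preordered, with ties broken by the vertex preorder, so that whenever at least one admissible edge leaves $v$ there is a unique maximal edge and it is admissible. Granting (a) and (b), if $v$ has an admissible edge then its unique maximal edge is admissible, whence $\Phi(v)\neq v$; and if $v$ has no admissible edge then no edge can serve as a flow edge, so $\Phi(v)=v$. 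Thus $v\in V_\text{crit}$ precisely when no edge $(v,w)$ satisfies $v\prec_V w$.

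Finally I count. Since the graph is complete, every pair of vertices is joined by an edge, so $v_i$ has an admissible edge if and only if there is some $v_j$ with $v_i\prec_V v_j$. A direct computation from the definition of $\preceq_V$ shows that such a $v_j$ exists exactly when $i\le n-k$; equivalently, the vertices with no $\prec_V$-greater vertex are precisely the top $k$ indices $v_{n-k+1},\dots,v_n$. Hence $V_\text{crit}=\{v_{n-k+1},\dots,v_n\}$ has exactly $k$ elements, and by the reduction above the Morse partition has exactly $k$ clusters. The main obstacle is the second step, and specifically the verification of property~(a): one must check, clause by clause in the three-part definition of $\preceq_E$, that an admissible edge is never dominated by a non-admissible one regardless of how their distances compare, together with the uniqueness asserted in~(b). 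Everything else --- the reduction via Theorem~\ref{thm:morseflow} and the index count --- is routine.
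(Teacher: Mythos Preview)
Your proposal is correct and follows essentially the same approach as the paper: identify the critical vertices as exactly $\{v_{n-k+1},\dots,v_n\}$ by observing that these top $k$ vertices have no admissible edges (hence are fixed by the flow), while every other vertex has at least one admissible edge, whence the design of the \texttt{$k$-Morse} edge preorder guarantees a unique admissible maximum (hence the flow moves). The paper's proof is terser --- it compresses your properties~(a) and~(b) into the single clause ``there are admissible edges $(v_i,v_j)\in E_{v_i}$ for all $j>i+k$, so the maximum exists and it is unique'' --- but the logical structure is identical.
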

\begin{proof}
If $v_i \in X$ with $i>n-k$ then there are no vertices greater than $v_i$ with respect to $\preceq_V$ hence no admissible edges at $v$ and thus $\Phi(v_i)=v_i$ critical. On the other hand, $v_i$ with $i \le n-k$ cannot be critical, as there are admissible edges $(v_i,v_j) \in E_{v_i}$ for all $j > i+k$, so the maximum exists and it is unique. All in all, there are exactly $k$ critical vertices $v_n, v_{n-1}, \ldots, v_{n-k+1}$ and therefore exactly $k$ clusters. 
\end{proof}

\begin{theorem}\label{thm:kMorseAxioms}
\texttt{k-Morse} is Consistent and Scale-Invariant.
\end{theorem}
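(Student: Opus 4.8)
The plan is to treat the two axioms separately, leaning on Theorem~\ref{thm:kMorse} and Lemma~\ref{lem:refinement}. \textbf{Scale-Invariance} should be immediate and entirely parallel to the argument in Theorem~\ref{thm:SiRMorse}: the vertex preorder $\preceq_V$ does not involve $d$, and $d$ enters the edge preorder only through the comparisons $d(v,w)>d(v,t)$ and $d(v,w)=d(v,t)$. Both are preserved when $d$ is replaced by $\alpha d$ with $\alpha>0$, so $(v,w)\preceq_E(v,t)$ holds for $\alpha d$ exactly when it holds for $d$; hence the Morse flow, and so the partition, is unchanged.

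For \textbf{Consistency} the crucial structural fact is that, by Theorem~\ref{thm:kMorse}, criticality depends only on $\preceq_V$, so \emph{for every} distance the critical set is exactly $\{v_n,\ldots,v_{n-k+1}\}$. Thus $\mathcal{P}=F(d)$ and $\mathcal{P}'=F(d')$ have precisely $k$ clusters with the same roots, and by the remark following Lemma~\ref{lem:refinement} it suffices to prove that $\mathcal{P}'$ refines $\mathcal{P}$; equal cluster count then upgrades this to $\mathcal{P}=\mathcal{P}'$. By Lemma~\ref{lem:refinement} the refinement reduces to the single claim $\Phi'(v_i)\sim_\mathcal{P} v_i$ for all $i$. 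This is trivial for critical $v_i$, so I would fix a non-critical $v_i$ and set $w=\Phi(v_i)$, $s=\Phi'(v_i)$, both admissible edges, and argue by contradiction assuming $s\not\sim_\mathcal{P} v_i$.

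The heart is a comparison chain. Since $w=\Phi(v_i)$ lies in $v_i$'s cluster (flow paths stay within clusters) and $w$ minimises $d(v_i,\cdot)$ over admissible neighbours, the $\mathcal{P}$-transformation inequalities $d'(v_i,w)\le d(v_i,w)$ (intra-cluster contracts) and $d'(v_i,s)\ge d(v_i,s)$ (inter-cluster expands) give
\[
	d'(v_i,w)\ \le\ d(v_i,w)\ \le\ d(v_i,s)\ \le\ d'(v_i,s),
\]
the middle step because $w$ is $d$-minimal. Conversely, $s$ being the maximal admissible edge for $d'$ forces $(v_i,w)\preceq_E(v_i,s)$ under $d'$, which by the second and third clauses of the edge preorder yields $d'(v_i,w)\ge d'(v_i,s)$. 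Combining, the whole chain collapses to equalities, landing in the tie-breaking clause: $d'(v_i,w)=d'(v_i,s)$ with $w\preceq_V s$, and also $d(v_i,w)=d(v_i,s)$.

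The main obstacle is extracting a contradiction from exactly this degenerate case, where the (otherwise arbitrary) transformation has manufactured a distance tie between the intra-cluster edge $(v_i,w)$ and the inter-cluster edge $(v_i,s)$. Here I would return to the $d$-flow: since $d(v_i,w)=d(v_i,s)$, the comparison of these two edges under $d$ is decided purely by the third (vertex) clause of $\preceq_E$, and $w\preceq_V s$ forces $(v_i,w)\preceq_E(v_i,s)$ under $d$ as well. But $w=\Phi(v_i)$ is the \emph{unique} maximal admissible edge for $d$, so $(v_i,s)\preceq_E(v_i,w)$ too; hence $(v_i,s)$ is also maximal, contradicting uniqueness. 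This step is precisely where the design of the $k$-\texttt{Morse} edge preorder—whereby, if any admissible edge exists, the maximal admissible edge exists and is unique—is indispensable. With the contradiction established, $s\sim_\mathcal{P} v_i$, the refinement follows, and $\mathcal{P}=\mathcal{P}'$.
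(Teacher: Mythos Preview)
Your proof is correct and follows essentially the same approach as the paper: both reduce Consistency to Lemma~\ref{lem:refinement} via Theorem~\ref{thm:kMorse}, establish the same chain of inequalities (yours by contradiction using the inter-cluster hypothesis $s\not\sim_\mathcal{P} v_i$, the paper directly from the two maximalities), and resolve the equality case through the vertex tie-breaking clause of $\preceq_E$. The only difference is packaging---you argue by contradiction to $s=w$, the paper argues directly that either $d'(x,t)<d(x,t)$ forces $x\sim_\mathcal{P} t$ or equality forces $w=t$---but the substance is identical.
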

\begin{proof}
({\textbf{Scale-invariance}}) A distance transformation $d' = \alpha\cdot d$ for $\alpha>0$ does not affect the \texttt{$k$-Morse} vertex or edge preorder, hence we obtain the same partition.

\smallskip
({\textbf{Consistency}}) Let $d$ be a distance in $X$, $\mathcal{P}$ the partition given by $\texttt{$k$-Morse}$ on $(X,d)$, and $d'$ a $\mathcal{P}$-transformation of $d$, that is,
\begin{align}
 d(v,w)\ge d'(v,w), &\ \mbox{ if } v \sim_\mathcal{P} w, \label{eqn:1aa}\\
 d(v,w)\le d'(v,w), &\ \mbox{ otherwise}. \label{eqn:1bb}
\end{align}
Let $\Phi$ respectively $\Phi'$ be the Morse flow corresponding to $d$ respectively $d'$. The critical points depend on the vertex preorder alone, hence, as in the proof of Theorem \ref{thm:kMorse}, we have $\Phi(v_i)=v_i=\Phi'(v_i)$ for all $i > n-k$ and thus $\mathcal{P}$ and $\mathcal{P}'$ have the same number of clusters. Therefore, it suffices to show that $x \sim_\mathcal{P} \Phi'(x)$ for all $x\in X$, by Lemma \ref{lem:refinement}. 

Let $x \in X$. If $x$ is critical, $\Phi(x)=\Phi'(x)$ as they have the same critical points, so clearly $x \sim_\mathcal{P} \Phi(x)=\Phi'(x)$. If $x$ is not critical, let $w =\Phi(x)$ and $t=\Phi'(x)$. The maximality and the definition of $\preceq_E$ implies
$d(x,w) \le d(x,t) \text{ and } d'(x,t) \le d'(x,w)$.
Since $\Phi(x)=w$, they are in the same cluster, $x \sim_\mathcal{P} w$, and thus $d'(x,w) \le d(x,w)$, by Eq.~\eqref{eqn:1aa} above. All in all, 
\begin{gather}\label{eqn:longeq}
  d'(x,t)\leq d'(x,w)\leq d(x,w)\leq d(x,t).
\end{gather}
Now, if $d'(x,t) < d(x,t)$, they are necessarily in the same cluster, $x \sim_\mathcal{P} t$, by Eqs.~\eqref{eqn:1aa} and \eqref{eqn:1bb} above. The remaining case $d'(x,t)=d(x,t)$ implies equalities in Eq.~\eqref{eqn:longeq}, and, by the definition of the edge preorders and the maximality of $(x,w)$ with respect to $d$, we have $w=t$. In both cases, $x \sim_\mathcal{P} t = \Phi'(x)$. 
\end{proof}

Let $\delta >0$. The final instance of Morse clustering, called \texttt{$\delta$-Morse}, satisfies Consistency and Richness, and is given by the following choices of preorders.

\begin{itemize}
 \item $v_i \preceq_V v_j$ if $i \le j$
 \item $(v, w) \preceq_E (v,t)$ if
 \begin{description}
  \item $w = t$, or
  \item $d(v,t) < \min\{d(v,w), \delta\}$ and $v\preceq_V t$, or
  \item $d(v,w) = d(v,t) < \delta$ and $v\preceq_V w \preceq_V t$.
 \end{description}
\end{itemize}

With this preorder, only admissible edges with distance less than the threshold parameter $\delta$ are considered for the flow. Among those edges, we choose the one with minimal distance, using the vertex preorder to resolve ties. Note that, if there are admissible edges at distance less than $\delta$, the maximum admissible edge exists and it is unique. 

\begin{theorem}\label{thm:deltaMorse}
\texttt{$\delta-$Morse} satisfies Consistency and Richness.
\end{theorem}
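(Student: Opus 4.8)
The plan is to prove the two axioms separately, following the patterns already established for \texttt{SiR-Morse} (Theorem \ref{thm:SiRMorse}) and \texttt{$k$-Morse} (Theorem \ref{thm:kMorseAxioms}).

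\textbf{Richness.} Since the \texttt{$\delta$-Morse} vertex preorder is a total order, each cluster of a desired partition $V = V_1 \cup \ldots \cup V_k$ has a well-defined maximal vertex $v_i$. I would construct an explicit distance realising this partition by a ``star'' construction: set $d(s,t) = \delta/2$ whenever $s,t \in V_i$ and one of them equals $v_i$, and $d(s,t) = \delta$ for every other pair. Then at any non-maximal $v \in V_i$ the edge $(v,v_i)$ is admissible ($v \prec_V v_i$) and has distance $\delta/2 < \delta$, while all other edges at $v$ have distance exactly $\delta$ and are therefore excluded from the flow; hence $(v,v_i)$ is the unique maximal admissible edge and $\Phi(v)=v_i$. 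Each $v_i$ has all admissible edges crossing to other clusters at distance $\delta \not< \delta$, so it is critical. The Morse flow thus recovers $\mathcal{P}$. (This is the complete-graph specialisation of the spanning-tree argument used for Morse-Richness.)

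\textbf{Consistency.} Let $d$ be a distance, $\mathcal{P}=F(d)$, and $d'$ a $\mathcal{P}$-transformation of $d$, with associated flows $\Phi, \Phi'$. The first key step is to show that $\Phi$ and $\Phi'$ have \emph{exactly} the same critical vertices. A vertex $v$ is critical precisely when it has no admissible edge of distance $<\delta$. If $v$ is critical under $\Phi$, then by Theorem \ref{thm:morseflow}(iv) it is the maximal vertex of its cluster, so every admissible edge $(v,s)$ (i.e.\ $v \prec_V s$) satisfies $s \not\sim_\mathcal{P} v$; hence $d'(v,s) \ge d(v,s) \ge \delta$ and $v$ stays critical under $\Phi'$. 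Conversely, if $v$ is non-critical under $\Phi$ with flow edge $w=\Phi(v)$, then $v \sim_\mathcal{P} w$ and $d(v,w)<\delta$, so $d'(v,w) \le d(v,w) < \delta$ keeps $(v,w)$ an admissible short edge and $v$ non-critical under $\Phi'$. Thus $\mathcal{P}$ and $\mathcal{P}'$ have the same number of clusters.

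The second step is to verify $x \sim_\mathcal{P} \Phi'(x)$ for all $x$, so that Lemma \ref{lem:refinement} applies. This is immediate for critical $x$. For non-critical $x$, write $w=\Phi(x)$ (with $x \sim_\mathcal{P} w$) and suppose, for contradiction, $t = \Phi'(x)$ with $x \not\sim_\mathcal{P} t$; then $d(x,t) \le d'(x,t) < \delta$, so $(x,t)$ is an admissible short edge already competing under $d$, where $(x,w)$ wins strictly. By the edge preorder this forces either $d(x,w) < d(x,t)$, giving $d'(x,w) \le d(x,w) < d(x,t) \le d'(x,t)$, or $d(x,w)=d(x,t)<\delta$ with $t \prec_V w$, giving $d'(x,w) \le d'(x,t)$ with the tie broken toward $w$; in either case $(x,t)$ is not the maximal admissible edge under $d'$, contradicting $\Phi'(x)=t$. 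Hence $x \sim_\mathcal{P} \Phi'(x)$, so $\mathcal{P}'$ refines $\mathcal{P}$, and since the two partitions have equal size they are equal, proving Consistency.

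I expect the preservation of critical vertices to be the main obstacle: unlike \texttt{$k$-Morse}, where the critical set is fixed by the vertex preorder alone, here criticality depends on the threshold $\delta$, and the argument crucially relies on Theorem \ref{thm:morseflow}(iv) to guarantee that a critical vertex has no admissible same-cluster neighbour whose (shrinking) distance could dip below $\delta$ and spuriously create a flow edge.
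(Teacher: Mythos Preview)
Your proof is correct and uses the same ingredients as the paper's (the characterisation ``critical iff no admissible edge of distance $<\delta$'', Theorem \ref{thm:morseflow}(iv), and Lemma \ref{lem:refinement}), but the two arguments are organised differently.

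For Richness you use the star construction of Theorem \ref{thm:SiRMorse} (only edges to the cluster maximum get distance $\delta/2$), whereas the paper sets \emph{all} intra-cluster distances to $\delta/2$ and relies on the vertex tie-break to single out the edge to $x_i$; both work. For Consistency the order is reversed: you first prove directly, via Theorem \ref{thm:morseflow}(iv), that $\Phi$ and $\Phi'$ have identical critical sets, and then invoke Lemma \ref{lem:refinement}; the paper instead establishes the chain $d'(s,w)\le d'(s,v)\le d(s,v)\le d(s,w)$ for $s$ non-critical under \emph{both} flows, applies Lemma \ref{lem:refinement} to get that $\mathcal{P}'$ refines $\mathcal{P}$, and only afterwards shows the critical sets agree, using that refinement for one of the two directions. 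Your ordering is arguably cleaner, since Lemma \ref{lem:refinement} formally requires $x\sim_\mathcal{P}\Phi'(x)$ for \emph{all} $x$, including those critical for $\Phi$ but not for $\Phi'$---exactly the case the paper disposes of only after invoking the lemma.
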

\begin{proof}
(\textbf{Richness}) Consider an arbitrary partition $X=X_1\cup\ldots \cup X_k$ and define the distance function
\[
d(v,w)=
\begin{cases}
\frac{\delta}{2},\mbox{ if $v, w$ are in the same cluster, and}\\
\delta,\mbox{ otherwise,}
\end{cases}
\]
for $v \neq w$. 
Let $x_i$ be the largest vertex in $X_i$ with respect to $\preceq_V$ and $v \in X_i$ arbitrary. 
By the definition of $d$ and the edge preorder, we have that $(v,x_i)$ is the maximum admissible edge at $v$. Also, $x_i$ is critical: the maximum edge at $x_i$ is of the form $(x_i,w)$ for $w \in X_i$, hence not admissible or, if $|X_i|=1$, any edge in $E_{x_i}$ is maximal, hence unique (since $|X|\ge 3$). Therefore, $\delta-$Morse reproduces the partition $X_1\cup\ldots \cup X_k$ (in fact, each cluster is a directed star with root $x_i$).

\smallskip
(\textbf{Consistency}) Let $d$ be a distance in $X$, $\mathcal{P}$ the partition given by $\texttt{$\delta$-Morse}$ on $(X,d)$, and $d'$ a $\mathcal{P}$-transformation of $d$, that is,
\begin{align}
 d(v,w)\ge d'(v,w), &\mbox{ if } v \sim_\mathcal{P} w, \label{eqn:11}\\
 d(v,w)\le d'(v,w), &\mbox{ otherwise}. \label{eqn:12}
\end{align}
Let $\Phi$ respectively $\Phi'$ be the Morse flow corresponding to $d$ respectively $d'$. Let $s \in X$ arbitrary, $v = \Phi(s)$ and $w=\Phi'(s)$ with $v, w \neq s$. As in the proof of Theorem \ref{thm:kMorseAxioms}, we have 
$
  d'(s,w)\leq d'(s,v)\leq d(s,v)\leq d(s,w)
$.
Then either $d'(s,w) < d(s,w)$, and so $s \sim_\mathcal{P} w$ by Eq.~\eqref{eqn:11}, or $d'(s,w) = d(s,w)$, which implies, by the definition of edge preorder, $v = w$, and thus $s \sim_\mathcal{P} w = \Phi'(s)$ too. 
As $s$ was arbitrary, we conclude that $\mathcal{P}'$ is a refinement of $\mathcal{P}$, by Lemma \ref{lem:refinement}. To prove that they are equal, it suffices to show that they have the same critical points (i.e.~the same number of clusters), that is, $\Phi(v)=v$ if and only if $\Phi'(v)=v$.  

Suppose that $\Phi(v_i)=v_i$ and $\Phi'(v_i)=v_j$, $i \neq j$. Since the vertex preorder is strictly increasing along the flow, $v_i \prec_V v_j$, that is, $i < j$. By the definition of Morse clustering, $v_i \sim_\mathcal{P'} v_j$ hence $v_i \sim_\mathcal{P} v_j$, since $\mathcal{P'}$ is a refinement. However, this contradicts $v_i$ being maximal in its $\mathcal{P}$ cluster as $i < j$. 

Now suppose $\Phi'(v_i)=v_i$ and $\Phi(v_i)=v_j$, $i \neq j$. The edge from $v_i$ to $v_j$ is in the flow $\Phi$, so $d(v_i,v_j)<\delta$, however it is not in the flow $\Phi'$ so $d'(v_i,v_j)\ge \delta$. However, as $d'(v_i,v_j) \le d(v_i,v_j) < \delta$, we have that $v_i$ has at least one admissible edge. By the definition of $\preceq_E$, $v_i$ cannot be critical for $\Phi$, that is, a unique maximal edge that is admissible must exist. 

Finally, since $v_i \sim_\mathcal{P} v_j$ and $d'$ is a $\mathcal{P}$-transformation, we have
$
	d'(v_i,v_j) \le d(v_i,v_j),
$
and we arrive to a contradiction. 
\end{proof}

\end{document}